\newcommand{\cmark}{\ding{51}}
\newcommand{\xmark}{\ding{55}}
\appto\TPTnoteSettings{\footnotesize}
\newcommandx{\question}[2][1=]{\todo[linecolor=blue,
                                     backgroundcolor=blue!25,
                                     bordercolor=blue,#1]{#2}}
\newtheorem{theorem}{Theorem}[section]
\newtheorem*{theorem-non}{Theorem}
\newtheorem{lemma}[theorem]{Lemma}
\newtheorem*{lemma-non}{Lemma}
\newtheorem{corollary}[theorem]{Corollary}
\newtheorem{proposition}[theorem]{Proposition}
\newtheorem*{proposition-non}{Proposition}
\newtheorem{definition}[theorem]{Definition}
\newtheorem{conjecture}[theorem]{Conjecture}
\newcommand{\dtv}{d_{\mathrm{TV}}}
\newcommand{\Tr}{\mathrm{Tr}}
\newcommand{\acov}{\mathcal{C}_\alpha}
\newcommand{\opt}{\mathrm{OPT}}
\newcommand{\E}{\mathbb{E}}
\newcommand{\var}{\mathrm{Var}}
\renewcommand{\P}{\mathbb{P}}
\newcommand{\1}{\mathbbm{1}}
\newcommand{\maxv}{\max_{v\colon \|v\|=1}}
\newcommand{\xit}{x_{iT^*}}
\newcommand{\hmut}{\mu_{T^*}}
\newcommand{\poly}{\text{poly}}
\renewcommand{\epsilon}{\varepsilon}
\newcommand{\cC}{\mathcal{C}}
\newcommand{\cH}{\mathcal{H}}
\newcommand{\cN}{\mathcal{N}}
\newcommand{\cP}{\mathcal{P}}
\newcommand{\cQ}{\mathcal{Q}}
\newcommand{\R}{\mathbb{R}}
\newcommand{\cX}{\mathcal{X}}
\newcommand{\cO}{\mathcal{O}}
\newcommand{\N}{\mathbb{N}}
\newcommand{\e}{\varepsilon}
\newcommand{\pE}{\tilde{\mathbb{E}}}
\newcommand{\Paren}[1]{\left ( #1 \right )}
\newcommand{\iprod}[1]{\langle #1 \rangle}
\newcommand{\mper}{\, .}
\newcommand{\mcom}{\, ,}
\newcommand{\proves}{\vdash}
\title{Privacy Induces Robustness:
 Information-Computation Gaps and Sparse Mean Estimation}
\author{%
  Kristian Georgiev \thanks{Authors are in alphabetical order.} \\
  MIT EECS\\
  Cambridge, MA 02139\\
  \texttt{krisgrg@mit.edu} \\
  \and
  Samuel B. Hopkins\\
  MIT EECS\\
  Cambridge, MA 02139\\
  \texttt{samhop@mit.edu} \\
}
\newcommand\Author{Kristian Georgiev, Samuel B. Hopkins}
\let\Title\@title
\begin{document}
\maketitle

\begin{abstract}
  We establish a simple connection between robust and differentially-private
algorithms: private mechanisms \emph{which perform well with very high
probability} are automatically robust in the sense that they retain accuracy
even if a constant fraction of the samples they receive are adversarially
corrupted. Since optimal mechanisms typically 
achieve
these high success probabilities, our results imply that optimal private
mechanisms for many basic statistics problems are robust.  

We investigate the consequences of this observation for both algorithms and
computational complexity across different statistical problems. 
Assuming the Brennan-Bresler secret-leakage planted clique conjecture, we
demonstrate a fundamental tradeoff between computational efficiency, privacy 
leakage, and success probability for sparse mean estimation. Private algorithms
which match this tradeoff are not yet known -- we achieve that (up to
polylogarithmic factors) in a polynomially-large range of parameters via the
Sum-of-Squares method.

To establish an information-computation gap for private sparse mean estimation, we
also design new (exponential-time) mechanisms using fewer samples than efficient
algorithms must use. 
Finally, we give evidence for privacy-induced information-computation gaps for
several other statistics and learning problems, including PAC learning parity
functions and estimation of the mean of a multivariate Gaussian. 

\end{abstract}
\newpage
\tableofcontents

\section{Introduction}\label{sec:intro}

Avoiding leakage of sensitive data and robustness to data corruption or model
misspecification are often key goals for designers of statistical
estimators. Both these properties admit mathematical formalizations, and a great
deal of recent work in (algorithmic) statistics has gone into designing and
analyzing algorithms that satisfy them.

\emph{Differential privacy} (DP) is the gold-standard formal definition of
privacy for algorithms processing sensitive data~\cite{dwork2006calibrating}.
DP requires that the distribution of outputs of an algorithm (or ``mechanism'')
$M$ is insensitive to exchanging a small number of individuals in its input
dataset. It offers such strong guarantees against privacy leakage that in
addition to its ongoing adoption in industry, the US Census Bureau employs DP to
satisfy its legal mandate to protect privacy~\cite{abowd2022geographic}.

The \emph{$\eta$-contamination model} is a stringent formalization of robustness against model misspecification. It generalizes the
classical model of i.i.d. samples: a dataset is $\eta$-corrupted for some $\eta
> 0$ if it is first drawn i.i.d, but then an $\eta$-fraction of samples have
been arbitrarily corrupted by a malicious adversary (who may look at the whole
dataset)~\cite{huber1965robust, tukey1975mathematics, huber2011robust}. An
$\eta$-\emph{robust} algorithm is one which maintains guarantees of accuracy
when given $\eta$-corrupted samples.

Both robustness and privacy demand that the output of some statistical method
not ``change too much'' when one or a few input samples are modified
arbitrarily.  This conceptual similarity has not gone unnoticed:
\cite{dwork2009differential} observe that ``robust statistical estimators
present an excellent starting point for differentially private estimators,'' and
recent works have even made good on this idea in high-dimensional settings,
taking inspiration from robust statistics to design private
mechanisms~\cite{liu2021robust, hayes2022learning}.  However, a general account
of which private algorithms can be made robust, or vice versa, remains an open
problem.
Here we take a small step by tackling:
\begin{center}
    \vspace{-0.2em}
\emph{Question 1: When are private algorithms (also) robust?}
\end{center}
\vspace{-0.2em}
While common wisdom holds that privacy and robustness are not formally comparable, we give a meta-theorem~\cref{thm:meta} which quantifies the degree of robustness that
private mechanisms exhibit: at a high level, mechanisms satisfying quantitatively-strong versions of DP are automatically robust.

This connection between robustness and privacy gives us the tools to investigate a second basic question, concerning tradeoffs among computational resources, privacy, and statistical accuracy.
\emph{Accuracy}-privacy tradeoffs appear in even the most elementary statistical
settings. For instance, estimating the mean of a $d$-dimensional Gaussian to
$\ell_2$ error $\alpha$ requires $\Theta(d/ {\alpha^2})$ samples non-privately,
but subject to $\e$-DP\footnote{See \cref{sec:auto-robust} for formal definitions.}
requires (roughly) $\Theta({d}/{\alpha^2} + {d}/{(\alpha
\e)})$ samples \cite{bun2019private}. While important in their own right, these two-way
tradeoffs are not the whole story.  In numerous cases, even including the
Gaussian mean estimation problem, \emph{computationally efficient algorithms
which achieve the optimal privacy-accuracy tradeoffs are not known}. This
brings up the question:
\begin{center}
    \vspace{-0.2em}
    \emph{Question 2: Does requiring differential privacy introduce computational barriers in statistics?}
\end{center}
\vspace{-0.2em}

While computational barriers to efficient private algorithms are known in some
settings \cite{gupta2013privately, ullman2016answering, bun2020computational},
these apply only to algorithms with \emph{worst-case} accuracy guarantees -- in
statistical settings we are asking only for \emph{average-case} accuracy
guarantees (although privacy should still hold with respect to all possible
datasets).

An archetypal problem for which computational barriers arise when
estimators are required to satisfy criteria beyond accuracy is that of
\emph{sparse mean estimation}.
The goal there is to estimate a $k$-sparse vector $\mu \in \R^d$ using
independent samples from a distribution with mean $\mu$.  With no requirements on
privacy or robustness, this can be accomplished with $O(k \log d)$ samples in
polynomial time via simple thresholding-based estimators.
In exponential time, it is possible to retain $O(k \log d)$ sample complexity
and satisfy privacy and robustness (as we show in this paper), but in polynomial
time $\Omega(k^2)$ samples are required (under a variant of the planted clique
conjecture) just to satisfy robustness \cite{brennan2020reducibility}.  Given
the outlined connection between robustness and privacy, the existence of an
information-computation gap for \emph{robust} sparse mean estimation naturally
leads to:
\begin{center}
    \vspace{-0.2em}
    \emph{Question 3: How many samples do poly-time private algorithms for sparse mean estimation require?}
\end{center}
\vspace{-0.2em}

\subsection{Our Contributions}
\label{sec:contributions}
We make three main contributions:
(a) a meta-theorem characterizing robustness of private mechanisms;
(b) a case study of sparse mean estimation, including a computational lower
bound arising from robustness, and a new Sum-of-Squares-based algorithm whose
sample complexity matches that lower bound in parameter regimes where no such
algorithms were previously known;
and (c) a collection of computational and information-theoretic lower bounds for
private mechanisms, inherited from lower bounds for robust algorithms.

\begin{table}[t!]
    \begin{threeparttable}[b]
    \caption{\textbf{Algorithms for sparse mean estimation.}
    In the input column, $\cN$ signifies that the algorithm takes i.i.d. samples
    from $\cN(\mu, I)$, and $\Sigma\preceq I$ --- samples from a
    distribution with bounded covariance.  The sample complexity column hides
    polylogarithmic factors in the ambient dimension $d$ and a priori mean bound
    (in $\ell_2$) $R$. In the auto-robust column we indicate whether our
    meta-theorem~(\cref{thm:meta}) implies that the algorithm is robust to
        corruptions of an $\eta = 1/\poly \log(d,R)$ fraction of the samples.}
    \label{table:algorithm-results}
    \centering
    \begin{tabular}{lllcc}
      \toprule
      Algorithm & Input & Runtime & Sample Complexity & Auto-robust \\
      \midrule
      \texttt{Hypothesis Sel.}~\cite{bun2019private}& $\cN$ & exp &
            $\frac{k + \log(1/\beta)}{\alpha\epsilon} +
            \frac{k + \log(1/\beta)}{\alpha^2}$ & \cmark \\
      \texttt{Subset Sel.}~[\cref{thm:sparse-exp-bcov}] & $\Sigma \preceq I$ & exp &
            $\frac{k + \log(1/\beta)}{\alpha^2\epsilon}$ & \cmark \\
      \texttt{SoS}~[\cref{thm:sparse-alg}] & $\cN$ & poly &
            $\frac{k^2 + \log(1/\beta)}{\alpha^2\epsilon}$ & \cmark \\
      \texttt{Threshold}~[\cref{thm:sparse-alg-peeling}] & $\cN$ & poly &
            $\frac{k^2 \log(1/\beta)}{\alpha^2\epsilon}$ & \xmark \\
      \texttt{Peeling}~\cite{cai2021cost}~\tnote{1} & $\cN$ & poly &
            $R\left(\frac{k^{1.5} \log(1/\beta)}{\alpha\epsilon} +
            \frac{k^{1.5}\log(1/\beta)}{\alpha^2}\right)$ & \xmark \\
      \bottomrule
    \end{tabular}
    \begin{tablenotes}
      \item [1]
      \texttt{Peeling} is stated only for $(\epsilon,\delta)$-DP
      in~\cite{cai2021cost}. We use a (mildly) modified $\epsilon$-DP version,
      which we formally state in~\cref{sec:restate-cost-of-privacy}.
    \end{tablenotes}
    \end{threeparttable}
  \end{table}

\paragraph{Robustness of Optimal Private Mechanisms}
Our first contribution is a simple but useful observation: \emph{mechanisms with
strong \textbf{group privacy} guarantees are automatically robust!}
By ``strong group privacy,'' we mean guarantees strong enough to retain
privacy and accuracy when a constant fraction of individuals in the dataset are
exchanged with others.
We capture this in \cref{thm:meta}.

In spite of its simplicity, robustness of strongly-group-private mechanisms has
significant consequences for simultaneously robust and private mechanisms, a
topic of much recent interest \cite{kothari2021private,
liu2021differential, liu2021robust, usynin2022collaborative, chhor2022robust},
because mechanisms with optimal privacy-accuracy tradeoffs often automatically
satisfy strong group privacy.

The privacy guarantees needed for automatic robustness rely on
quantitatively-strong forms of differential privacy: either so-called
\emph{pure} DP, or \emph{approximate} DP, but with the additive error parameter
$\delta$ taken exponentially small, and high success probability.
(By contrast, typical convention in the privacy literature is to take $\delta$
only polynomially small in other parameters.)
For this reason, we focus primarily on mechanisms satisfying pure DP and
achieving high success probability.

\paragraph{Sparse Mean Estimation}
Recall that in sparse mean estimation the goal is to estimate a $k$-sparse
vector $\mu \in \R^d$ to $\ell_2$-error $\alpha$, succeeding with probability
$1-\beta$, using i.i.d. samples from a distribution with mean $\mu$.
We contribute (a) new exponential-time $\e$-DP mechanisms using $O(k \log d)$
samples, (b) evidence that poly-time DP algorithms with high success
probabilities require $\tilde{\Omega}(k^2)$ samples, and (c) new poly-time
algorithms using $\tilde{O}(k^2)$ samples (in certain parameter regimes).
One of these algorithms, using the SoS exponential mechanism of
\cite{hopkins2021efficient}, is our main technical contribution.

\emph{Information-theoretic bounds:}
First, as a baseline, we study sparse mean estimation without worrying about
running time.
We show that even subject to both privacy and robustness, $O(k \log d)$ samples
suffice, with sample complexities differing in their dependence on $\alpha$ and
$\epsilon$ on between the cases that the underlying distribution is assumed to
be Gaussian versus allowing for heavy-tailed distributions (assuming only
bounded covariance), as in the case of non-sparse mean estimation.
See the estimators \texttt{Hypothesis Sel} and \texttt{Subset Sel}
in~\cref{table:algorithm-results}, and ~\cref{sec:sparse-inf-theory} for formal statements.

\emph{Computational lower bound:}
Next, we give evidence for a privacy-samples-success probability tradeoff for
efficient private algorithms: a private algorithm for sparse mean estimation
with high success probability will satisfy strong group privacy, and hence
robustness, but efficient and robust algorithms for sparse mean estimation
require $\Omega(k^2)$ samples (assuming the planted clique conjecture).
\begin{corollary}[of \cref{thm:meta} and \cite{brennan2020reducibility}, Theorem
3.1]\label{thm:sparse-lb}
Assume the \emph{secret-leakage planted clique} conjecture
\cite{brennan2020reducibility}.
For polynomially-related $n,k,$ and $d$, with $k = o(\sqrt{d})$, $\beta \in
(0,1)$, and $\alpha, \e > 0$, assume $\beta \leq 2^{-\epsilon \sqrt{n}}$.
Let $m(\beta, n, \epsilon)$ be the greatest value
less than ${\log(1/\beta) }/{(\epsilon n)}$ in the set $\{n^{-o(1)} \} \cup
\{ n^{-1/(2t)} \, : \, t \in \N, t \geq 1 \}$.
Every polynomial-time, $\e$-DP algorithm which, for any $k$-sparse $\mu \in
\R^d$ with $\|\mu\| \leq \poly(d)$ can take $n$ samples from $\cN(\mu,I)$ and
return $\hat{\mu}$ such that $\|\hat{\mu} - \mu\| \leq \alpha$ with
probability $1-\beta$ requires $n \geq \tfrac{k^2
m^2}{\alpha^4 \cdot \poly \log(d,\tfrac{1}{\epsilon},\tfrac{1}{\alpha})}$.
\end{corollary}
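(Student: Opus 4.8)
The plan is to argue by contraposition, composing the meta-theorem with the planted-clique lower bound of \cite{brennan2020reducibility}. Suppose $M$ is a polynomial-time $\e$-DP mechanism that, on $n$ i.i.d.\ samples from $\cN(\mu, I)$ with $\mu$ any $k$-sparse vector with $\|\mu\| \le \poly(d)$, outputs $\hat\mu$ with $\|\hat\mu - \mu\| \le \alpha$ with probability at least $1-\beta$, where $\beta \le 2^{-\e\sqrt n}$. I want to conclude $n \ge k^2 m^2 / (\alpha^4 \cdot \poly\log(d, 1/\e, 1/\alpha))$.

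\emph{Step 1: privacy $\Rightarrow$ robustness.} Set the corruption fraction $\eta := m(\beta, n, \e)$. The hypothesis $\beta \le 2^{-\e\sqrt n}$ is precisely what makes $\log(1/\beta)/(\e n)$ large enough that $\{n^{-o(1)}\} \cup \{n^{-1/(2t)} : t \ge 1\}$ contains an element below it, so $\eta$ is well defined, nontrivial ($\eta \ge n^{-1/2}$), and satisfies $\eta \le \log(1/\beta)/(\e n)$. Now invoke \cref{thm:meta} with this $\eta$: group privacy of the $\e$-DP mechanism $M$ implies that its probability of erring on an $\eta$-corruption of an i.i.d.\ sample from $\cN(\mu,I)$ is at most (roughly) $e^{\e\eta n}\beta$, which by the choice of $\eta$ is bounded away from $1$; so, after the routine bookkeeping in the proof of \cref{thm:meta}, $M$ yields a polynomial-time algorithm that, given an $\eta$-corrupted set of $n$ samples from $\cN(\mu, I)$ with $\mu$ any $k$-sparse vector with $\|\mu\| \le \poly(d)$, returns $\hat\mu$ with $\|\hat\mu - \mu\| \le \alpha$ with probability at least, say, $2/3$.

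\emph{Step 2: invoke the robust lower bound.} By \cite{brennan2020reducibility}, Theorem~3.1 --- which, assuming the secret-leakage planted clique conjecture, holds for polynomially-related $n, k, d$ with $k = o(\sqrt d)$ and exactly for corruption fractions $\eta$ in the family $\{n^{-o(1)}\} \cup \{n^{-1/(2t)}\}$ that the reduction produces --- every polynomial-time $\eta$-robust algorithm estimating a $k$-sparse Gaussian mean to $\ell_2$-error $\alpha$ at constant confidence requires $n \ge k^2\eta^2 / (\alpha^4 \cdot \poly\log(d, 1/\e, 1/\alpha))$. Applying this to the algorithm from Step 1 with $\eta = m$ gives $n \ge k^2 m^2 / (\alpha^4 \cdot \poly\log(d, 1/\e, 1/\alpha))$, which is the claim.

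The only delicate point --- in what is otherwise a two-line composition --- is the quantitative alignment of corruption rates between the two steps. \cref{thm:meta} turns $\e$-DP plus failure probability $\beta$ into robustness at a corruption fraction of order $\log(1/\beta)/(\e n)$, but with implicit constants (from the group-privacy blowup, any Markov-type step localizing the $1-\beta$ guarantee to a typical clean dataset, and the choice of logarithm base), whereas \cite{brennan2020reducibility} is available only at the discrete rates $n^{-1/(2t)}$ and $n^{-o(1)}$ --- which is exactly why $m$ is defined as the largest such rate below $\log(1/\beta)/(\e n)$. One must therefore verify that the multiplicative loss from this rounding, together with those implicit constants, is at most $\poly\log(d, 1/\e, 1/\alpha)$ and hence disappears into the denominator; this is also where the hypothesis $\beta \le 2^{-\e\sqrt n}$ earns its keep, as it is what guarantees that a valid such rate $m$ exists at all. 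The remaining hypotheses ($n, k, d$ polynomially related, $k = o(\sqrt d)$, and $\|\mu\|\le\poly(d)$ matching the bounded-norm planted signal) are inherited verbatim from \cite{brennan2020reducibility}.
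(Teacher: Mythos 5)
Your proposal is correct and follows essentially the same route as the paper: \cref{thm:meta} turns the $\e$-DP, $(1-\beta)$-success mechanism into an $\eta$-robust estimator, and Theorem 3.1 of \cite{brennan2020reducibility} then forces $n \gtrsim k^2\eta^2/\alpha^4$ at a corruption rate from the admissible set, which is exactly why $m$ is defined by rounding $\log(1/\beta)/(\e n)$ down into that set. The only difference is bookkeeping at the delicate point you yourself flag: you apply \cref{thm:meta} directly at $\eta=m$, where the residual success probability is $\beta^{1-\e m n/\log(1/\beta)}$ and so "bounded away from $1$" needs a little constant-factor slack when $m$ sits just below $\log(1/\beta)/(\e n)$, whereas the paper sidesteps this by adjusting $\beta,\e$ to artificially weaken the guarantee so that $m(\beta,n,\e)=\log(1/\beta)/(\e n)$ exactly before invoking robustness; also note that since the stated bound is already phrased in terms of $m$, the polynomial rounding loss never needs to be absorbed into the $\poly\log$ denominator, contrary to the framing in your last paragraph.
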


Pretending $m = \log(1/\beta) / (\epsilon n)$ and ignoring logarithmic factors in $d,\frac 1 \epsilon, \frac 1 \alpha$, the lower bound says $n \gtrsim \Paren{ \frac{k^2}{\alpha^2} \cdot \Paren {\frac{\log(1/\beta)}{\alpha \epsilon}}^2 }^{1/3}$
samples are required by efficient $\e$-DP algorithms for sparse mean estimation.
This is the geometric mean of three terms: $\tfrac{k^2}{\alpha^2}$ and $\tfrac{\log(1/\beta)}{\alpha \e}$ (twice).
We conjecture that the max of these is actually a lower bound:
$n \gtrsim \frac{k^2}{\alpha^2} + \frac{\log(1/\beta)}{\alpha \epsilon}$.

Even if this stronger lower bound were true, \emph{existing efficient algorithms
for sparse mean estimation would not match it}. The state-of-the-art for sparse
$\e$-DP mean estimation using techniques in the literature is a modification (to
achieve pure DP) of an algorithm by \cite{cai2021cost}, whose sample complexity
scales with $k^{1.5} \log(1/\beta)$ (see \cref{sec:restate-cost-of-privacy}).

\emph{Efficient algorithms -- SoS: }
Are there polynomial-time algorithms which match the tradeoff from \cref{thm:sparse-lb}, or the stronger conjectured one above?
Our main algorithmic contribution is a new algorithm for sparse mean estimation
which matches the tradeoff of \cref{thm:sparse-lb} (up to polylog factors) under the conditions:
 (1) $\alpha,\epsilon \geq 1/\poly \log (d)$, (2) $k^2 \approx \log(1/\beta)$, and (3) $k \geq d^{0.4}$.
Of these conditions, (2) could be removed if the stronger conjectural lower bound above were true, while we believe that (1) and (3) are shortcomings of our algorithm, and might be removable.
We are not aware of any previous efficient private algorithm which matches the above conjectured tradeoff for any simultaneously super-constant $k$ and $\log(1/\beta)$.

\begin{theorem}\label{thm:sparse-alg}
There exists $C>0$ such that for every $\e, R > 0$, $\alpha,\beta \in (0,1)$,
and large-enough $d,k \in \N$ such that $k\geq d^{0.4}$, there is a
polynomial-time $\e$-DP algorithm \texttt{SoS} with the following guarantees.
For every $k$-sparse $\mu \in \R^d$ with $\|\mu\| \leq R$, given
$\eta$-corrupted samples $X_1,\ldots,X_n \sim \cN(\mu,I)$, with probability at
least $1-\beta$, the algorithm outputs $\hat\mu \in \R^d$ such that $\|\mu -
\hat\mu\| \leq \alpha +O\left(\sqrt{(\log (Rd))^C\eta}\right)$, so long as $n
\gg (\log (Rd))^{C} \cdot \frac{ k^2 + \log(1/\beta) + \log \log R
}{\alpha^2\e}.$
\end{theorem}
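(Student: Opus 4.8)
\emph{Proof approach.} The plan is to instantiate the sum-of-squares exponential mechanism of \cite{hopkins2021efficient} with a score function tailored to sparse mean estimation. Fix an accuracy target $\alpha_0 = c\alpha$ for a small constant $c>0$, a polylogarithmic parameter $B = (\log(Rd))^{O(1)}$, and a fine net $\mathcal{N}_0$ of the $k$-sparse vectors in $\{\nu\in\R^d : \|\nu\|\le R\}$ at scale $\alpha_0$, so that $|\mathcal{N}_0| = (dR/\alpha)^{O(k)}$. The mechanism outputs $\hat\mu\in\mathcal{N}_0$ with probability proportional to $\exp\!\big(\tfrac{\epsilon}{2}\,\mathrm{score}(\hat\mu)\big)$, where $\mathrm{score}(\nu) = -t$ for the least $t\ge 0$ admitting a fractional subset $w\in[0,1]^n$ with $\sum_i w_i \ge n-t$ together with a constant-degree SoS proof --- in variables $v$ encoding a unit vector carrying an $\ell_1$-relaxed $k$-sparsity constraint --- that $\langle v,\hat\mu_w-\nu\rangle^2 \le \alpha_0^2$ and $\langle v,(\widehat\Sigma_w-I)v\rangle \le B$, where $\hat\mu_w$ and $\widehat\Sigma_w$ denote the $w$-reweighted empirical mean and second moment of the samples. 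Inserting or deleting a single sample changes the optimal $t$ by $O(1)$, and this Lipschitzness has itself a constant-degree SoS proof; so $\epsilon$-differential privacy and polynomial-time approximate samplability of the mechanism both follow from the machinery of \cite{hopkins2021efficient}, up to polylogarithmic losses folded into the $(\log(Rd))^C$ overhead.

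The main work --- and the step I expect to be the real obstacle --- is a \emph{certifiable concentration} lemma: if $X_1,\dots,X_n$ are $\eta$-corrupted samples from $\cN(\mu,I)$ and $n \gg (\log(Rd))^{C}\cdot\tfrac{k^2+\log(1/\beta)+\log\log R}{\alpha^2\epsilon}$, then with probability $1-\beta$ the uncorrupted points --- taking $w$ to be their $0/1$ indicator and $t=\eta n$ --- admit a constant-degree SoS proof of the two goodness inequalities for $\nu=\mu$. The extra factor of $k$ over the information-theoretic rate of \texttt{Hypothesis Sel.} and \texttt{Subset Sel.} in \cref{table:algorithm-results} is exactly the price of the $\ell_1$ relaxation of sparsity: inside the proof system one bounds a sparse quadratic form by $\|v\|_1^2\cdot\|\widehat\Sigma_w - I\|_{\max} \le k\cdot\|\widehat\Sigma_w - I\|_{\max}$ (entrywise $\ell_\infty$ norm), and the maximum-entry fluctuation of a reweighted empirical second moment of $n$ samples from $\cN(\mu,I)$ concentrates at rate $\sqrt{\log d/n}$ up to the $\eta$-corruption, so $n \gg k^2(\log(Rd))^{O(1)}$ certifies $\langle v,(\widehat\Sigma_w-I)v\rangle\le B$; the mean inequality is certified by the same token, the $\log(1/\beta)$ comes from the tail of this concentration event together with a union bound over $\mathcal{N}_0$, and the $\log\log R$ from a preliminary coarse localization that effectively replaces the a priori radius $R$ by $\poly(d)$. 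It is in pushing this lemma through that we use the hypothesis $k\ge d^{0.4}$, which we believe is an artifact of the analysis rather than intrinsic.

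Granting the lemma, utility is the standard exponential-mechanism analysis combined with the SoS robust-mean argument carried out inside the proof system. The lemma gives $\mathrm{score}(\mu)\ge -\eta n$. Conversely, if $\nu\in\mathcal{N}_0$ has $\mathrm{score}(\nu)\ge -\gamma n$, a witnessing $w$ keeps $(1-\gamma)n$ samples; intersecting its support with the uncorrupted set and combining the certified mean and covariance inequalities yields $\|\nu-\mu\| \le 2\alpha_0 + O\!\big(\sqrt{B(\gamma+\eta)}\big)$, equivalently $\mathrm{score}(\nu) \le -\Omega\!\big(n(\|\nu-\mu\|-2\alpha_0)^2/B\big) + \eta n$. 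Substituting these estimates into the output distribution of the exponential mechanism and summing over $\mathcal{N}_0$, a routine calculation shows that, with probability $1-\beta$, the output $\hat\mu$ satisfies $\|\hat\mu-\mu\| \le \alpha + O\!\big(\sqrt{(\log(Rd))^{C}\eta}\big)$ as soon as $n \gg B\cdot\tfrac{k\log(dR/\alpha)+\log(1/\beta)}{\alpha^2\epsilon}$, which is already implied by the sample bound of the certifiability lemma. Finally, $\eta$-robustness is built directly into the deletion budget $t$ in the score, so no separate appeal to \cref{thm:meta} is needed --- indeed, \cref{thm:meta} independently predicts the robustness of this high-probability pure-DP mechanism.
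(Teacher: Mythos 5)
There is a genuine gap, and it sits exactly at the heart of the theorem: the polynomial-time claim. Your mechanism is an exponential mechanism over a discrete net $\mathcal{N}_0$ of $k$-sparse candidates of size $(dR/\alpha)^{O(k)}$ with $k \ge d^{0.4}$, i.e.\ a super-polynomial candidate set, and you assert that privacy \emph{and} efficient samplability ``follow from the machinery of \cite{hopkins2021efficient}.'' That machinery does not do this: the SoS exponential mechanism gives a polynomial-time sampler only when the output ranges over a \emph{convex} body (via log-concave sampling), with the score defined through pseudoexpectations; its whole purpose is to replace a discrete candidate set by a convex one. Once you attempt that replacement for sparse vectors you run into precisely the obstruction the paper has to fight: the natural convex surrogate, the radius-$\sqrt{k}$ $\ell_1$ ball, places only an exponentially-small-in-$d$ fraction of its volume near any fixed good direction, so the utility analysis of the exponential mechanism collapses. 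The paper's fix is to ``fatten'' the $\ell_1$ ball (\cref{lem:l1-fattening}), and this volume-ratio argument --- not the certifiable-concentration step where you locate it --- is where the hypothesis $k \ge d^{0.4}$ is actually used (it is needed so that $e^{-O(d/\sqrt{k})}$ is absorbed into $d^{-O(k^2)}$). A further structural issue: a single global exponential mechanism over mean candidates must cope with the unknown scale $\|\mu-\nu\|$ and the a priori radius $R$; the paper instead proves \cref{thm:sparse-alg} via the iterative algorithm of \cref{thm:sos_sparse_mean_algo}, which alternates a private SoS gradient step (\cref{lem:gradient}), a private distance estimate obtained by binary search (\cref{lem:estimating-distance}), sparsification of iterates, and a halting test; the $\log\log R$ and the extra $\log R$ factor in the sample complexity come from union-bounding over the $O(\log R)$ iterations, not from a ``coarse localization'' folded into one mechanism.

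On the parts that do line up: your certifiable-concentration lemma, with the $\|v\|_1^2\cdot\|\cdot\|_{\max}$ bound giving the $k^2$ rate, is essentially the paper's \cref{lem:covariance-subset} and is sound in spirit, and your remark that robustness can be wired into the deletion budget is reasonable --- though the paper takes the opposite route, proving the clean-sample statement and then deriving the $O(\sqrt{(\log Rd)^C\eta})$ corruption term of \cref{thm:sparse-alg} from \cref{thm:meta}. But as written, the proposal describes an information-theoretically fine but exponential-time mechanism (close in flavor to the paper's \cref{prop:sparse-exp-gaussian}/\cref{thm:sparse-exp-bcov}), and the missing ingredient --- how to make the selection step run in polynomial time, which forces the convexification, the fattening, the $k\ge d^{0.4}$ condition, and the iterative scheme --- is exactly the content of the theorem.
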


Our algorithm employs the Sum-of-Squares exponential mechanism invented by
\cite{hopkins2021efficient} for a private mean estimation algorithm, but
adapting this approach to the sparse setting requires overcoming several
technical roadblocks (see~\cref{sec:sparse-mean} and
\cref{sec:strongly-private-sparse}).
Our approach hits a technical obstacle related to the volume of the
$d$-dimensional $\ell_1$ ball when $k \ll d^{0.4}$, and we leave as
an open problem to match or approach the tradeoff in \cref{thm:sparse-lb} for a
wider range of parameters.

\emph{Linear-time coordinate selection:}
Finally, what can be accomplished with a ``truly efficient'' algorithm -- one which does not require solving large semidefinite programs as in the SoS exponential mechanism?
The state-of-the-art private algorithm for sparse mean estimation, of \cite{cai2021cost}, is a simple iterative coordinate-selection procedure.
With a minor modification to achieve a pure DP guarantee, that algorithm uses $O_{R}(k^{1.5} \log d)$ to estimate $k$-sparse mean vectors $\mu \in \R^d$ with $\|\mu\| \leq R$, with probability $1-\beta$.
However, the $O_R(\cdot)$ hides a \emph{linear}, rather than logarithmic, dependence on $R$, which is very costly even for moderately-large values of $R$!

While it is now well understood how to obtain logarithmic-in-$R$ sample complexity for non-sparse mean estimation, standard approaches introduce a linear dependence on ambient dimension $d$.
We improve over the algorithm of \cite{cai2021cost} while maintaining linear running time by designing a simple thresholding procedure for estimating the support of $\mu$ requiring a number of samples which is \emph{independent of $R$}.
Once the support is known, the ambient dimension of the problem can be reduced
from $d$ to $k$, and a off-the-shelf private non-sparse mean estimation
algorithm can be run. A formal statement is given
in~\cref{sec:experimental-details}.
We demonstrate with experiments on synthetic data (\cref{sec:sparse-mean}) that
the coordinate-selection procedure \texttt{Threshold}, is substantially more
accurate than that used by prior state of the art~\cite{cai2021cost}.

By contrast to \texttt{SoS}, the linear-time algorithm \texttt{Threshold} has
sample complexity that scales with $k^2 \log(1/\beta)$ rather than $k^2 +
\log(1/\beta)$. While this difference might appear minor at first, it has
significant consequences: the linear-time algorithm cannot match the
computational lower bound in \cref{thm:sparse-lb}, and it does not have
strong-enough privacy guarantees to be robust via \cref{thm:meta}. Note that our
results do not preclude other tradeoffs between $k$ and $\log(1/\beta)$. For
instance, we leave it as an exciting open problem to design an efficient
algorithm with sample complexity scaling with $k\log(1/\beta)$.

\paragraph{Information-Computation Gaps in Private Statistics}
A problem in private statistics has an \emph{information-computation} gap if the
accuracy-privacy tradeoffs achieved by optimal (exponential-time) mechanisms for
that problem cannot be achieved by polynomial-time algorithms.  Our connection
between privacy and robustness can be used to give evidence for several
information-computation gaps in private statistics beyond sparse mean estimation.

\emph{Gaussian mean estimation} (\cref{cor:gaussian-lb}): Gaussian mean
estimation, where the goal is to estimate $\mu \in \R^d$ given i.i.d. samples
from $\cN(\mu,I)$, is arguably even simpler than sparse mean estimation. We give
evidence for an information-computation gap: polynomial-time $\e$-DP algorithms
obtaining accuracy $\alpha$ with probability at least $1-\beta$ require $n
\geq \log(1/\beta) \cdot (\log(1/\alpha))^{1/2 - o(1)} / (\alpha \epsilon))$
samples, unless there exist robust polynomial-time algorithms for Gaussian mean
estimation which would contradict known statistical query lower bounds
\cite{diakonikolas2017statistical} -- this is a $\log(1/\alpha)^{1/2 - o(1)}$
factor gap.

\emph{Learning parities} (\cref{corr:learning-parities}):
We also consider one of the most fundamental \emph{supervised} learning
problems: privately PAC learning parity functions from labeled examples. For
each $S \subseteq [n]$, we can define a parity function $f_S \, : \, \{ \pm
1\}^n \rightarrow \{ \pm 1\}$ by $f_S(x) = \prod_{i \in S} x_i$. The goal is to
take labeled examples $(x,y)$ drawn from some distribution $D$ and find a
parity function $f_S$ such that $\Pr_{(x,y) \sim D}(f_S(x) = y) = 1$, assuming one exists.
While polynomial-time private algorithms for learning parities are
known~\cite{kasiviswanathan2011can}, we show, via \cref{thm:meta} that the
failure probabilities of any such algorithms must be larger than what can be
achieved in exponential time, or else $RP = NP$.

\paragraph{Information-Theoretic Lower Bounds in Private Statistics}
Finally, we show that the connection to robustness can provide information-theoretic lower bounds for private mechanisms. As an
example, we study private covariance testing (\cref{cor:covariance_testing}),
where the goal is to take samples
from $\cN(0,\Sigma)$ and detect whether $\Sigma = I \in \R^{d \times d}$ or if
$\|\Sigma - I\|_F \geq \gamma$.  Appealing to the lower bound of
\cite{diakonikolas2021sample} for robust covariance testing, we give a lower
bound for private covariance testing, showing that $\Omega(d^2)$ samples are
required by private algorithms with high success probabilities, while $O(d)$ suffice non-privately.

Formal statements for the results on Gaussian mean estimation, learning
parities, and covariance testing can be found in~\cref{sec:lb}.

\subsection{Related Work}\label{sec:related_work}
\emph{Privacy and robustness.} As mentioned in~\cref{sec:intro}, there is a
rich history of connections between DP and robustness, starting from the
propose-test-release (PTR) framework of Dwork and
Lei~\cite{dwork2009differential}. Building on top of PTR, a number of recent
works tackle high-dimensional statistics problems by leveraging robust
primitives~\cite{brown2021covariance, liu2021differential}, themselves inspired by a recent revolution in high-dimensional robust statistics \cite{diakonikolas2019recent}.
On the flip side, private algorithms for certain problems have been shown to
``automatically'' exhibit a small amount of
robustness~\cite{thakurta2013differentially, hopkins2021efficient}.

\emph{Sparse mean estimation.} Without privacy or robustness requirements,
it is a folklore result that the truncated empirical mean achieves the
information-theoretically optimal rate.
In the approximate DP case,~\cite{thakurta2013differentially} show that the
stability of LASSO can be leveraged for private support selection and private
sparse regression via the sample-and-aggregate
framework~\cite{nissim2007smooth}. Cai, Wang, and Zhang~\cite{cai2021cost} show
information-theoretic lower bounds for approximate DP (based on tracing
attacks~\cite{homer2008resolving}) and computationally
efficient algorithms that match those bounds with constant probability under
additional assumptions on the $\ell_\infty$ norm of the mean $\mu$.
In the presence of $\eta$-corruptions of the
samples,~\cite{balakrishnan2017computationally} gives an $\tilde
O\left(k^2\log(d)/\eta^2\right)$-sample algorithm matching SQ lower bounds
from~\cite{diakonikolas2017statistical}.

\emph{Computational Roadblocks to Privacy.} Several prior works investigate computational roadblocks to privacy arising from cryptographic considerations, e.g. \cite{ullman2016answering,ullman2011pcps}.
The hard problem instances constructed in such works have a worst-case flavor, while we are interested in computational hardness for typical datasets/those drawn i.i.d. from an underlying probability distribution.

\emph{Sum-of-Squares Method.} The SoS method for algorithm design in
high-dimensional statistics has led to a number recent of algorithmic advances
-- see the survey \cite{raghavendra2018high}. \cite{hopkins2021efficient}, which
introduces the SoS exponential mechanism, is most closely related,
and provides the foundations for \cref{thm:sparse-alg}.

\emph{Lower bounds for private mechanisms} There are multiple works leveraging
group privacy to derive lower bounds for private
algorithms, both for pure and approximate DP~\cite{hardt2010geometry,
de2012lower, steinke2015between, bun2016concentrated}. They are all
information-theoretic in nature, while we also provide computational hardness
results. Additionally, to the best of our knowledge, there are no existing work
relating lower bounds for privacy to ones for robustness.
\section{Preliminaries}\label{sec:prelim}
We first introduce central definitions and results from differential privacy
that we use throughout the paper. Next, we give background on the notion of
robustness we work with. Finally, we briefly overview the central ideas in the
Sum-of-Squares (SoS) tools we use.

\paragraph{Differential Privacy}
We start by formally defining the notion of differential privacy.
\begin{definition}[(Approximate) Differential Privacy]
Let $\cX$ be a set and $\cX^* = \{(X_1,\ldots,X_n) \, : \, n \in \N, X_i \in
\cX\}$ be all possible datasets over $\cX$. For $\e,\delta > 0$, a (randomized)
map $M \, : \, \cX^* \rightarrow \cO$ (where $\cO$ is an \emph{output} set) is
$(\e,\delta)$-DP if for every $(X_1,\ldots,X_n), (X_1',\ldots,X_n') \in \cX^*$
such that $X_i = X_i'$ except for a single index $i$ and for every subset $S
\subseteq \cO$, $\Pr( M(X_1,\ldots,X_n) \in S) \leq e^{\e} \Pr
(M(X_1',\ldots,X_n') \in S) + \delta$.
\end{definition}

The special case of $\delta=0$ is referred to as \emph{pure} DP and will be the
main focus of our work. Throughout the paper, we refer to datasets that differ
in one entry as \emph{neighboring}.

We freely use basic primitives in private algorithm design like composition of
private mechanisms, which we state for completeness.
\begin{lemma}[(Basic) Composition~\cite{dwork2006calibrating}]
    Suppose that we have private mechanisms $M_1,\ldots, M_n$ where $M_i$ is
    $\e_i$-DP. Then an adaptive composition $M$ of $\{M_1,\ldots, M_n\}$ is
    $\sum_{i\leq n} \e_i$-DP.
\end{lemma}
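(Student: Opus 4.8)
The plan is to establish the pointwise likelihood-ratio bound for the \emph{joint} output of the composition and then integrate it up to arbitrary events; because we are in the pure-DP regime ($\delta = 0$), this requires only the chain rule for conditional probabilities and no advanced-composition machinery. First I would make the notion of adaptive composition precise: write $M(X) = (Y_1,\ldots,Y_n)$ where $Y_1 = M_1(X)$ and, for $i \geq 2$, $Y_i = M_i(X; Y_1,\ldots,Y_{i-1})$, where for every fixed auxiliary input $z = (y_1,\ldots,y_{i-1})$ the map $X \mapsto M_i(X; z)$ is $\e_i$-DP. (This is the standard meaning of adaptive composition, and the hypothesis ``$M_i$ is $\e_i$-DP'' is to be read in this sense.)

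Second, I would fix neighboring datasets $X, X' \in \cX^*$ (differing in a single index) and a fixed output tuple $y = (y_1,\ldots,y_n)$ in the product output space $\cO = \cO_1 \times \cdots \times \cO_n$. By the chain rule applied to the joint output,
\[
\Pr[M(X) = y] = \prod_{i=1}^{n} \Pr\bigl[M_i(X; y_1,\ldots,y_{i-1}) = y_i\bigr],
\]
and the analogous identity holds for $X'$. For each factor, $X$ and $X'$ are still neighboring and $M_i(\cdot\,; y_1,\ldots,y_{i-1})$ is $\e_i$-DP, so $\Pr[M_i(X; y_{<i}) = y_i] \leq e^{\e_i}\,\Pr[M_i(X'; y_{<i}) = y_i]$. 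Multiplying these bounds over $i = 1,\ldots,n$ gives the pointwise bound $\Pr[M(X) = y] \leq e^{\sum_{i\leq n} \e_i}\,\Pr[M(X') = y]$.

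Third, I would lift this to an arbitrary (measurable) event $S \subseteq \cO$: summing the pointwise bound over $y \in S$ yields $\Pr[M(X)\in S] \leq e^{\sum_{i\leq n}\e_i}\,\Pr[M(X')\in S]$, and since $X, X'$ were arbitrary neighbors this is exactly the statement that $M$ is $(\sum_{i\leq n}\e_i)$-DP. In the continuous case, the sums above are replaced by integrals against a common product dominating measure, working with densities (regular conditional distributions) throughout; the chain-rule factorization and the pointwise-to-event passage are then both between honest densities. This measure-theoretic bookkeeping is the only step that needs genuine care — there is no real obstacle here, as this is a foundational fact; the point worth stressing is simply that purity ($\delta=0$) is what makes the product telescope cleanly, with no additive error terms to propagate and hence no union bound.
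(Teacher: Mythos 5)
Your proof is correct and is the standard chain-rule argument for basic composition under pure DP, including the right handling of adaptivity (conditioning on the prefix $y_{<i}$ and using that $M_i(\cdot\,;y_{<i})$ is $\e_i$-DP for each fixed prefix) and of the continuous case via densities with respect to a dominating measure. The paper does not prove this lemma at all — it states it for completeness and defers to \cite{dwork2006calibrating} — and your argument is exactly the canonical proof that citation refers to, so there is nothing to reconcile.
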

We also freely use basic DP mechanism like the Laplace mechanism defined below.
\begin{lemma}[Laplace mechanism~\cite{dwork2006calibrating}]\label{lem:laplace}
    Let $f\colon \cX^n\to \R^d$ such that $\max_{X,X'\
    \text{neighbors}}\|f(X)-f(X')\|_1=\Delta$. Then the mechanism
    $M:\cX^n\to\R^d$ defined as $M(X) = f(X) + L$, where $L\sim
    \text{Lap}\left(0, \frac{\e}{\Delta}\right)$ is sampled from the Laplace distribution, is
    $\e$-DP.
\end{lemma}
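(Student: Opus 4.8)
The plan is to run the standard pointwise density-ratio argument. Write $b$ for the scale of each coordinate of the added noise, so that in the paper's rate parametrization $L \sim \mathrm{Lap}(0, \e/\Delta)$ means $b = \Delta/\e$ (each $L_j$ i.i.d.\ with density $\tfrac{1}{2b}e^{-|t|/b}$ on $\R$). Since $f$ is deterministic, the output $M(X) = f(X) + L$ has a product density on $\R^d$ which is strictly positive everywhere:
\[
p_X(o) \;=\; \prod_{j=1}^d \frac{1}{2b}\exp\!\Paren{-\frac{|o_j - f(X)_j|}{b}}, \qquad o \in \R^d .
\]

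Next, fix neighboring datasets $X, X' \in \cX^*$ and an arbitrary point $o \in \R^d$, and bound the ratio of densities coordinate by coordinate using the reverse triangle inequality $|o_j - f(X')_j| - |o_j - f(X)_j| \le |f(X)_j - f(X')_j|$:
\[
\frac{p_X(o)}{p_{X'}(o)} \;=\; \prod_{j=1}^d \exp\!\Paren{\frac{|o_j - f(X')_j| - |o_j - f(X)_j|}{b}} \;\le\; \exp\!\Paren{\frac{\sum_{j=1}^d |f(X)_j - f(X')_j|}{b}} \;=\; \exp\!\Paren{\frac{\|f(X) - f(X')\|_1}{b}}.
\]
By the sensitivity hypothesis $\|f(X) - f(X')\|_1 \le \Delta$ and the choice $b = \Delta/\e$, the right-hand side is at most $e^{\e}$, uniformly over $o$.

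Finally, I would integrate this pointwise inequality over an arbitrary measurable $S \subseteq \R^d$: $\Pr(M(X) \in S) = \int_S p_X(o)\,do \le e^{\e}\int_S p_{X'}(o)\,do = e^{\e}\Pr(M(X') \in S)$. Since no additive slack is introduced, this is exactly the $(\e,0)$-DP (pure $\e$-DP) condition, and by symmetry the same bound holds with $X$ and $X'$ swapped. There is essentially no obstacle here — this is a textbook computation — and the only points requiring any care are the noise parametrization (scale $\Delta/\e$ versus rate $\e/\Delta$) and the coordinatewise application of the triangle inequality to turn the $\ell_1$ sensitivity bound into the multiplicative guarantee.
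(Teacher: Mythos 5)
Your proof is correct: the paper does not reprove this lemma (it cites \cite{dwork2006calibrating} and uses it as a black box), and your pointwise density-ratio argument with the coordinatewise triangle inequality and the scale $b=\Delta/\e$ is exactly the standard proof from that source. No gaps; the parametrization caveat you flag (scale versus rate) is the only subtlety, and you handle it correctly.
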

The quantity $\Delta$ in \cref{lem:laplace} is referred to as the $\ell_1$
sensitivity of $f$. A central part of our design of private mean estimation
algorithms revolves around using functions with low sensitivity as primitives
for our private estimators.

We make extensive use of the exponential mechanism~\cite{mcsherry2007mechanism}.
It is a technique to privately select the (approximately) ``best'' object in a
universe $\cH$ according to a \emph{score function} $S$, which measures
the ``goodness'' of a given object. An important property of the score function
is its \emph{sensitivity} $\Delta_S$, defined as
$\max_{h\in\cH, \text{neighboring}\ D, D'} |S(D, h) - S(D', h)|$.
Given a score function $S$ with sensitivity $\Delta_S$ and a privacy parameter
$\e$, the exponential mechanism samples an object $h$ with probability
proportional to $\exp\left(\e/(2\Delta_S)\cdot S(h)\right)$. The exponential mechanism comes with the following privacy
and utility guarantees.
\begin{theorem}[{{\cite{mcsherry2007mechanism}}}]
For a dataset $X$ and a score function $S\colon \cX^n \times \cH \to \R$, the
exponential mechanism $M$ on the score function $S$ is $\e$-DP and with
probability at least $1-\beta$ outputs an object such that
\[
    S(M(X)) \geq OPT(X) -
    \frac{2\Delta_S}{\e}\left(\log\left(\frac{|\cH|}{|\cH^*|} +
     \log(1/\beta)\right)\right),
\]
where $\cH^*$ is the set of objects achieving score $OPT$.
\end{theorem}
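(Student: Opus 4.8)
The plan is to establish the two assertions — that $M$ is $\e$-DP, and that it achieves the stated utility — separately, each by a direct computation with the defining rule $\Pr(M(X) = h) \propto \exp\!\big(\tfrac{\e}{2\Delta_S}\, S(X,h)\big)$. Write $Z(X) = \sum_{h \in \cH} \exp\!\big(\tfrac{\e}{2\Delta_S}\, S(X,h)\big)$ for the normalizing constant, so that $\Pr(M(X) = h) = \exp(\tfrac{\e}{2\Delta_S} S(X,h))/Z(X)$.

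For privacy, fix neighboring datasets $D, D'$ and an output $h \in \cH$. The ratio $\Pr(M(D)=h)/\Pr(M(D')=h)$ factors as $\big[\exp(\tfrac{\e}{2\Delta_S}S(D,h))/\exp(\tfrac{\e}{2\Delta_S}S(D',h))\big]\cdot \big[Z(D')/Z(D)\big]$. The first bracket is at most $\exp\!\big(\tfrac{\e}{2\Delta_S}|S(D,h)-S(D',h)|\big) \le e^{\e/2}$ since $|S(D,h)-S(D',h)| \le \Delta_S$ by the definition of sensitivity; applying that same bound to every summand of $Z$ gives $Z(D') \le e^{\e/2} Z(D)$, so the second bracket is also at most $e^{\e/2}$. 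The product is at most $e^{\e}$, and summing over any measurable subset of $\cH$ yields the $(\e,0)$-DP guarantee. If $\cH$ is infinite one fixes a base measure and replaces sums by integrals and point masses by densities; nothing else changes.

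For utility, abbreviate $\opt = \opt(X)$ and set $t = \tfrac{2\Delta_S}{\e}\big(\log(|\cH|/|\cH^*|) + \log(1/\beta)\big)$. The mechanism returns an object of score below $\opt - t$ only if it selects some $h$ with $S(X,h) \le \opt - t$; there are at most $|\cH|$ such objects, each of weight at most $\exp(\tfrac{\e}{2\Delta_S}(\opt - t))$, while $Z(X) \ge \sum_{h \in \cH^*}\exp(\tfrac{\e}{2\Delta_S}\opt) = |\cH^*|\exp(\tfrac{\e}{2\Delta_S}\opt)$. Hence
\[
\Pr\big(S(M(X)) \le \opt - t\big) \;\le\; \frac{|\cH|\,\exp\!\big(\tfrac{\e}{2\Delta_S}(\opt - t)\big)}{|\cH^*|\,\exp\!\big(\tfrac{\e}{2\Delta_S}\opt\big)} \;=\; \frac{|\cH|}{|\cH^*|}\,\exp\!\Big(-\tfrac{\e t}{2\Delta_S}\Big) \;=\; \beta
\]
by the choice of $t$, which is the claimed bound (reading the outer parentheses in the displayed inequality as $\log(|\cH|/|\cH^*|) + \log(1/\beta)$).

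I do not anticipate a genuine obstacle. The one point requiring care is the privacy calculation, where \emph{two} separate factors of $e^{\e/2}$ appear — one from the change in the weight of $h$ itself, one from the change in the partition function $Z$ — so that they multiply to $e^{\e}$ rather than $e^{2\e}$; this is precisely why the constant $2$ sits in the exponent of the sampling rule. In the continuous case, the only adjustment is that the count of low-score objects (bounded by $|\cH|$) becomes a volume bound and $|\cH^*|$ a volume, both taken with respect to the fixed base measure.
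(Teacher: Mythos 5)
Your proof is correct and is exactly the standard McSherry--Talwar argument: the two factors of $e^{\e/2}$ (one from the weight of $h$, one from the partition function) for privacy, and the ratio bound $\frac{|\cH|}{|\cH^*|}e^{-\e t/(2\Delta_S)}$ for utility -- which is precisely the proof in the cited reference \cite{mcsherry2007mechanism}, since the paper itself states this theorem as a black-box import without proof. You also correctly identified that the displayed bound should be read as $\frac{2\Delta_S}{\e}\left(\log\left(\frac{|\cH|}{|\cH^*|}\right) + \log(1/\beta)\right)$, i.e.\ the parenthesization in the statement is a typo.
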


\paragraph{Robustness} There are a variety of models for the adversary in the
agnostic setting. We work with the case of \emph{adaptive} adversaries:
\begin{definition}[$\eta$-contamination model]\label{def:robustness}
    In the $\eta$-contamination model, given a ``clean'' distribution $D$, to
    draw $n$ ($\eta$-contaminated) samples from $D$, first draw
    $X_1',\ldots,X_n'\sim D$, and then output any $\{X_1,\ldots, X_n\}$ such
    that $X_i=X_i'$ for at least $(1-\eta)n$ choices of $i$.
\end{definition}

Another popular model in the literature is that of an \emph{oblivious}
adversary who provides a distribution that is close in total variation distance
to the original one. The name comes from the fact that, unlike in
\cref{def:robustness}, the adversary is not allowed to inspect the samples and
adaptively decide what samples to add and remove.

\paragraph{Sum of Squares} In~\cref{sec:strongly-private-sparse} we use the SoS
exponential mechanism in a black-box fashion. For a derivation of it, as well as
examples, see~\cite{hopkins2021efficient}. Informally, we use SoS as a
proofs-to-algorithms paradigm which automatically ``simple'' (expressible within
the SoS proof system) proofs into polynomial time algorithms.  For an overview
of SoS, see e.g.~\cite{barak2014sum}.

\section{Automatic Robustness Meta-Theorem and Private Robust
Mechanisms}\label{sec:auto-robust} 

In this section we state and prove our meta-theorem on automatic robustness of
private algorithms.

\begin{theorem}[Automatic Robustness Meta-Theorem] \label{thm:meta}
    Let $M \, : \, \cX^* \rightarrow \cO$ be an $(\epsilon,\delta)$-private map
    from datasets $\cX^*$ to outputs $\cO$.  For every dataset $X_1,\ldots,X_n$,
    let $G_{X_1,\ldots,X_n} \subseteq \cO$ be a set of \emph{good} outputs.
    Suppose that $M(X_1,\ldots,X_n) \in G_{X_1,\ldots,X_n}$ with probability at
    least $1-\beta$ for some $\beta = \beta(n)$.  Then, for every $n \in \N$, on
    $n$-element datasets $M$ is \emph{robust} to adversarial corruption of any
    $\eta(n)$-fraction of inputs, where 
    \[
        \eta(n) = O \Paren{ \min \Paren{ \frac{\log 1/\beta}{\e n}, \frac{\log
        1/\delta}{\e n + \log n} } }\mcom
    \]
    meaning that for every $X_1,\ldots,X_n$ and $X_1',\ldots,X_n'$ differing on
    only $\eta n$ elements, $M(X_1',\ldots,X_n') \in G_{X_1,\ldots,X_n}$ with
    probability at least $1- \beta^{\Omega(1)}$.
    
\end{theorem}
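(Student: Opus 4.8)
The proof is, at heart, just \emph{group privacy} applied to the indicator of the ``good'' event, so the plan is short. First I would record the standard group-privacy consequence of the definition: if $M$ is $(\epsilon,\delta)$-DP and $X,X'\in\cX^*$ are two length-$n$ datasets at Hamming distance $k$, then walking along a chain $X=Z^{(0)},Z^{(1)},\ldots,Z^{(k)}=X'$ of pairwise-neighboring datasets and applying the $(\epsilon,\delta)$-DP guarantee $k$ times yields, for every $S\subseteq\cO$,
\[
    \Pr\Paren{M(X')\in S}\;\le\;e^{k\epsilon}\,\Pr\Paren{M(X)\in S}\;+\;k\,e^{k\epsilon}\,\delta\mper
\]
For pure DP the additive term is absent, and this step is entirely routine.

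Next, fix any length-$n$ dataset $X$ for which $\Pr(M(X)\in G_X)\ge 1-\beta$, equivalently $\Pr(M(X)\in\cO\setminus G_X)\le\beta$, and let $X'$ be any dataset differing from $X$ in at most $k:=\lfloor\eta n\rfloor$ coordinates. Applying the inequality above with $S=\cO\setminus G_X$ gives
\[
    \Pr\Paren{M(X')\notin G_X}\;\le\;e^{k\epsilon}\beta\;+\;k\,e^{k\epsilon}\delta\mper
\]
It then remains to choose $\eta$ so the right-hand side is $\beta^{\Omega(1)}$ (up to a $\delta^{\Omega(1)}$ contribution, which is negligible in our applications, where $\delta$ is taken exponentially small or zero). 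For the first term, picking $\eta$ small enough that $k\epsilon\le\tfrac12\log(1/\beta)$ — i.e.\ $\eta=\Theta\Paren{\log(1/\beta)/(\epsilon n)}$ — makes $e^{k\epsilon}\beta\le\beta^{1/2}$; this already settles the pure-DP case. For the second term, taking logarithms we need $\log k+k\epsilon+\log\delta\le-\Omega(\log(1/\beta))$, and since $k\le n$ (so $\log k\le\log n$) and $k\epsilon=\eta n\epsilon$, it suffices that $\eta\,(\epsilon n+\log n)\le c\log(1/\delta)$ for a small constant $c$ — precisely the second term in the claimed minimum. Taking $\eta(n)$ to be a small constant times $\min\Paren{\tfrac{\log(1/\beta)}{\epsilon n},\tfrac{\log(1/\delta)}{\epsilon n+\log n}}$ satisfies both constraints at once, which completes the argument.

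I expect essentially all the work to be in the last step, and in particular in the approximate-DP case: one has to balance the two error contributions $e^{k\epsilon}\beta$ and $\approx k e^{k\epsilon}\delta$, and it is exactly the blow-up factor $k\le n$ multiplying $\delta$ in the group-privacy bound that forces the extra $\log n$ in the denominator of $\eta$ (no such term appears for pure DP, which is why the meta-theorem is cleanest and most useful there). Two small points also need care and I would flag them explicitly: the hypothesis ``$M(X)\in G_X$ with probability $\ge 1-\beta$'' must be read as holding for \emph{every fixed} dataset $X$ over the internal coins of $M$ — this is what licenses transporting it to a neighboring dataset via privacy — and the corruption budget ``$\eta n$'' should be understood as $\lfloor\eta n\rfloor$, since Hamming distance is integer-valued. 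Beyond that, no genuine obstacle arises; the theorem is really a one-line observation dressed up with the right quantitative choice of group size.
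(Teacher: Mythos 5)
Your proposal is correct and follows essentially the same route as the paper: the paper also walks along a chain of $\eta n$ neighboring datasets, unrolls the $(\epsilon,\delta)$-DP recurrence to get $(1-p_{\eta n}) \leq e^{\epsilon \eta n}(\beta + \eta n \delta)$ (exactly your group-privacy bound applied to $\cO\setminus G_X$), and then chooses $\eta$ as you do. Your final balancing of the two error terms is just a more explicit version of the paper's ``the conclusion follows.''
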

\begin{proof}[Proof of \cref{thm:meta}]
  Consider $\eta n$ intermediate datasets $(X_1,\ldots,X_n) =
  \mathbf{X}_0,\ldots,\mathbf{X}_{\eta n} = (X_1',\ldots,X_n')$, where a single
  coordinate $X_i$ is modified in passing from $\mathbf{X}_j$ to
  $\mathbf{X}_{j+1}$.  Let $p_j = \Pr(M(\mathbf{X}_j) \in G_{X_1,\ldots,X_n})$.
  Then we have the following recurrence for $(1-p_j)$: 
  \[
    (1-p_j) \leq e^{\e} (1-p_{j-1}) + \delta \text{ for $j \geq 1$, and }  (1-p_0) \leq \beta\mcom
  \]
  from which we obtain $(1-p_{\eta n}) \leq e^{\e \eta n} (\beta + \eta n \delta)$.
  The conclusion follows.
\end{proof}
An analogous statement for concentrated DP~\cite{bun2016concentrated} is
presented in~\cref{sec:zCDP}.

\cref{thm:meta} can be applied broadly to show that optimal private
mechanisms are automatically robust, frequently even with optimal dependence of the lost accuracy on the rate of corruption. This breadth is possible
because, for many statistical problems, \emph{private mechanisms obtaining
information-theoretically optimal privacy-accuracy tradeoffs automatically have
the strong group privacy guarantees needed to apply \cref{thm:meta}.}
This is because strong group privacy for a mechanism $M$ is implied by two other
desirable properties of private mechanisms: (1) $M$ satisfies \emph{pure} DP
(or, $(\e,\delta)$-DP for small choices of $\delta$), and (2) $M$
produces accurate results \emph{with high probability} over the randomness used
internally by the mechanism.  We give two examples of this phenomenon below. 

\emph{On the success probabilities of private algorithms: } Before turning to
examples, we observe that \cref{thm:meta} only gives robustness to a constant fraction of corrupted samples for 
private algorithms which have very high probability of succeeding -- to obtain
$\eta \geq \Omega(1)$ requires $\beta \leq 2^{-\Omega(n)}$.  In most work on
randomized algorithm design, the difference between succeeding with probability
$2/3$ versus $1-\beta$ for small $\beta$ can be treated as an afterthought,
because algorithms can be repeated to amplify success probability.  \emph{But
this kind of naive repetition causes privacy leakage!} 

In spite of this, optimal private mechanisms in statistics often do succeed with
high probability, using more sophisticated approaches than naive repetition: in
fact, high success probability is generally implied by $M$'s outputs having
(asymptotically) \emph{optimal confidence intervals}.  Our work points to a need
for algorithm designers to focus on the confidence intervals/success
probabilities of private algorithms: the payoff is robustness for free. 

\emph{On Black-Box Robustification of Optimal Private Mechanisms: }
We observe that \cref{thm:meta} can be used to automatically obtain robust and private mechanisms from private ones with high success probabilities.
For instance, \emph{(1) Hypothesis selection:} The private hypothesis selection procedure of \cite{bun2019private}, already proved robust against non-adaptive adversaries, is additionally robust to corruptions made by adaptive adversaries.
\emph{(2) Affine-invariant mean estimation:}
\cite{brown2021covariance,liu2021differential} study mean-estimation mechanisms
which provide error guarantees in the \emph{Mahalanobis distance}
$\|\Sigma^{-1/2}(\hat{\mu} - \mu)\|$ given samples from $\cN(\mu,\Sigma)$.
\cite{brown2021covariance} give a private mechanism for this problem with high
success probability, and \cite{liu2021differential} give a simultaneously
private and robust mechanism. 
In either case, because both mechanisms provide strong-enough privacy guarantees
to apply \cref{thm:meta}, robustness can be obtained in a black-box
fashion knowing only the privacy guarantees. 

\emph{From \cref{thm:meta} to Lower Bounds for Private Statistics: } As
we discussed in \cref{sec:contributions}, we use \cref{thm:meta}
to prove both computational and information-theoretic sample-complexity lower
bounds for private algorithms for sparse mean estimation
(\cref{thm:sparse-lb}), non-sparse mean estimation, learning parities,
and covariance testing. 
We prove \cref{thm:sparse-lb} in the next section, and defer the
remaining statements and proofs of lower bounds to \cref{sec:lb}.  
\section{Lower Bounds in Private Statistics}
\label{sec:lb}

In this section, we state and prove theorems giving information-computation gaps
for private algorithms for Gaussian mean estimation and learning parities, and
we give an information-theoretic lower bound for private covariance testing,
again via \cref{thm:meta}. 

\subsection{Private Gaussian Mean Estimation}
Consider the task of privately estimating the mean of class of spherical
Gaussian distributions, and the following conjecture, supported by statistical
query lower bounds \cite{diakonikolas2017statistical}. 

\begin{conjecture}\label{conj:gaussian-hardness}
  There exists $c > 0$, no $\poly(n,d,1/\eta)$-time algorithm $\eta$-robustly estimates the mean of an
    unknown $d$-dimensional spherical Gaussian $\cN(\mu,I)$ with, $\|\mu\| \leq \poly(d)$, from independent samples to $\ell_2$ error
    $\eta (\log 1/\eta)^{1/2 - \Omega(1)}$, so long as $\eta \geq 2^{-d^{c}}$.
\end{conjecture}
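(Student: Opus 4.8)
The plan is to reduce the statement to a statistical-query (SQ) lower bound in the style of \cite{diakonikolas2017statistical} --- the evidence that makes this conjecture plausible --- and then to isolate the step that is still missing for a genuine proof. Recall that an SQ algorithm accesses the unknown distribution only through a $\mathrm{VSTAT}$-type oracle that, given a bounded query, returns its expectation up to an additive tolerance, and that an exponentially large \emph{SQ dimension} for a family of instances forces any SQ algorithm to use super-polynomially many queries or queries of super-polynomially small tolerance. So the goal is to exhibit a family of distributions, indexed by unit vectors $v$ ranging over a $2^{\Omega(d)}$-size packing of the sphere, that are each a legitimate $\eta$-contamination of some $\cN(\mu_v, I)$ with $\|\mu_v\| = \Theta(\eta (\log(1/\eta))^{1/2})$, and that are pairwise nearly uncorrelated when viewed as reweightings of a common reference Gaussian.

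First I would set up the hidden-direction reduction: fix a unit vector $v \in \R^d$ and define $Q_v$ to behave like $\cN(0,I)$ on $v^\perp$ and like a one-dimensional gadget distribution $A$ along $v$, where $A$ is engineered (as in \cite{diakonikolas2017statistical}) to match the first $m = \Theta(\log(1/\eta))$ moments of the standard one-dimensional Gaussian while differing from it by an $O(\eta)$-mass perturbation supported near radius $\Theta((\log(1/\eta))^{1/2})$ that shifts the mean by $\gamma := \Theta(\eta (\log(1/\eta))^{1/2})$. Two facts are then needed: (a) $\dtv(Q_v, \cN(\mu_v, I)) = O(\eta)$ for a suitable $\mu_v$ with $\|\mu_v\| = \Theta(\gamma)$ collinear with $v$, so that an i.i.d.\ sample from $Q_v$ is with high probability an $O(\eta)$-contaminated sample from $\cN(\mu_v, I)$; and (b) the family $\{Q_v\}$ is SQ-hard to tell apart, so no efficient robust estimator can determine which $\mu_v$ generated the data --- and since the $\mu_v$ are $\Omega(\eta (\log(1/\eta))^{1/2})$-separated, this rules out $\ell_2$ error $\eta(\log(1/\eta))^{1/2 - \Omega(1)}$. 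This also fixes the regime $\eta \ge 2^{-d^c}$: the construction needs $m \lesssim d^{c'}$ so that a $2^{\Omega(d)}$-size near-orthogonal packing survives the moment cancellation.

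The SQ-dimension estimate is the next step. Writing $D = \cN(0,I)$, the moment-matching property of $A$ forces the Hermite expansion of $Q_v/D - 1$ to have no components of degree at most $m$, so the pairwise correlation $\langle Q_u/D - 1,\, Q_v/D - 1\rangle_D$ is $O(\langle u, v\rangle^{m+1})$ times the $\chi^2$-scale of the gadget, hence exponentially small in $m$ whenever $u$ and $v$ are nearly orthogonal. Combined with the $2^{\Omega(d)}$ packing, this gives SQ dimension $2^{\Omega(d)}$ at the relevant correlation level, hence $2^{\Omega(d^c)}$ query/tolerance cost and in particular no $\poly(n,d,1/\eta)$-time SQ algorithm; the matching low-degree-polynomial lower bound follows from the same correlation bound.

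The step I expect to be the real obstacle --- and the reason this stays a conjecture rather than becoming a theorem --- is upgrading the SQ and low-degree lower bounds to genuine computational hardness against all polynomial-time algorithms. There is currently no reduction from a standard average-case assumption (planted clique, or its secret-leakage variant in the spirit of \cite{brennan2020reducibility}) that reproduces this precise ``$\eta(\log 1/\eta)^{1/2}$ versus $\eta(\log 1/\eta)^{1/2-\Omega(1)}$'' gap, so at present one can only assemble the SQ/low-degree evidence above. A secondary, purely technical difficulty is calibrating the one-dimensional gadget --- moment-matching degree $m$, perturbation mass $O(\eta)$, support radius, and induced mean shift $\gamma$ --- simultaneously, and tracking constants carefully enough to land the inadmissible error exactly at $\eta(\log(1/\eta))^{1/2-\Omega(1)}$ and to pin down the smallest $\eta \approx 2^{-d^c}$ for which all the pieces remain consistent.
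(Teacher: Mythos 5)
The statement you were asked about is stated in the paper as a \emph{conjecture}, not a theorem: the paper offers no proof of it, only the remark that it is supported by the statistical query lower bounds of \cite{diakonikolas2017statistical}, and it is then used as a hypothesis in \cref{cor:gaussian-lb}. Your write-up matches that justification essentially exactly --- the moment-matching hidden-direction construction, the exponential SQ dimension it yields, and the honest identification that the unprovable step is upgrading SQ/low-degree hardness to hardness against all polynomial-time algorithms --- so you have correctly reproduced the paper's supporting evidence and correctly explained why the statement remains a conjecture rather than a theorem.
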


(By contrast, $O(\eta)$ error is achievable in exponential time.) We prove the following corollary:

\begin{corollary}\label{cor:gaussian-lb}
    Assume \cref{conj:gaussian-hardness}.
    Every $\poly(n,d,1/\e)$-time $\e$-DP algorithm which takes i.i.d. samples from $\cN(\mu,I)$ for $\mu \in \R^d$ and outputs a vector $\hat{\mu}$ such that $\|\hat{\mu}- \mu\| \leq \alpha$ with probability $1-\beta$ for all $\|\mu\| \leq \poly(d)$ requires $n \geq \min \{ \tfrac{\log(1/\beta)}{\epsilon \alpha} \cdot (\log(1/\alpha))^{1/2 - o(1)}, \tfrac{2^{d^c}\log(1/\beta)}{\e} \}$, where $c>0$ is a universal constant.
\end{corollary}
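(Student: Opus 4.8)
The plan is to run the privacy-to-robustness reduction of \cref{thm:meta} ``in reverse'': an efficient $\e$-DP mean estimator with failure probability $\beta$ is automatically an efficient $\eta$-robust mean estimator with $\eta \asymp \log(1/\beta)/(\e n)$ and the same $\ell_2$ accuracy $\alpha$, and then \cref{conj:gaussian-hardness} forces either $\alpha$ to exceed the robust-estimation error threshold $\eta\,(\log 1/\eta)^{1/2-\Omega(1)}$ or $\eta < 2^{-d^c}$; solving each of these for $n$ yields the two branches of the claimed minimum.

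First I would set up the reduction. Let $M$ be the purported $\e$-DP algorithm and fix any $\mu \in \R^d$ with $\|\mu\| \le \poly(d)$. I would take $G_X := \{\hat\mu : \|\hat\mu - \mu\| \le \alpha\}$ for every dataset $X$ — a legitimate choice of good-output set in the sense of \cref{thm:meta}, even though here it happens not to depend on $X$. By hypothesis $\Pr[\|M(X)-\mu\| \le \alpha] \ge 1-\beta$ over the draw $X \sim \cN(\mu,I)^{\otimes n}$ together with $M$'s internal coins, so Markov's inequality gives that for a $(1-\sqrt\beta)$-fraction of clean datasets $X$ we have $\Pr_{\mathrm{coins}}[M(X) \in G_X] \ge 1-\sqrt\beta$. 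For any such $X$ and any $X'$ corrupting at most $\eta n$ of its entries, \cref{thm:meta} with $\delta=0$ (so the $\min$ reduces to its first argument) and failure parameter $\sqrt\beta$ yields $\Pr[M(X') \in G_X] \ge 1-\beta^{\Omega(1)}$, provided $\eta \le c_0 \log(1/\beta)/(\e n)$ for a universal $c_0>0$. Setting $\eta := c_0\log(1/\beta)/(\e n)$, this is exactly the statement that $M$ is an $\eta$-robust estimator of $\mu$ (in the model of \cref{def:robustness}) with $\ell_2$ error $\alpha$ and success probability $\ge 1-\beta^{\Omega(1)}-\sqrt\beta \ge 2/3$, where in the last step I assume $\beta$ is below a suitable universal constant (the complementary regime of large $\beta$ is degenerate and the claimed bound follows there from standard $\e$-DP packing lower bounds, since $d$ is large). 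Since $1/\eta = O(\e n/\log(1/\beta)) \le \poly(n,1/\e)$, this robust algorithm runs in $\poly(n,d,1/\e,1/\eta)$ time, the exact resource \cref{conj:gaussian-hardness} rules out.

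Next I would invoke the conjecture. If both $\eta \ge 2^{-d^c}$ and $\alpha \le \eta\,(\log 1/\eta)^{1/2-\Omega(1)}$ held, the robust estimator just built would contradict \cref{conj:gaussian-hardness}, so at least one fails. If $\eta < 2^{-d^c}$, then $c_0\log(1/\beta)/(\e n) < 2^{-d^c}$, i.e. $n > (c_0/\e)\,2^{d^c}\log(1/\beta)$, the second branch. Otherwise $\alpha > \eta\,(\log 1/\eta)^{1/2-\Omega(1)}$; since I may assume $\alpha$ is at most a small constant (as above), this implies $\eta \le \alpha$, hence $\log(1/\eta) \ge \log(1/\alpha)$, and so $\alpha > \eta\,(\log 1/\alpha)^{1/2-\Omega(1)}$. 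Substituting $\eta = c_0\log(1/\beta)/(\e n)$ and rearranging gives $n > (c_0/\e)\cdot \log(1/\beta)\,(\log 1/\alpha)^{1/2-\Omega(1)}/\alpha$, the first branch, absorbing the gap between the conjecture's exponent constant and the stated $1/2-o(1)$ into the $o(1)$ and the constants.

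The main obstacle — really the only nontrivial point — is the first step: passing from the average-over-data accuracy guarantee to a statement that holds pointwise on most fixed datasets, so that \cref{thm:meta} can be applied and then ``pasted back together'' over the choice of clean dataset to recover a genuine $\eta$-contamination-model guarantee; and, alongside it, verifying that the induced corruption rate $\eta = \Theta(\log(1/\beta)/(\e n))$ is simultaneously large enough for the contamination model to be meaningful, at least $2^{-d^c}$ in the first case, and small enough in $\log(1/\eta)$ that the threshold $\eta\,(\log 1/\eta)^{1/2-\Omega(1)}$ compares cleanly with $\alpha$. Everything after that is bookkeeping once one observes $\eta \le \alpha$.
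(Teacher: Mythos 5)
Your proposal is correct and follows essentially the same route as the paper's proof: apply \cref{thm:meta} to convert the $\e$-DP estimator into an $\eta$-robust one with $\eta = \Theta(\log(1/\beta)/(\e n))$, then use \cref{conj:gaussian-hardness} to force either $\eta < 2^{-d^c}$ or $\alpha > \eta(\log 1/\eta)^{1/2-\Omega(1)}$, and solve each case for $n$ (using $\eta \le \alpha$ to replace $\log(1/\eta)$ by $\log(1/\alpha)$). Your Markov-over-clean-datasets step simply makes explicit the passage from an average-over-data guarantee to the per-dataset guarantee that \cref{thm:meta} needs, a point the paper's one-line proof leaves implicit.
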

To see that \cref{cor:gaussian-lb} captures an information-computation gap, recall that in exponential time it is possible to perform $\e$-DP Gaussian mean estimation for $\|\mu \| \leq \poly(d)$ with
\[
    n = \Theta \Paren{ \frac{d + \log(1/\beta)}{\alpha^2} + \frac{d + \log(1/\beta)}{\alpha \epsilon} + \frac{d \log d + \log(1/\beta)}{\epsilon} }
\]
samples \cite{bun2019private}.
Unless the SQ lower bounds of \cite{diakonikolas2017statistical} are broken by some polynomial-time algorithm, this error rate can be matched, at best, up to logarithmic factors by polynomial time algorithms.
\begin{proof}[Proof of \cref{cor:gaussian-lb}]
  A polynomial-time private algorithm with the hypothesized guarantees would be
  $\eta = \log(1/\beta)/ (\e n)$-robust by \cref{thm:meta}. So long as
  $\log(1/\beta) / \e n = \eta \geq 2^{-d^c}$ this satisfies the hypotheses of
  \cref{conj:gaussian-hardness}, so it can't estimate $\mu$ to error
  $\eta(\log(1/\eta))^{1/2 - o(1)}$; the conclusion follows by substituting
  $\log(1/\beta)/(\e n)$ for $\eta$. 
\end{proof}

\subsection{Privately Learning Parities}\label{sec:parities}
We now turn to a fundamental supervised problem - PAC learning parities.
In the absence of noise, Gaussian elimination provides a polynomial-time
sample-efficient algorithm for learning parities.

Learning under adversarial label noise is known as agnostic
learning~\cite{kearns1994toward}. Learning parities in the agnostic case, unlike
in the noiseless case, is notoriously difficult; in the proper case, the problem has been
shown to be NP-hard~\cite{haastad2001some}, even for getting accuracy of $1/2 +
\epsilon$ for any $\epsilon > 0$. With this in mind, our goal is to probe where
does private parity learning lie computationally.

More formally, let \texttt{PARITY} be the class of functions $f_S \, : \, \{ \pm
1\}^d \rightarrow \{ \pm 1\}$ defined by $f_S(x) = \prod_{i \in S} x_i$ for each
$S \subseteq [n]$. 
An algorithm PAC learns \texttt{PARITY} if, for every $S$, given $n$ samples $(x,f_S(x))$ where $x$ comes from some distribution $D$ on $\{ \pm 1\}^d$, it finds $T$ such that $\Pr_{x \sim D} (f_S(x) \neq f_T(x)) \leq \alpha$.
\cite{kasiviswanathan2011can} show that 
\begin{equation}
    n=O\left(\frac{d\log (1/\beta)}{\epsilon \alpha}\right)
\end{equation}
suffice to privately PAC learn \texttt{PARITY} in polynomial
time~\cite[Theorem 4.4]{kasiviswanathan2011can}, succeeding with probability $1-\beta$.
In contrast, allowing exponential time, only
\begin{equation}
    n=O\left(\frac{d+\log (1/\beta)}{\epsilon \alpha}\right)
\end{equation}
suffice.
From~\cref{thm:meta} we obtain the result that this gap cannot be closed
with polynomial-time private algorithms, unless $RP=NP$:
\begin{corollary}\label{corr:learning-parities}
    Suppose $RP \neq NP$.
    Then every polynomial-time $\epsilon$-DP algorithm which for any $\beta > 0$ can PAC-learn $d$-variable \texttt{PARITY}s to accuracy $\alpha$, succeeding with probability $1-\beta$, requires $n \geq \omega \Paren{ \frac{\log(1/\beta)}{\e \alpha} }$ samples.
\end{corollary}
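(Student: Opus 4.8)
The plan is to assume a too-sample-efficient polynomial-time $\e$-DP parity learner, use \cref{thm:meta} to extract from it a polynomial-time \emph{robust} parity learner, and then reduce an NP-hard problem to robust parity learning, concluding $NP\subseteq RP$ and hence $RP=NP$. So suppose for contradiction that $M$ is a polynomial-time $\e$-DP algorithm that, for every target failure probability $\beta$, PAC-learns $d$-variable \texttt{PARITY}s to accuracy $\alpha$ with probability $1-\beta$ using $n(\beta)\le C\log(1/\beta)/(\e\alpha)$ samples for a universal constant $C$. Fix a constant $\beta_0$ small enough (depending on the constants in \cref{thm:meta}) and put $n^*=n(\beta_0)$. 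Invoking \cref{thm:meta} with $\delta=0$ on $n^*$-element datasets -- after the routine averaging that turns the on-average-over-samples accuracy guarantee into a per-dataset one, so that $G$ can be taken to be ``hypotheses with error $\le\alpha$ against the target distribution'' -- and using $n^*\le C\log(1/\beta_0)/(\e\alpha)$, we conclude that $M$ is robust to corruption of an $\eta^*$-fraction of its samples for some $\eta^* = \Theta(\alpha)$ -- say $\eta^*\ge c_0\alpha$ -- still outputting an $\alpha$-accurate hypothesis with probability $\ge 9/10$. Thus, for free, we get a polynomial-time robust parity learner; the one catch, which the reduction must work around, is that this corruption budget $c_0\alpha$ shrinks with $\alpha$.

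Now I would reduce from H\aa stad's hardness of \textsc{Max-3-Lin}: for a suitable small constant $\nu_0>0$ it is NP-hard to distinguish systems of three-variable linear equations over $\mathbb{F}_2$ in which some assignment satisfies a $(1-\nu_0)$-fraction of the equations from systems in which every assignment satisfies at most a $(1/2+\nu_0)$-fraction. Given such a system $\Phi$ on variables $[d]$, encode an equation $\sum_{i\in e}x_i=b_e$ as the labeled example $(\chi_e,(-1)^{b_e})\in\{\pm1\}^d\times\{\pm1\}$, with $\chi_e$ equal to $-1$ on the coordinates of $e$ and $+1$ elsewhere, so that a set $T\subseteq[d]$, read as an assignment, satisfies $e$ exactly when $f_T(\chi_e)=(-1)^{b_e}$. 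The key move: to bring $\Phi$ down to the $\Theta(\alpha)$ robustness budget, let the example distribution $D$ output a uniformly random encoded equation with probability $4\alpha$ and a fixed realizable example (the all-ones point labeled $+1$, after adjoining a single dummy coordinate, which lets us correct the parity of the witness set without affecting its behavior on the equation block) with probability $1-4\alpha$. The reduction draws $n^*$ i.i.d.\ samples from $D$, runs $M$, and accepts iff the returned $\hat T$, restricted to $[d]$, satisfies at least a $(3/4-\nu_0)$-fraction of $\Phi$ -- a polynomial-time check.

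For correctness: in the NO case every assignment satisfies at most a $(1/2+\nu_0)<(3/4-\nu_0)$-fraction, so the reduction always rejects. In the YES case let $T^*$ satisfy a $(1-\nu_0)$-fraction, with $P^*$ the corresponding parity-corrected set; then $D$ agrees with the realizable distribution $D^*$ (same $x$-marginal, labels given by $f_{P^*}$) except on the $T^*$-unsatisfied equations within the equation block, a $\le4\alpha\nu_0$-fraction of $D$. Choosing $\nu_0$ a small enough constant so that $4\alpha\nu_0\le\eta^*/10$, Markov's inequality shows that, with probability $\ge9/10$ over the $n^*$ draws, at most $\eta^*n^*$ of them disagree with the coupled i.i.d.\ draw from $D^*$; on that event (together with the coupled clean draw landing in the good per-dataset event, which holds with the requisite constant probability) the sample is a legitimate $\eta^*$-corruption of a ``good'' realizable dataset, so $M$ returns $\hat T$ with error $\le\alpha$ against $D^*$ -- hence error $\le\alpha+4\alpha\nu_0$ against $D$. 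Since the point-mass block contributes nonnegatively to this error, $\hat T$ errs on at most a $(1/4+\nu_0)$-fraction of the equation block, i.e.\ satisfies a $\ge(3/4-\nu_0)$-fraction of $\Phi$, and the reduction accepts with probability exceeding $1/2$ (for $\beta_0$ small; one may also boost by $O(1)$ repetitions). This is a polynomial-time one-sided-error randomized decision procedure for an NP-hard promise problem, so $NP\subseteq RP$ and $RP=NP$, contradicting the hypothesis.

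The only genuinely non-routine step is this scale matching: \cref{thm:meta} supplies robustness to only a $\Theta(\alpha)$-fraction of corruptions, which cannot absorb the constant-gap hardness of \textsc{Max-3-Lin} directly when $\alpha$ is small, so one dilutes $\Phi$ into an otherwise trivially realizable distribution -- pushing $\Phi$'s ``internal'' inconsistency down to scale $\alpha$ -- and then tracks how the completeness/soundness thresholds $1-\nu_0$ and $1/2+\nu_0$ turn into the decision threshold $3/4-\nu_0$. Without the dilution the argument still yields the statement for $\alpha$ bounded away from $0$ and $1/2$, reducing from H\aa stad directly; the dilution extends it to all $\alpha\in(0,1/4]$ that are not so small relative to $d$ that $n^*=O(1/(\e\alpha))$ stops being polynomial. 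The conversion of $\e$-DP into $\Theta(\alpha)$-robustness via \cref{thm:meta}, the averaging over clean datasets, and the Markov bookkeeping are all standard.
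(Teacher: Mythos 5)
Your proposal is correct, and its skeleton is the same as the paper's: assume a learner with $n \leq O(\log(1/\beta)/(\e\alpha))$, apply \cref{thm:meta} to conclude $\Omega(\alpha)$-robustness, and then reduce from H\aa stad's hardness of 3-variable linear equations over $\mathbb{F}_2$, using exactly the same encoding of a clause as a labeled example for a parity. The one genuine difference is in the reduction itself. The paper runs the learner directly on the uniform distribution over encoded clauses, taking the unsatisfied fraction $\alpha' \ll \alpha$, so it needs NP-hardness of distinguishing $(1-\Omega(\alpha))$- from $(1/2+O(\alpha))$-satisfiable systems; this is exactly your ``undiluted'' variant and implicitly treats $\alpha$ as a constant (H\aa stad only gives constant gaps), and it finishes by extracting a near-satisfying assignment from the returned parity rather than by a threshold test. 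Your dilution trick -- mixing the clause distribution with weight $4\alpha$ into an otherwise trivially realizable distribution, fixing the H\aa stad gap at a constant $\nu_0$, and accepting iff the returned parity satisfies a $(3/4-\nu_0)$-fraction of the system -- buys uniformity in $\alpha$: the effective corruption rate $4\alpha\nu_0$ is automatically below the $\Theta(\alpha)$ robustness budget, so the argument covers sub-constant accuracy as well (so long as $n^*$ remains polynomial), at the cost of a slightly more delicate bookkeeping of the completeness/soundness thresholds. You also make explicit the Markov/averaging step converting the distributional success guarantee into the per-dataset guarantee that \cref{thm:meta} formally requires, which the paper's proof (like its other corollaries) leaves implicit. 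Two cosmetic remarks: the dummy coordinate is unnecessary, since every parity already labels the all-ones point $+1$, and your constants need $\nu_0 < 1/8$ so that $1/2+\nu_0 < 3/4 - \nu_0$, which your ``small enough constant'' choice already accommodates.
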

\begin{proof}
     Suppose otherwise, that there exists a private PAC learner using $n \leq O(\log(1/\beta) / (\alpha \e))$ samples.
     By Theorem~\ref{thm:meta}, such an algorithm is also $\eta = \frac{\log(1/\beta)}{\e n}$-robust, with $\eta \geq \Omega(\alpha)$.
     We claim that it can be used to distinguish $1- \Omega(\alpha)$-satisfiable instances of XOR-SAT from $1/2+O(\alpha)$-satisfiable ones; by \cite{haastad2001some} this is NP-hard for every constant $\alpha > 0$.

     Given a $d$-variable instance $\phi$ of XOR-SAT, let $D$ be the following distribution on the hypercube.
     First, draw a clause $\prod_{i \in C} y_i = b_C$ from $\phi$ uniformly at random.
     Then let $c \in \{ \pm 1\}^d$ be given by $c_i = -1$ if $i \in C$ and otherwise $c_i = -1$.
     
     Suppose $\phi$ is $1-\alpha'$-satisfiable, by some $x \in \{ \pm 1\}^n$.
     Define $f_x(c) = \prod_{i \, : \, x_i = -1} c_i$.
     We claim that $\Pr_{C \sim \phi} (f_x(c) = b_C) \geq 1-\alpha'$.
     This holds because $f_x(c) = \prod_{i \, : \, x_i = -1} c_i = \prod_{i \, : \, c_i = -1} x_i = b_C$ if $C$ is a clause satisfied by $x$, and by hypothesis a $(1-\alpha')$-fraction of the clauses in $\phi$ are satisfied by $x$.

     Now, given samples $(c,f_x(c))$, our hypothesized private PAC learner returns, with probability $1-\beta$, some parity function $f_S$ such that $\Pr_{C \sim \phi} (f_x(c) = f_S(c)) \geq 1-\alpha'$.
     This means that $\Pr_{C \sim \phi}(f_S(c) = b_C) \geq 1-2\alpha'$.

     By $\eta$-robustness with $\eta \geq \Omega(\alpha)$ for $\alpha' \ll \alpha$, even given samples $(c,b_C)$, the learning algorithm returns such an $f_S$ with probability at least $0.99$, taking $\beta$ sufficiently small.
     But given $f_S$ it is easy to extract a $1-O(\alpha')$-satisfying assignment to $\phi$, which is NP hard.
\end{proof}

\subsection{Private Covariance Testing}\label{sec:cov_testing}

A fundamental question in high-dimensional statistics is that of
distributional property testing~\cite{batu2000testing}. Given independent
samples, the goal is to design efficient algorithms to test whether their
distribution satisfies a given property.

In this work, we are interested in a simple problem in that field - Gaussian
covariance testing. In particular, given samples $X_1,\ldots, X_n$ in $\R^d$
from $\cN(0, \Sigma)$ with unknown $\Sigma$, we want to \emph{privately}
determine whether $\Sigma=I$ or $\|\Sigma-I\|_F\geq \gamma$ with high
probability using as few samples as possible. 

In the non-private case, \cite{cai2013optimal} show that Gaussian covariance
testing can be achieved with $O\left(\frac{d}{\gamma^2}\right)$ samples,
significantly less than the $O\left(\frac{d^2}{\gamma^2}\right)$ samples
required to learn the distribution. Curiously, Diakonikolas and
Kane~\cite{diakonikolas2021sample} show that under $\eta$-corruptions for any constant $\eta > 0$ one needs
$\Omega(d^2)$ samples for testing. 

\cite{amin2019differentially} show that one needs
$O\left(\frac{d^2}{\gamma^2\epsilon}\right)$ samples to privately learn the
covariance of a Gaussian in $d$ dimensions. Thus, this is a natural upper bound
on the sample complexity of \emph{testing} the covariance.

We leverage \cref{thm:meta} to show that any private algorithm with
strong group privacy guarantees requires $\Omega(d^2)$ samples for covariance
testing. 

\begin{corollary}\label{cor:covariance_testing}
For every $C > 0$ there is $C' > 0$ such that
    every $\epsilon$-DP mechanism which takes independent samples $X_1, \ldots, X_n$ from $\cN(0, \Sigma)$
    for an unknown $\Sigma$ and distinguishes with probability at least
    $1-\beta$ whether $\Sigma=I$ or $\|\Sigma - I\|_F \geq 1/2$ takes $n \geq \Omega(\min( C \log(1/\beta)/\e, C' d^2 ))$ samples.
\end{corollary}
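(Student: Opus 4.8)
The plan is to derive \cref{cor:covariance_testing} directly from the automatic-robustness meta-theorem \cref{thm:meta} together with the robust covariance-testing lower bound of \cite{diakonikolas2021sample}, in the same spirit as \cref{cor:gaussian-lb} and \cref{corr:learning-parities}. Fix $C>0$ and an $\e$-DP tester $M$ using $n$ samples and succeeding with probability $1-\beta$. We may assume $n < C\log(1/\beta)/\e$ and $n < C'd^2$ (with $C'$ fixed below), since otherwise $n \geq \min(C\log(1/\beta)/\e,\, C'd^2)$ already. Apply \cref{thm:meta} with the good set $G_{X_1,\dots,X_n}$ taken to be the singleton containing the correct decision --- ``$\Sigma=I$'' when the clean samples are drawn from $\cN(0,I)$, and ``$\|\Sigma-I\|_F\ge 1/2$'' when they are drawn from some $\Sigma$ with $\|\Sigma-I\|_F \ge 1/2$. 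Since $M$ satisfies pure DP, the $\delta$-term in \cref{thm:meta} is vacuous, so $M$ is $\eta$-robust (in the adaptive $\eta$-contamination model of \cref{def:robustness}) for $\eta = \Omega(\log(1/\beta)/(\e n))$; the assumption $n < C\log(1/\beta)/\e$ then forces $\eta$ above a positive constant $\eta_0 = \eta_0(C)$, and on any $\eta_0$-corrupted input $M$ still outputs the correct decision with probability $1-\beta^{\Omega(1)}$.

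Thus $M$ is, in particular, an $\eta_0$-robust tester distinguishing $\Sigma=I$ from $\|\Sigma-I\|_F\ge 1/2$ with constant success probability and $n$ samples. I would then quote \cite{diakonikolas2021sample}: for any fixed constant corruption level, distinguishing these two cases robustly requires $\Omega(d^2)$ samples. (Their lower bound holds against the relevant adversary model, and the adaptive robustness supplied by \cref{thm:meta} is at least as strong, so the bound transfers.) This yields $n \ge C'd^2$ for a suitable constant $C' = C'(\eta_0) = C'(C) > 0$, contradicting $n < C'd^2$ and completing the argument; the quantifier order matches, since a larger $C$ only weakens $\eta_0(C)$, and \cite{diakonikolas2021sample} gives an $\Omega_{\eta_0}(d^2)$ bound for every fixed $\eta_0 > 0$.

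The step I expect to require the most care --- and the \emph{only} genuine subtlety --- is the success-probability bookkeeping. \cref{thm:meta} only guarantees that the robustified tester succeeds with probability $1-\beta^{\Omega(1)}$, which beats the $1/2$ threshold needed to contradict a Boolean testing lower bound only once $\beta$ is below an absolute constant. This is harmless: the interesting (``high-probability'') regime has $\beta$ small, and when $\beta$ is bounded below by a constant the term $C\log(1/\beta)/\e$ in the $\min$ is merely $O(1/\e)$, which is the operative, weaker term there and follows from the standard packing lower bound that any nontrivial $\e$-DP testing task needs $\Omega(1/\e)$ samples.
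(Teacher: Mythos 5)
Your proposal is correct and takes essentially the same route as the paper's proof: apply \cref{thm:meta} to get $\log(1/\beta)/(\e n)$-robustness, observe that $n \le C\log(1/\beta)/\e$ forces robustness at a constant level $\Omega(1/C)$, and then invoke the $\Omega(d^2)$ robust covariance-testing lower bound of \cite{diakonikolas2021sample} to force $n \ge C' d^2$. Your added bookkeeping about the $1-\beta^{\Omega(1)}$ success probability and the constant-$\beta$ regime is a careful refinement of a point the paper's two-line argument leaves implicit.
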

\begin{proof}
    \cite{diakonikolas2021sample} shows that for every $c > 0$ there is $c' > 0$ such that any $c$-robust algorithm for distinguishing $\Sigma = I$ from $\|\Sigma - I\| > 1/2$ requires at least $c' d^2$ samples.
    Any $\e$-private algorithm which accomplishes this task with probability at least $1-\beta$ will be $\log(1/\beta) / (\e n)$-robust, by Theorem~\ref{thm:meta}.
    If $n \leq C \log(1/\beta)/\e$, then this algorithm is $\Omega(1/C)$-robust.
    Hence, there is $C'$ such that it must use $n \geq C' d^2$ samples.
\end{proof}
\section{Sparse Private Mean Estimation: Techniques}
\label{sec:sparse-mean}
In this section, we overview ideas which go into our results on sparse mean
estimation, starting from the information-theoretic results, and moving on to
computational barriers and polynomial-time algorithms.

\paragraph{Information-theoretic bounds}
In the absence of computational considerations, the landscape for
\emph{Gaussian} sparse mean estimation can be understood via standard tools: a
packing-based lower bound, and a matching (exponential-time) mechanism can be
constructed as a direct corollary of existing results in the literature on
private hypothesis selection~\cite{bun2019private}. For completeness, we carry
out those in \cref{sec:sparse-inf-theory}.

In the \emph{heavy-tailed} case, assuming only that the samples $X_1,\ldots,X_n$
are drawn from a distribution with $k$-sparse mean $\mu$ and with bounded
covariance, one can no longer construct a small cover of the set of possible
distributions; this precludes an approach as general as hypothesis selection
from directly applying. Instead, we design a mechanism which first selects a
subset of $k$ coordinates, then hands off to a non-sparse mean estimation
mechanism run just on those coordinates.

To select the coordinates, we use the exponential mechanism.
To define a score function, we take inspiration from recent ideas in
high-dimensional statistics~\cite{lugosi2019sub} using empirical quantiles of
univariate projections of the samples. The key idea for coordinate selection is
to restrict attention to projection in sparse directions. For samples
$X_1,\ldots,X_n$, we define the following score function on subsets of
coordinates $T \subseteq [d]$:
\begin{equation}\label{eq:sparse-score-fn-bcov}
    S(\{X_i\}_{i\leq n}, T; L) = \max_{v\in\mathbb{R}^k, \|v\|_2=1}
                    \sum_{i=1}^n \1\{ v^\top X_{iT} \geq L\},
\end{equation}
parametrized by a scalar threshold $L$\footnote{For a sample $X\in\R^d$, let
$X_T$ to be the projection onto the basis vectors with indices in the set
$T\subseteq [d]$.}. Then we sample $T$ from the distribution $\Pr(T) \propto
\exp(\e S(\{X_i\}_{i \leq n}, T ; L)$. Since $S$ has sensitivity
$\Delta(S)=1$,\footnote{That is, it changes by at most $1$ when any sample is
exchanged for another.} the resulting mechanism satisfies $\e$-DP. We show using
standard concentration tools that if $n \gg O(k\log(d) +
\log(1/\beta)/(\alpha^2\epsilon))$ then this mechanism identifies a subset $T$
containing all but $\alpha$ of the $\ell_2$-mass of $\mu$, with probability at
least $1-\beta$. To do so, we choose the threshold $L$ such that with high
probability all ``bad'' subsets containing a small portion of the $\ell_2$ mass of $\mu$ have ``low'' score
compared to ``good'' ones for which $\|\mu_T\|\approx\|\mu\|$; finally, we use bucketing to control the variance of
samples such that we can set $L$ as close to zero as possible without allowing
``bad'' subsets to achieve high score.

We then delegate the mean estimation on the candidate set of coordinates to any information-theoretically optimal (non-sparse) mean estimation mechanism.
For the matching lower bound, standard packing-based arguments suffice~(see
\cref{prop:sparse-lb-bcov} for a formal argument).

\paragraph{Computational barrier}
Now we turn to the landscape when we require polynomial-time.
Starting with barriers, we first show how to get~\cref{thm:sparse-lb} from our
meta-theorem.
\begin{proof}[Proof of~\cref{thm:sparse-lb}]
    First of all, since $n \geq \log(1/\beta)^2 / \e^2$, there exists a value $m$ in the given set.
    Now suppose a polynomial-time private algorithm exists tolerating $n \leq k^2 m^2 / (\alpha^4 \poly \log(d,1/\e,1/\alpha))$.
    By adjusting parameters $\beta, \e$ to artificially weaken the guarantees as necessary, we may assume that $m(\beta,n,\e) = \log(1/\beta) / (\e n)$.
    Now, Theorem 3.1 of \cite{brennan2020reducibility} shows that no algorithm for sparse mean estimation under these parameters can tolerate an $\eta$-fraction of adversarial corruptions with $k^2 \eta^2 / \alpha^4 \gtrsim n$, where $\gtrsim$ hides polylogarithmic factors.
    But from~\cref{thm:meta} our hypothesized private algorithm tolerates an
    $\eta = \log(1/\beta) / (\e n)$ fraction of adversarial corruptions, which
    is a contradiction.
\end{proof}

\paragraph{Overview of SoS Algorithm for Sparse Mean Estimation (\cref{thm:sparse-alg})}
We remain informal in this section and defer mathematical rigor to \cref{sec:strongly-private-sparse}.
For now, let $\alpha = \Theta(1)$.
A standard trick reduces from $\alpha \ll 1$ to this case.

As with prior work on both private and robust mean estimation (e.g. \cite{cherapanamjeri2019fast,hopkins2021efficient}) our algorithm produces a series of iterates $x_0,x_1,\ldots,x_T$ for $T = O(\log d)$, where $x_0$ is the origin, with the invariant that $\|x_t - \mu\| \leq 0.9 \|x_{t-1} - \mu\|$.
To accomplish this, given $x_{t-1}$ with $\|\mu - x_{t-1}\| \gg 1$, and samples $X_1,\ldots,X_n \sim \cN(\mu,I)$, we find a unit vector $v$ such that $\iprod{v,\mu - x_{t-1} } \geq 0.9 \|\mu - x_{t-1}\|$; then we could take $x_t = x_{t-1} + \Omega(\|\mu - x_{t-1}\|) v$.

Unlike prior works, for reasons we will see shortly, we also need the invariant
that $x_t$ is $k$-sparse.  So, we actually take $x_t$ to be $x_{t-1} + \Omega(\|\mu - x_{t-1}\|) v$
with all but the largest-magnitude $k$ coordinates set to $0$; we show that this thresholding
step cannot increase the distance to $\mu$ by too
much (\cref{lem:sparsify-iterates}).

\textbf{Picking a gradient in exponential time: }
To find such a gradient vector $v$, we use the SoS exponential mechanism \cite{hopkins2021efficient}.
Let us first see how we would pick a gradient vector $v$ using the (non-SoS) exponential mechanism, but allowing exponential running time.
For a given dataset $\cX = X_1,\ldots,X_n$, the goal is to find a \emph{score function} $s_{\cX}(v) \in \R$ which assigns each $(2k)$-\emph{sparse} unit vector a score, such that for neighboring datasets $\cX,\cX'$ we always have $|s_{\cX}(v) - s_{\cX'}(v)| \leq 1$ (``bounded differences'').
Here we choose $2k$-sparse because $\mu-x_{t-1}$ is itself $2k$ sparse.
Then outputting a random $v$, where each is chosen with probability $\propto \exp(\e s_{\cX}(v))$, gives an $\e$-DP mechanism.
The goal is that $v$s with high scores are closer to $\mu - x_{t-1}$.

A good choice of score function turns out to be $s_{\cX}(v) = \sum_{i \leq n} \mathbf{1}\{\iprod{v, X_i - x_{t-1}} \geq \|x_{t-1} - \mu\| - O(1)\}$ -- using standard concentration of measure one can show that as long as $n \gg k \log d$, for $v$ s.t. $s_{\cX}(v) \geq 0.9n$ we will have $\iprod{v,\mu-x_{t-1}} \geq 0.9 \|\mu-x_{t-1} \|$ (``utility'').
Furthermore, if one were to sample a uniformly random $k$-sparse unit vector, it would satisfy $\iprod{v, \mu - x_{t-1}} \geq 0.9 \|\mu - x_{t-1}\|$ with probability $d^{-O(k)}$ (since $\mu - x_{t-1}$ is $2k$-sparse).
So, the distribution given by $\Pr(v) \propto \exp(\e s_{\cX}(v))$ puts $1-\beta$ probability on $v$ with $s_{\cX}(v) \geq 0.9n$, so long as $n \gg (k \log d + \log(1/\beta))/\e$, since it ``boosts'' the probabilities of these high scoring vectors by a factor of $\exp(\Omega(d \log k)) \geq d^{\Omega(k)}$.

Here we have crucially used the fact that $x_{t-1}$, and hence $\mu - x_{t-1}$, is sparse: otherwise, the gradient $v$ we need to select would not be sparse, and we would need to use exponential mechanism to sample $v$ from a bigger set.
This, in turn, would require us to draw more samples $n$, to ensure that the score function is well-behaved for a bigger set of vectors, because the probability of $\iprod{v,\mu-x_{t-1}} \geq 0.9 \|\mu -x_{t-1}\|$ for uniformly-random $v$ would be $\ll d^{-k}$.

Of course, the major drawback of the above is that it is not clear how to sample from the necessary distribution of $v$s efficiently -- in fact, doing so would violate the lower bound of \cref{thm:sparse-lb}.
There is also a second drawback: to evaluate the score function given above, we would need to know $\|\mu - x_{t-1}\|$; however, we are able to adapt the strategy of \cite{hopkins2021efficient} for this task to the sparse setting, re-using several of the ideas below.

\textbf{From exponential to polynomial time with SoS exponential mechanism: }
The SoS exponential mechanism allows potentially exponential-time instances of the exponential mechanism to be converted into polynomial time algorithms, so long as (a) the bounded-differences and utility properties of the score function can be proved in a certain restricted proof system (the \emph{SoS proof system}), and (b) the set over which the exponential mechanism is run is convex.

\emph{Convexity: }
The $2k$-sparse unit vectors -- used by the above exponential-time algorithm -- do not form a convex set.
A natural idea is to relax from the $2k$-sparse vectors to the (scaled) $\ell_1$ ball.
This creates a substantial difficulty: the set $\left\{ v \, : \, \|v\|_1 \leq O\left(\sqrt{k}\right) \right\}$ has much more volume near the origin than the set of $2k$-sparse unit vectors.
In particular, it is no longer true that $\iprod{v,\mu-x_{t-1}} \geq 0.9 \|\mu -x_{t-1}\|$ with probability $d^{-O(k)}$ for uniformly-random $v$; this probability will be exponentially small in $d$.

To fix this, we ``fatten'' the $\ell_1$ ball: we use the exponential mechanism
over the set $\cC = \left\{ v \, : \exists w \text{ s.t. } \|v-w\| \leq 0.01, \|w\|
\leq 1, \|w\|_1 \leq O\left(\sqrt{k}\right) \right\}$.
While perhaps counterintuitive that \emph{adding} vectors is helpful here, it is
possible (\cref{lem:l1-fattening})
to show roughly the following statement: over randomly-chosen $v$ in this set, $\Pr(\iprod{v,\mu - x_{t-1}} \geq 0.9 \|\mu - x_{t-1}\|) \geq e^{-O(d/\sqrt{k})} \cdot d^{-O(k^2)} = d^{-O(k^2)}$ for $k \geq d^{0.4}$.
To see this, note that all $w$ with $\|w - \tfrac{\mu - x_{t-1}}{\mu - x_{t-1}}\| \leq 0.01$ are in $\cC$ and have $\iprod{w, \mu-x_{t-1}} \geq 0.0 \|\mu - x_{t-1}\|$, and, using Sudakov minoration, $\cC$ can be covered by $d^{O(k^2)}$ $\ell_2$-balls of radius $1/\sqrt{k} + 0.01$.
So at least a $d^{-O\left(k^2\right)} \left( \tfrac {0.01} {0.01 + 1/\sqrt{k}} \right)^d = d^{-O(k^2)} \exp(-O(d/\sqrt{k}))$-fraction of the volume of $\cC$ lies on such ``good'' $w$s.
So, as long as $n \gg (k^2 \log d + \log(1/\beta))/\e$ and $k \geq d^{0.4}$, the arguments we used for the exponential-time method will still work.

\emph{Utility and Bounded-Differences in SoS: }
Finally, we turn to (a), the need to capture the proofs of the bounded-differences and utility in the SoS proof system.
Here we take ``intuitively simple'' as a proxy for ``expressible in SoS,'' deferring technical definitions to \cref{sec:strongly-private-sparse}.
The bounded-differences property for sum-of-threshold based score functions like $s_{\cX}$ above has a standard SoS proof \cite{hopkins2020mean}, so we won't address it further here.

The concentration of measure arguments we referred to above to establish utility are not captured by the SoS proof system!
So we actually need to devise a new proof of utility for the score function $s_{\cX}$.
The key step is a proof that if $n \gg (\log d)^{O(1)}(k^2 + \log(1/\beta))$ then with probability $1-\beta$, for every unit vector $w$ with $\|w\|_1 \leq \sqrt{k}$,  we have $\sum_{i \leq n} \mathbf{1}(\iprod{X_i - \mu,w} \gg 1) \leq 0.1n$.
To see that this implies utility, note that if $v$ has $s_{\cX}(v) \geq 0.9n$, then there exists some $i$ such that $\iprod{v, X_i - x_{t-1}} \geq \|\mu - x_{t-1}\| - O(1)$ but $\iprod{v, X_i - \mu} \leq O(1)$.
Then $\iprod{v,\mu - x_{t-1}} = \iprod{v,X_i - x_{t-1}} + \iprod{v,\mu-X_i} \geq \|\mu - x_{t-1}\| - O(1)$.

By a standard bounded-differences argument, it suffices to prove that $\E \max_w \sum_{i \leq n} \textbf{1}(\iprod{X_i - \mu,w} \gg 1) \leq 0.01n$.
Letting $M = \sum_{i \leq n} (X_i - \mu)(X_i - \mu)^\top$, by Cauchy-Schwarz, the quantity on the left-hand side is at most $0.01 \sqrt{n} \cdot (\E \max_w \iprod{M, ww^\top})^{1/2}$.
Splitting $M = M_{\text{diag}} + M_{\text{off-diag}}$, and applying Holder's inequality, $\iprod{M,ww^\top} \leq \|M_{\text{diag}}\|_\infty \|w\|_2^2 + \|M_{\text{off-diag}}\|_\infty \|w\|_1^2$, where $\|\cdot\|_\infty$ is the entry-wise $\ell_\infty$ norm.
The important term here turns out to be the second one; it is possible to bound $\E \|M_{\text{off-diag}}\|_\infty \leq (\log d)^{O(1)} \sqrt{n}$, for an overall bound of $0.01 n^{3/4} k$ (up to log factors), which gives the desired bound if $n \gg k^2$.
See \cref{lem:covariance-subset} and \ref{lem:sos-utility}
for the formal version of this argument.

\paragraph{Fast algorithms.} Finally,
for~\cref{alg:k2_logR_algo_peeling}~(\texttt{Threshold} in
\cref{table:algorithm-results}), we use standard DP tools to guarantee privacy
and utility --- the exponential mechanism~\cite{mcsherry2007mechanism}
with a sensitivity-$1$ score function in a coordinate-wise fashion, followed by
a black-box application of a univariate mean estimator.
Formal arguments are given in \cref{sec:experimental-details}.
\section{Information-theoretic Results for Private Sparse
         Mean Estimation}\label{sec:sparse-inf-theory}

In this section we focus on algorithms and lower bounds for private sparse mean
estimation in the absence of computational considerations. First, we consider
the Gaussian case; then we develop algorithms for the more general heavy-tailed
case where we only assume that the data has bounded covariance.
\subsection{Gaussian Private Sparse Mean Estimation}

\begin{proposition}\label{prop:sparse-exp-gaussian}
  For every $\alpha, \e, \beta, R > 0$ and small-enough
  $\eta > 0$ there is an $\e$-DP, mechanism taking  $ n \gg
  \frac{k\log(d)}{\alpha^2} + \frac{\log(1/\beta) +
  k\log\left(Rk/\alpha\right)}{\alpha\epsilon} $ $\eta$-corrupted samples
  from $\cN(\mu, I)$ with $\mu \in \R^d$, $\|\mu\|_0 \leq k$, and $\|\mu\| \leq
  R$ and produces $\hat{\mu}$ such that $\|\mu - \hat{\mu}\| \leq \alpha +
  O(\eta)$, with probability $1-\beta$.
\end{proposition}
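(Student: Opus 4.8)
The plan is to obtain this as an instance of private hypothesis selection \cite{bun2019private}, run over a finite cover of the Gaussian family $\cD_{k,R} := \{\cN(\mu,I) : \mu \in \R^d, \|\mu\|_0 \le k, \|\mu\| \le R\}$, and to get the robustness to $\eta$-corruptions from \cref{thm:meta}. First a trivial reduction: if $\|\mu\| \le \alpha$, output $\hat\mu = 0$, so assume $\|\mu\| > \alpha$.

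To place the $R$- and $d$-dependences as in the statement I would proceed in two stages. \emph{Stage one} is a private dimension reduction: run the exponential mechanism (which is $\e$-DP since the relevant score function, of the form \eqref{eq:sparse-score-fn-bcov}, has sensitivity $1$) to pick a set $T\subseteq[d]$, $|T|=k$, with the threshold tuned --- using bucketing --- so that every subset retaining all but $\le \alpha/4$ of the $\ell_2$-mass of $\mu$ strictly outscores every subset missing $> \alpha/2$ of it; the exponential-mechanism utility bound then delivers, with probability $1-\beta/3$, a set $T$ with $\|\mu_{[d]\setminus T}\| \le \alpha/4$ once $n$ exceeds roughly $\tfrac{k\log d}{\alpha^2} + \tfrac{k\log d + \log(1/\beta)}{\alpha\e}$, the first term being exactly the price of locating the support among the $\binom{d}{k}\le d^k$ possibilities. \emph{Stage two} treats the residual $k$-dimensional Gaussian mean estimation problem with a-priori $\ell_2$-bound $R$: run private hypothesis selection over a cover $\cH_T$ of $\{\cN(\nu,I):\operatorname{supp}(\nu)\subseteq T,\ \|\nu\|\le R\}$ built from an $(\alpha/C)$-net of the radius-$R$ ball in $\R^T$, wrapped in the standard geometric search over the scale of $R$ so that at most $\log|\cH_T| = O(k\log(Rk/\alpha))$ hypotheses are live at a time. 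Since $\mu_T$ is within $\alpha/4$ of a net point $\nu^\star$, we have $\opt \le \dtv(\cN(\nu^\star,I),\cN(\mu_T,I)) = O(\alpha)$, so the utility guarantee of \cite{bun2019private} returns $\hat\mu$ with $\dtv(\cN(\hat\mu,I),\cN(\mu_T,I)) = O(\alpha)$, hence $\|\hat\mu-\mu_T\|=O(\alpha)$, once $n$ exceeds roughly $\tfrac{\log(1/\beta)+k\log(Rk/\alpha)}{\alpha^2}+\tfrac{\log(1/\beta)+k\log(Rk/\alpha)}{\alpha\e}$. Composing the two stages (budget $\e/2$ each), combining with $\|\mu_{[d]\setminus T}\|\le\alpha/4$, and rescaling $\alpha$ by a constant gives an $\e$-DP mechanism achieving $\|\hat\mu-\mu\|\le\alpha$ with probability $1-\beta$ at the stated sample complexity (the $\log(1/\beta)/\alpha^2$ contribution being absorbed whenever $\log(1/\beta)\lesssim k\log d$).

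For the corruption guarantee I would run the above mechanism $M$ targeting failure probability $\beta_0 := \min(\beta^{c_0}, e^{-c_1\e\eta n})$ for small constants $c_0,c_1$; this only inflates $n$ by a constant factor since $\tfrac{\log(1/\beta_0)}{\alpha\e}\lesssim\tfrac{\log(1/\beta)}{\alpha\e}+\tfrac{\eta n}{\alpha}$ and ``small-enough $\eta$'' allows $\eta=O(\alpha)$, so $\eta n/\alpha\lesssim n$. On clean data $M$ lands in the good set $G_\mu := \{\hat\mu:\|\hat\mu-\mu\|\le\alpha\}$ (which depends only on $\mu$) with probability $\ge 1-\beta_0$, so by \cref{thm:meta} (with $\delta=0$), and since $\eta\le c\,\log(1/\beta_0)/(\e n)$ by the choice of $\beta_0$, on any adaptively $\eta$-corrupted dataset (\cref{def:robustness}) the output is still in $G_\mu$ with probability $\ge 1-\beta_0^{\Omega(1)}\ge 1-\beta$. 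This in fact yields error $\le\alpha$; the slightly weaker bound $\alpha+O(\eta)$ in the statement leaves slack for $\eta$ up to an absolute constant, where instead one uses that the Scheffé tournament underlying hypothesis selection is inherently robust to $\eta$-corruption with accuracy degrading by $O(\eta)$, again invoking \cref{thm:meta} for adaptivity.

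I expect the cover bookkeeping to be the main obstacle: running hypothesis selection on a single $(\alpha/C)$-net of all of $\cD_{k,R}$ yields only $n\gtrsim (k\log(Rd/\alpha)+\log(1/\beta))(\alpha^{-2}+(\alpha\e)^{-1})$, so one genuinely needs the two-stage split plus the geometric search in $R$, and then must verify that privacy composition together with the three regimes of $n$ collapse to exactly the displayed expression. A secondary point --- shared with the bounded-covariance estimator of this section --- is tuning the threshold $L$ and the bucketing in the coordinate-selection score so that ``good'' and ``bad'' subsets remain separated uniformly over the entire range $\|\mu\|\le R$, not merely when $\|\mu\|=\Theta(1)$.
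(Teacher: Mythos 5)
Your overall architecture (exponential-mechanism support selection, then dense private estimation on the selected coordinates, then robustness via \cref{thm:meta}) is a legitimate route --- it mirrors the paper's \emph{heavy-tailed} algorithm (\cref{thm:sparse-exp-bcov}) --- but as written it does not reach the stated sample complexity, and the two places where it falls short are exactly where the paper's proof uses a different, and essentially necessary, idea. First, your stage-two bound explicitly contains $\tfrac{k\log(Rk/\alpha)}{\alpha^2}$ (and $\tfrac{\log(1/\beta)}{\alpha^2}$), whereas the proposition pays $k\log(Rk/\alpha)$ only against $\tfrac{1}{\alpha\e}$ and $R$ may be arbitrarily large. The ``geometric search over the scale of $R$'' does not remove this: any $(\alpha/C)$-net of the radius-$R$ ball in $\R^k$ has $\log$-cardinality $\Omega(k\log(R/\alpha))$, and finite-class hypothesis selection whose statistical term scales with $\log|\cH|$ will charge that against $1/\alpha^2$. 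The paper avoids this entirely by invoking the VC-dimension form of private hypothesis selection (\cref{thm:bun_hypothesis_selection_from_VC}): the $1/\alpha^2$ term there depends on the VC dimension of the Scheff\'e sets, which for spherical Gaussians with $k$-sparse means are $2k$-sparse linear threshold functions and hence have VC dimension $O(k\log d)$ (\cref{lem:VC-dim-sparse-gaussians}, via \cite{ahsen2019approach}); only the $1/(\alpha\e)$ term sees the cover size $O(k\log d + k\log(Rk/\alpha))$ from \cref{lem:cover_gaussians_with_sparse_mean}. With that decoupling, a \emph{single-stage} hypothesis selection over the cover of all $k$-sparse bounded-mean Gaussians already gives the claimed bound (plus the TV-to-$\ell_2$ conversion for spherical Gaussians), and no support-selection stage or search in $R$ is needed. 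This VC/Scheff\'e observation is the missing ingredient in your write-up; you yourself flag the ``collapse to exactly the displayed expression'' as unresolved, and it is.

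Second, your stage-one claim is also unsubstantiated as stated: tuning the threshold ``using bucketing'' is the paper's heavy-tailed analysis, and that analysis yields $n\gtrsim \tfrac{k\log d+\log(1/\beta)}{\alpha^2\e}$ (the score gap between good and bad subsets is a constant fraction of the number of buckets $n\alpha^2$, so $1/\alpha^2$ and $1/\e$ multiply), not the additive split $\tfrac{k\log d}{\alpha^2}+\tfrac{k\log d+\log(1/\beta)}{\alpha\e}$ you assert; a $\tfrac{k\log d}{\alpha^2\e}$ term is not dominated by the proposition's bound. Getting the additive Gaussian rate out of the quantile-score exponential mechanism would need a genuinely Gaussian-specific separation argument that you do not supply. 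Your robustness paragraph, by contrast, is fine and in fact more detailed than the paper's (which simply applies \cref{thm:meta} to the clean-data guarantee, noting that $n\approx\log(1/\beta)/(\alpha\e)$ makes the tolerated corruption rate $O(\alpha)$, consistent with ``small-enough $\eta$'' and error $\alpha+O(\eta)$).
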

\cref{prop:sparse-exp-gaussian} follows quickly from combining the private
hypothesis selection mechanism of \cite{bun2019private} with \cref{thm:meta} to
establish robustness. For convenience, we restate the result
of~\cite{bun2019private}. Before that, we introduce the notion of a Scheff\'e set.

\begin{definition}[Scheff\'e set]
    Let $\mathcal{H}$ be a set of distributions on $\mathcal{X}$.
     The Scheff\'e set for $H, H'\in\mathcal{H}$ is defined as
     $w(H, H') = \left\{x\in\mathcal{X} \mid H(x) > H'(x)\right\}$. The set $\mathcal{W}$
     of Scheff\'e sets for $\mathcal{H}$ is given by $\mathcal{W}(\mathcal{H}) =
     \left\{w(H, H') \mid H, H' \in \mathcal{H}\right\}$.
\end{definition}

\begin{theorem}[{{\cite[Theorem 4.1]{bun2019private}}}]\label{thm:bun_hypothesis_selection_from_VC}
Suppose $\mathcal{H}$ is a set of distributions on $\mathcal{X}$. Let $d$ be the
VC dimension of the set of indicators of the Scheff\'e sets of $\mathcal{H}$.
Then there exists an $(\epsilon, \delta)$-differentially private mechanism with
the following guarantees. Suppose $D=\{X_1, \ldots, X_n\}$ is a set of private
samples independently drawn from an unknown distribution $P$ and suppose there
exists some $H^*$ such that $\dtv(P, H^*)\leq \alpha$. If 
$n = \Omega\left( \frac{d + \log(1/\beta)}{\alpha^2} +
                  \frac{\log(|\mathcal{H}|/\beta) +
                  \min\{\log(|\mathcal{H}|), \log(1/\delta)\}}
                  {\alpha\epsilon} \right)$, then the output
$\hat H$ of the algorithm is such that $\dtv(P,\hat H)\leq 7\alpha$ with
probability at least $1-\beta$.
\end{theorem}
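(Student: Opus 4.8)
The plan is to realize the claimed mechanism as a \emph{privatized minimum-distance (Scheffé) estimator} and to analyze it by decoupling the \emph{statistical} error, governed by the VC dimension $d$, from the \emph{privacy} error, governed by $\log|\cH|$ (or $\log(1/\delta)$). Concretely, let $\hat P$ be the empirical distribution of $D=\{X_1,\dots,X_n\}$ (so $\hat P(w)$ is the fraction of samples landing in $w$), and for each $H\in\cH$ define the score $S(D,H) = -\sup_{w\in\mathcal{W}(\cH)}|\hat P(w)-H(w)|$, i.e.\ minus the largest disagreement between $H$ and the empirical measure over Scheffé sets. For any \emph{fixed} set $w$, changing one sample moves $\hat P(w)$ by at most $1/n$, so $S(\cdot,H)$ has $\ell_\infty$ sensitivity $1/n$; running the exponential mechanism of \cite{mcsherry2007mechanism} with score $S$ over $\cH$ is therefore $\e$-DP, and since the data touch the mechanism only through $\hat P$, this is the only place privacy is spent. (To obtain the $\min\{\log|\cH|,\log(1/\delta)\}$ branch of the bound one feeds the same scores to an $(\e,\delta)$-DP private-selection primitive of GAP-MAX / choosing-mechanism type, whose utility error can be made to scale with $\log(1/\delta)$ in place of $\log|\cH|$; only the size of the privacy error $\gamma$ below changes.)

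For the statistical step, the Scheffé sets are assumed to have VC dimension $d$, so the Vapnik--Chervonenkis uniform-convergence inequality gives: once $n\gg (d+\log(1/\beta))/\alpha^2$, with probability $1-\beta$ over $D$ the event $E := \{\, \sup_{w\in\mathcal{W}(\cH)}|\hat P(w)-P(w)|\le\alpha \,\}$ holds. On $E$, using $\dtv(P,H^*)\le\alpha$ together with $\dtv(P,H^*)=\sup_A|P(A)-H^*(A)|$, the triangle inequality gives $-S(D,H^*)=\sup_w|\hat P(w)-H^*(w)|\le 2\alpha$, hence $\mathrm{OPT}(D)\ge S(D,H^*)\ge -2\alpha$.

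For the privacy step, the exponential mechanism (or the selection primitive) outputs, with probability $1-\beta$, some $\hat H$ with $S(D,\hat H)\ge\mathrm{OPT}(D)-\gamma$ where $\gamma = O\!\left( (\log(|\cH|/\beta)+\min\{\log|\cH|,\log(1/\delta)\})/(n\e) \right)$; taking $n$ above the second term of the theorem forces $\gamma\le 2\alpha$, so on $E$ we get $\sup_w|\hat P(w)-\hat H(w)|\le 4\alpha$. Now apply the defining property of the Scheffé set $w^\star := \{x:H^*(x)>\hat H(x)\}\in\mathcal{W}(\cH)$, namely $\dtv(H^*,\hat H)=H^*(w^\star)-\hat H(w^\star)$:
\[
  \dtv(H^*,\hat H) \;\le\; |\hat P(w^\star)-H^*(w^\star)| + |\hat P(w^\star)-\hat H(w^\star)| \;\le\; 2\alpha+4\alpha,
\]
and therefore $\dtv(P,\hat H)\le\dtv(P,H^*)+\dtv(H^*,\hat H)\le 7\alpha$. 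A union bound over the two failure events, with $\beta$ rescaled by a constant, gives the stated conclusion.

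The sensitivity computation and the VC uniform-convergence bound are routine, and the two-triangle-inequality loss producing the constant $7$ is the standard Scheffé-tournament calculation. The part that actually requires care is matching the \emph{exact} shape of the privacy term: the $\log(|\cH|/\beta)$ piece comes for free from the exponential mechanism, but the $\min\{\log|\cH|,\log(1/\delta)\}$ alternative is \emph{not} delivered by it and needs a GAP-MAX-style $(\e,\delta)$-DP selection subroutine, for which one must re-verify a utility guarantee of the same form as the one used above. One should also be careful to evaluate $\hat P$ only on the data-independent family $\mathcal{W}(\cH)$ of Scheffé sets --- this is precisely what keeps the score sensitivity at $1/n$ and avoids a spurious $|\mathcal{W}(\cH)|$-sized union bound inside the privacy analysis.
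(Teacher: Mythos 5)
This statement is quoted from \cite[Theorem 4.1]{bun2019private}; the paper you were given does not prove it, so your argument can only be compared with the original proof of Bun, Kamath, Steinke, and Wu. Your reconstruction is correct and follows essentially the same strategy as theirs: VC uniform convergence over the Scheff\'e sets to control the statistical error, plus a private selection step over a low-sensitivity, Scheff\'e-based score to control the privacy error, with the two triangle inequalities yielding exactly the constant $7$ (your bookkeeping $-S(D,H^*)\le 2\alpha$, exponential-mechanism slack $\le 2\alpha$, hence $\dtv(H^*,\hat H)\le 6\alpha$ and $\dtv(P,\hat H)\le 7\alpha$ checks out). The main difference is in the score: you privatize the Devroye--Lugosi minimum-distance statistic $\sup_{w\in\mathcal{W}(\cH)}|\hat P(w)-H(w)|$ directly, whereas \cite{bun2019private} use a pairwise Scheff\'e-tournament score (a margin counting how many sample changes are needed for $H$ to lose a contest); both have sensitivity $1/n$ and support the same utility argument, so this is a cosmetic rather than substantive deviation. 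One further observation in your favor: for the theorem \emph{as restated here}, the $\min\{\log|\cH|,\log(1/\delta)\}$ term appears additively on top of $\log(|\cH|/\beta)$ in the hypothesis on $n$, so the pure-$\e$-DP exponential mechanism (which is a fortiori $(\e,\delta)$-DP) already meets the stated guarantee; the GAP-MAX subroutine you correctly flag as the delicate ingredient is only needed to obtain the stronger form in \cite{bun2019private} where $\log(1/\delta)$ can \emph{replace} $\log|\cH|$, not merely accompany it.
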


Intuitively,~\cref{thm:bun_hypothesis_selection_from_VC} tells us that if we
have a hypothesis class of candidate distributions with a small cover, an
example of which are Gaussians with bounded mean and known covariance, then we
can privately select a distribution close to an unknown target distribution in
total variation distance with only a mild additional cost of privacy. To make
this more precise, we need the following definition.

\begin{definition}
A set $\acov$ of distributions is an $\alpha$-cover for a set of
distributions $\mathcal{H}$ if for every $H\in\mathcal{H}$, there is some
$C\in\acov$ such that $\dtv(H, C)\leq \alpha$.
\end{definition}

In order to apply~\cref{thm:bun_hypothesis_selection_from_VC} to our use case, we
first show a cover for Gaussian distribution with $k$-sparse bounded means.
\begin{lemma}\label{lem:cover_gaussians_with_sparse_mean}
Let $\mathcal{S}$ be the set of
Gaussian distributions $\cN(\mu, I)$ in $d$ dimensions with $k$-sparse
$\mu$ such that $\|\mu\|_2\leq R$. Then $\mathcal{S}$ admits an $\alpha$-cover
of size
\begin{equation*}
    O\left(
        \binom{d}{k}\left(\frac{Rk}{\alpha}\right)^k
     \right).
\end{equation*}
\end{lemma}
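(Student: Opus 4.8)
The plan is to reduce constructing a total-variation $\alpha$-cover of $\mathcal{S}$ to constructing a Euclidean net on the set of admissible mean vectors, and then to build that net by taking, for each possible support set, a standard volumetric net of a $k$-dimensional ball.

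First I would record the elementary fact that for $\mu,\mu'\in\R^d$ the Kullback--Leibler divergence satisfies $D_{\mathrm{KL}}(\cN(\mu,I)\,\|\,\cN(\mu',I))=\tfrac12\|\mu-\mu'\|_2^2$, so Pinsker's inequality gives $\dtv(\cN(\mu,I),\cN(\mu',I))\le\tfrac12\|\mu-\mu'\|_2$. Consequently, if $N\subseteq\R^d$ is any set of $k$-sparse vectors of norm at most $R$ with the property that every $k$-sparse $\mu$ with $\|\mu\|_2\le R$ lies within Euclidean distance $\alpha$ of some element of $N$, then $\acov:=\{\cN(\nu,I):\nu\in N\}$ is an $\alpha$-cover of $\mathcal{S}$ of size $|N|$. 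So it suffices to bound $|N|$.

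Next I would build $N$ support by support. For a set $T\subseteq[d]$ with $|T|=k$, the vectors supported on $T$ with $\ell_2$-norm at most $R$ form a copy of the radius-$R$ Euclidean ball in $\R^k$, which admits an $\alpha$-net $N_T$ with $|N_T|\le(1+2R/\alpha)^k$ by the usual volume-packing argument. Set $N=\bigcup_{|T|=k}N_T$, so that $|N|\le\binom{d}{k}(1+2R/\alpha)^k$. Given any $k$-sparse $\mu$ with $\|\mu\|_2\le R$, choose any size-$k$ set $T$ with $\mathrm{supp}(\mu)\subseteq T$ (padding the support arbitrarily if $\|\mu\|_0<k$); then $\mu$ lies in the ball associated with $T$ and hence within distance $\alpha$ of some point of $N_T\subseteq N$, which is exactly what we need.

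Finally I would clean up the parameter dependence. When $\alpha\le R$ we have $(1+2R/\alpha)^k=O((R/\alpha)^k)=O((Rk/\alpha)^k)$, giving the claimed bound; when $\alpha>R$ the single distribution $\cN(0,I)$ already forms an $\alpha$-cover of $\mathcal{S}$, since $\dtv(\cN(\mu,I),\cN(0,I))\le\|\mu\|_2/2\le R/2<\alpha$ for every $\cN(\mu,I)\in\mathcal{S}$, so the bound holds trivially in that regime too. I do not expect a genuine obstacle here: the only points requiring a little care are the constant in the Pinsker-based conversion, the padding of supports of size smaller than $k$, and the large-$\alpha$ edge case; in particular, the slack between the net size $(R/\alpha)^k$ and the stated $(Rk/\alpha)^k$ means we do not even need an optimal net.
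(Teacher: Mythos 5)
Your proof is correct, and its skeleton is the same as the paper's: decompose by the $\binom{d}{k}$ possible supports and cover each support's set of means separately. The difference is in how the per-support cover is produced. The paper invokes Lemma 6.8 of \cite{bun2019private}, which builds a total-variation cover of $k$-dimensional Gaussians with bounded mean as a Cartesian product of $(\alpha/k)$-grids along the coordinate axes, giving $O\bigl((Rk/\alpha)^k\bigr)$ points per support; you instead make the argument self-contained by converting total variation to Euclidean distance on means (via $D_{\mathrm{KL}}(\cN(\mu,I)\,\|\,\cN(\mu',I))=\tfrac12\|\mu-\mu'\|^2$ and Pinsker) and then taking a standard volumetric $\alpha$-net of the radius-$R$ ball in $\R^k$, of size $(1+2R/\alpha)^k$. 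Your route buys a slightly smaller net per support (no $k^k$ factor) and avoids the external citation; the paper's route is shorter on the page because the $k$-dimensional cover is quoted as a black box. Both yield the claimed $O\bigl(\binom{d}{k}(Rk/\alpha)^k\bigr)$ bound, and your handling of supports of size less than $k$ and of the regime $\alpha>R$ is fine (the paper does not bother with the latter since the stated bound is only an upper bound on the cover size it constructs). One cosmetic point: the intermediate assertion $(1+2R/\alpha)^k=O((R/\alpha)^k)$ hides a $3^k$ factor, so it is cleaner to go directly to $(1+2R/\alpha)^k\leq 3^k(R/\alpha)^k=O\bigl((Rk/\alpha)^k\bigr)$, which is all you need.
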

\begin{proof}
First, observe that it suffices to obtain an $\alpha$-cover for each of the
$\binom{d}{k}$ possible choices of the support of $\mu$, and return the union of
the covers. For any particular subset, the problem becomes equivalent to finding
an $\alpha$-cover of the set of non-sparse Gaussian distributions with bounded
mean in $k$ dimensions. By Lemma 6.8 in~\cite{bun2019private}, there exists
an $\alpha$-cover of size $O\left(\frac{Rk}{\alpha}\right)$. For completeness,
we note that the proof of Lemma 6.8 in~\cite{bun2019private} constructs the
$\alpha$-cover by taking the Cartesian product of $(\alpha/k)$-covers in each
standard basis direction, which by the triangle inequality gives an
$\alpha$-cover.
\end{proof}

Covering numbers are tightly related to VC dimension. The final piece we need
for~\cref{prop:sparse-exp-gaussian} is a bound on the VC dimension of the
Scheff\'e sets of the set of $k$-sparse $d$-dimensional Gaussians with identity
covariance. 
\begin{lemma}\label{lem:VC-dim-sparse-gaussians}
   Let $\mathcal{H}_k^d$ be the set of Gaussian distributions with covariance matrix
   $I_d$ and mean $\mu\in\R^d$ such that $\|\mu\|_0 \leq k$. Let
   $\mathcal{W}\left(\mathcal{H}_k^d\right)$ be the set of indicators of
   Scheff\'e sets of $\mathcal{H}_k^d$. Then $\mathcal{W}$ has VC dimension at
   most $4k\log(de) = O(k\log(d))$.
\end{lemma}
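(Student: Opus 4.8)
The plan is to recognize the Scheff\'e sets of $\mathcal{H}_k^d$ as halfspaces with \emph{sparse} normal vectors, and then bound the VC dimension of that larger class by a standard union-over-supports counting argument.

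First I would compute the Scheff\'e sets explicitly. For $H = \cN(\mu,I)$ and $H' = \cN(\mu',I)$ with $\mu,\mu'$ being $k$-sparse, comparing the Gaussian densities gives
\[
  w(H,H') \;=\; \Bigl\{ x \in \R^d \;:\; \iprod{x,\, \mu-\mu'} > \tfrac12\bigl(\|\mu\|^2 - \|\mu'\|^2\bigr) \Bigr\},
\]
an affine halfspace whose normal vector $\mu-\mu'$ has at most $2k$ nonzero coordinates. Hence every set in $\mathcal{W}(\mathcal{H}_k^d)$ lies in the class $\mathcal{T}$ of indicators of affine halfspaces of $\R^d$ with (at most) $2k$-sparse normal, and since VC dimension is monotone under set inclusion it suffices to show $\mathrm{VCdim}(\mathcal{T}) \le 4k\log(de)$. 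Note we only need this containment: the offsets realized by actual Scheff\'e sets are constrained, but dropping that constraint can only increase the VC dimension, so there is nothing lost.

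Next I would decompose $\mathcal{T}$ by support: $\mathcal{T} = \bigcup_{S} \mathcal{T}_S$, the union over the $\binom{d}{2k}$ subsets $S \subseteq [d]$ with $|S| = 2k$, where $\mathcal{T}_S$ consists of the halfspaces whose normal is supported on $S$. Restricting attention to the coordinates in $S$ identifies $\mathcal{T}_S$ with the class of affine halfspaces of $\R^{2k}$, which has VC dimension exactly $2k+1$. Then the usual Sauer--Shelah counting closes the argument: if $\mathcal{T}$ shatters a set $A$ with $|A| = n$, the number of distinct subsets of $A$ cut out by $\mathcal{T}$ equals $2^n$, but it is also at most
\[
  \sum_{S}\Bigl|\{\, A \cap h : h \in \mathcal{T}_S \,\}\Bigr| \;\le\; \binom{d}{2k}\sum_{j=0}^{2k+1}\binom{n}{j} \;\le\; \Bigl(\tfrac{ed}{2k}\Bigr)^{2k}\Bigl(\tfrac{en}{2k+1}\Bigr)^{2k+1}.
\]
Taking logarithms and solving the resulting inequality for $n$ gives $n = O\bigl(k\log(d/k)\bigr) = O(k\log d)$; carrying the constants through yields the stated bound $n \le 4k\log(de)$, with the degenerate regime $d = O(k)$ handled separately (there $\mathrm{VCdim}(\mathcal{T}) \le 2k+1$, which already satisfies the claimed bound).

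I do not expect a genuine obstacle here: once the Scheff\'e sets are identified as $2k$-sparse halfspaces, everything is textbook. The only points needing a little care are (i) the elementary density computation confirming that the Scheff\'e-set normal $\mu - \mu'$ is $2k$-sparse, which is what makes the reduction to $\mathcal{T}$ clean, and (ii) the bookkeeping in the Sauer--Shelah step (keeping the $\tfrac{1}{2k}$-type factors in $\binom{d}{2k} \le (ed/2k)^{2k}$ and in $\sum_{j\le 2k+1}\binom nj \le (en/(2k+1))^{2k+1}$, rather than coarser bounds) together with the separate treatment of small $d$, if one wants the explicit constant $4k\log(de)$ rather than merely the asymptotic $O(k\log d)$.
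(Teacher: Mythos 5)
Your proposal is correct, and it opens exactly as the paper does: both arguments reduce the Scheff\'e sets of $\mathcal{H}_k^d$ to halfspaces whose normal vector $\mu-\mu'$ is $2k$-sparse. Where you diverge is in how the VC dimension of sparse halfspaces is then bounded. The paper invokes a black-box result of Ahsen and Vidyasagar (their Theorem 6), which gives VC dimension at most $2k\log(de)$ for threshold functions with $k$-sparse normals, applied here with sparsity $2k$. You instead give a self-contained argument: decompose the class by support into $\binom{d}{2k}$ subclasses, each identified with affine halfspaces in $\R^{2k}$ and hence of VC dimension $2k+1$, and then run the standard Sauer--Shelah growth-function count for a union of classes. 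Your route is more elementary and removes the dependence on the external citation; it also handles the affine offset $\tfrac12\bigl(\|\mu\|^2-\|\mu'\|^2\bigr)$ explicitly, which is a bit more careful than the paper's phrasing of the Scheff\'e sets as homogeneous linear threshold functions. What the paper's citation buys is brevity and the stated constant $4k\log(de)$ with no bookkeeping; in your version the asymptotic $O(k\log d)$ --- which is all that the downstream application in \cref{prop:sparse-exp-gaussian} uses --- falls out immediately, while pinning down the exact constant $4$ requires the constant-tracking and the separate small-$d$ case that you flag at the end.
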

\begin{proof}
    We want to bound the VC dimension of the set of functions $f_{H,
    H'}\colon \mathcal{X}\to \{0,1\}$ defined by $f_{H, H'}(x) \mapsto
    \1\{H(x) > H'(x)\}$  for $H, H'\in \mathcal{H}_k^d$.

    First, observe that for any two Gaussians $H, H'$, $f_{H,H'}$ correspond to
    a linear threshold function $L(x) = \left\{ a\in \R^d \mid \langle a,
    x\rangle \geq 0 \right\}$. Restricting $H, H'$ to belong to
    $\mathcal{H}_k^d$, i.e. have $k$-sparse means, we now have that each $f_{H,
    H'}$ corresponds to a linear threshold function $L(x)$ generated by
    $2k$-sparse vector $x$. We now leverage a result of Ahsen and
    Vidyasagar~\cite[Theorem 6]{ahsen2019approach} which states that the set of
    linear threshold functions generated by $k$-sparse vectors in $d$ dimensions
    has VC dimension at most $2k\log(de)$; this concludes the argument.
\end{proof}

We are now ready to prove~\cref{prop:sparse-exp-gaussian}.
\begin{proof}[Proof of~\cref{prop:sparse-exp-gaussian}]
    Consider the case of uncorrupted data ($\eta=0$). We construct an estimator
    that gives a guarantee in $\ell_2$ distance, which we show is equivalent to
    total variation distance for spherical Gaussians.  

    From~\cref{lem:cover_gaussians_with_sparse_mean} and a bound on the binomial
   coefficients we obtain a cover of size
    \begin{equation}
        O\left( (de/k)^k\left(\frac{Rk}{\alpha}\right)^k \right).
    \end{equation}
    From~\cref{lem:VC-dim-sparse-gaussians}, we know that the VC dimension of
    the set of indicators of the Scheff\'e sets of $k$-sparse Gaussians with
    identity covariance in $d$ dimensions is $O(k\log(d))$.

    With the size of the $\alpha$-cover and the bound on the VC dimension established,
    we are ready to invoke~\cref{thm:bun_hypothesis_selection_from_VC}. This
    gives us the guarantee that for the output distribution $\hat H$ we have
    $\dtv(P, \hat H)\leq 7\alpha$ with probability at least $1-\beta$. Suppose
    the selected distribution $\hat H$ has mean $\hat\mu$.  Since both $P$ and
    $\hat H$ are spherical Gaussians, we have that
    \begin{equation}
    \|\mu-\hat\mu\|_2 \leq \dtv(P, \hat H)
    \end{equation}
    following from~\cite[Theorem~1.2]{devroye2018total}. Therefore, we can
    obtain an estimate with the desired properties by outputting the mean of the
    selected distribution $\hat H$.

    Finally, robustness follows directly from~\cref{thm:meta}.
\end{proof}

As for an information-theoretic lower bound,~\cite{cai2021cost} give a
tracing-based lower bound (which also holds for approximate DP) which states
that any $\epsilon$-DP algorithm must use at least
$n=\Omega\left(\frac{k\log(d)}{\alpha^2} +
\frac{k\log(d)}{\alpha\epsilon}\right)$
samples to estimate the mean of a sub-Gaussian distribution up to $\ell_2$ error
$\alpha$, assuming $\|\mu\|_\infty\leq 1$. Alternatively, we can leverage
hypothesis-selection based arguments again, to get a
$\Omega\left(\frac{k\log(d) + \log (Rk/\alpha)}{\e}\right)$ bound, assuming
$\|\mu\|_2\leq R$.

\subsection{Heavy-Tailed Private Sparse Mean Estimation}
Now we turn to a more general set of distributions -- ones with bounded
covariance. We give an (exponential-time) algorithm --~\cref{alg:bounded_cov},
which has strong group privacy properties allowing us to invoke~\cref{thm:meta}.

\begin{algorithm}[h!]
    \caption{Exponential-time algorithm for private sparse heavy-tailed mean
    estimation}\label{alg:bounded_cov}
    \hspace*{\algorithmicindent}
    \textbf{Input:} bucket size $b$, coordinate-wise variance
    $\sigma^2$, number of samples $n$\\
    \begin{algorithmic}[1]
    \Statex \texttt{// Support estimation}
    \State $m_j \gets \frac{1}{b}\sum_{j=i\cdot b}^{(i+1)b} x_i$ for all
    $j\in[\lceil n/b\rceil]$
    \Comment{compute bucketed means}

    \State $s_T \gets S(\{m_i\}_{i\leq \lceil n/b\rceil}, T, L)$ for all
    $T\subseteq[d]$ with $|T|=k$
    \Comment{compute scores}

    \State Run the exponential mechanism~\cite{mcsherry2007mechanism} on
    $\{s_T\}_{T:T\subseteq[d], |T|=k}$ to get candidate support $T$

    \Statex \texttt{// Coarse dense mean estimation (in ambient dimension $k$)}
    \State
    $S_{coarse}(\{x_{iT}\}_{i\leq n}, x, L) = \sum_{i\leq n}
    \1\{\|x_{iT} - x\| \geq L\}$
    \State $s_x \gets S_{coarse}(\{x_{iT}\}_{i\leq n}, x, L)$ for all $x$ in a
    $\sqrt{k}$-packing of the $\ell_2$ ball of radius $R$
    \State Run exponential mechanism on $\{s_x\}$ to get $\hat\mu_{coarse}$

    \Statex \texttt{// Fine dense mean estimation (in ambient dimension $k$)}
    \State
    $S_{fine}(\{x_{iT}\}_{i\leq n}, x, L; \hat\mu_{coarse}) =
    \max_{v:\|v\|=1}\sum_{i\leq n}\1\{\iprod{x_{iT} - x,
    v} \geq L\}$
    \State $s_x \gets S_{fine}(\{m_{iT}\}_{i\leq \lceil n/b\rceil}, x, L)$ for
    all $x$ in a $1$-packing of the $\ell_2$ ball of radius $\sqrt{k}$ centered
    at $\hat\mu_{coarse}$
    \State Run exponential mechanism on $\{s_x\}$ to get $\hat\mu$
    \State $\hat\mu_i\gets
    \begin{cases}
        \left(\hat\mu_T\right)_i\ \text{if } i\in T\\
        0\ \ \ \ \ \ \ \ \ \text{o.w.}
    \end{cases}$
    \end{algorithmic}
    \textbf{Output:} mean estimate $\hat\mu$
\end{algorithm}

\begin{theorem}\label{thm:sparse-exp-bcov}
For every $\alpha, \e, \beta, R > 0$ and small-enough $\eta >
0$~\cref{alg:bounded_cov} is an $\e$-DP mechanism with the following properties.
Given $n$ $\eta$-corrupted samples from any distribution $D$ with mean $\mu \in
\R^d$ having $\|\mu\|_0 \leq k, \|\mu\| \leq R$, and covariance $\Sigma \preceq
I$, it produces an estimate $\hat{\mu}$ such that $\|\mu - \hat{\mu}\| \leq
\alpha + O(\sqrt{\eta})$, with probability $1-\beta$, as long as
    \[n = \Omega\left(
   \underbrace{\frac{k\log(d) + \log(1/\beta)}{\alpha^2\epsilon}}_{\text{support estimation}} +
   \underbrace{
   \frac{k\log(k) + \log(1/\beta)}{\alpha^2\e} + \frac{k\log(R)}{\e}
   }_{\text{dense estimation in }k\text{ dimensions}}
    \right).\]
\end{theorem}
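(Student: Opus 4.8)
The plan is to separate the argument into a routine privacy part and a utility part that tracks \cref{alg:bounded_cov} phase by phase. For privacy, I would run each of the three exponential‑mechanism calls (support selection, coarse estimation, fine estimation) with budget $\e/3$. Every score function used is a sum of $\{0,1\}$‑indicators with exactly one term per (bucketed) sample, so changing a single input sample flips at most one indicator and moves the score by at most $1$; this is true even for the scores of the form $\max_{\|v\|=1}\sum_i\1\{\cdot\}$, since for each fixed $v$ the count changes by at most $1$, so the max moves by at most $1$. Hence each call is $(\e/3)$‑DP by \cite{mcsherry2007mechanism}, and because the coarse and fine steps operate on data that depends on earlier outputs only through post‑processing, adaptive basic composition gives $\e$‑DP on every dataset — in particular this is unaffected by corruption.

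For utility I would first fix the bucket size $b=\Theta(1/\max(\alpha^2,\eta))$ (using $\Sigma\preceq I$, so the coordinate variance is $\le 1$), so that clean bucketed means have per‑coordinate variance $O(\max(\alpha^2,\eta))$ and the $\le\eta n$ corrupted samples spoil at most $\eta n\le 0.01\lceil n/b\rceil$ buckets; with this $b$, the hypothesis $n\gg(k\log d+\log(1/\beta))/(\alpha^2\e)$ becomes $\lceil n/b\rceil\gg(k\log d+\log(1/\beta))/\e$. The core of the proof is the support‑selection step, where I would take the threshold $L=\Theta(\max(\alpha,\sqrt\eta))$ (pushed down to the noise floor thanks to bucketing) and prove a good/bad score gap. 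For the true support $T^\star=\mathrm{supp}(\mu)$ — assuming $\|\mu\|\gtrsim L$, else $\mathbf 0$ already achieves error $O(\alpha+\sqrt\eta)$ — taking $v=\mu_{T^\star}/\|\mu_{T^\star}\|$ and Chebyshev on the bucketed means give $S(\cdot,T^\star;L)\ge 0.85\lceil n/b\rceil$, even after deleting all corrupted buckets. For a ``bad'' $T$, meaning one omitting more than the target $O(\alpha+\sqrt\eta)$ of the $\ell_2$‑mass of $\mu$, I would show $\Pr(v^\top m_{jT}\ge L)\le 0.01$ for every fixed unit $v\in\R^k$, then make this uniform over $v$ by a VC/net bound for halfspaces in $\R^k$ (VC dimension $k+1$) and over all $\binom dk\le d^k$ choices of $T$ by a union bound, yielding $S(\cdot,T;L)\le 0.1\lceil n/b\rceil$ for all bad $T$ with probability $1-\beta$ once $\lceil n/b\rceil\gg k\log d+\log(1/\beta)$. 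The exponential‑mechanism utility bound then forces the returned $T$ to have score $\ge 0.85\lceil n/b\rceil-\tfrac2\e(\log\binom dk+\log(1/\beta))>0.1\lceil n/b\rceil$, so $T$ is not bad and $\|\mu-\mu_T\|=\|\mu_{T^c}\|\le O(\alpha+\sqrt\eta)$. I expect the main obstacle to be precisely the calibration here: choosing $L$ and the ``bad'' threshold so that the single‑direction Chebyshev estimate provably applies to \emph{every} subset omitting more than $O(\alpha)$ of the mass, which is where the constants and the bucketing interact most delicately and where the ``we choose $L$ as close to zero as possible'' remark in the overview does real work.

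Conditioned on the selected $T$, the projected samples $x_{iT}$ have mean $\mu_T$, $\|\mu_T\|\le R$, and covariance $\preceq I$. In the coarse phase, a Markov bound on $\|x_{iT}-\mu_T\|^2$ (whose expectation is $\le\Tr\Sigma_T\le k$) puts a $\ge 0.85$ fraction of the clean samples within $O(\sqrt k)$ of $\mu_T$, so over the $\sqrt k$‑packing of the radius‑$R$ ball in $\R^k$ (log‑size $O(k\log R)$) the nearest packing point has $S_{coarse}$‑score $\le 0.15 n$ while points at distance $\gtrsim\sqrt k$ have score $\ge 0.85 n$; the exponential mechanism then returns $\hat\mu_{coarse}$ within $O(\sqrt k)$ of $\mu_T$ once $n\gg k\log R/\e$ (its $\log(1/\beta)$ requirement being dominated by the fine phase). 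The fine phase reuses the support‑selection analysis essentially verbatim — the same $\max_v$‑of‑threshold score on the bucketed means, now over the $1$‑packing of the radius‑$\sqrt k$ ball around $\hat\mu_{coarse}$ (log‑size $O(k\log k)$) — so with the same $b$ and $L$, packing points within $O(\alpha+\sqrt\eta)$ of $\mu_T$ score low and farther ones score high, giving $\hat\mu$ with $\|\hat\mu-\mu_T\|\le O(\alpha+\sqrt\eta)$ once $\lceil n/b\rceil\gg(k\log k+\log(1/\beta))/\e$, i.e.\ $n\gg(k\log k+\log(1/\beta))/(\alpha^2\e)$. Zero‑padding $\hat\mu$ outside $T$ then gives $\|\hat\mu-\mu\|^2=\|\hat\mu-\mu_T\|^2+\|\mu_{T^c}\|^2\le O(\alpha^2+\eta)$, and rescaling $\alpha$ by the implied constant yields the claimed $\|\mu-\hat\mu\|\le\alpha+O(\sqrt\eta)$ with the stated sample complexity.

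As a sanity check on the robustness dependence: once the clean‑case bound $\|\hat\mu-\mu\|\le\alpha$ with probability $1-\beta$ is in hand, robustness to an $\eta=O(\log(1/\beta)/(\e n))$ fraction of corruptions also follows black‑box from \cref{thm:meta} (taking the ``good set'' to be the $\alpha$‑ball around $\mu$). The direct bucketing argument above is what is needed to upgrade this to all small‑enough $\eta$ with the graceful $O(\sqrt\eta)$ degradation, since an adaptive $\eta$‑adversary on the raw samples becomes an $\eta b$‑adversary on the $\lceil n/b\rceil$ bucketed means, which is $\le 0.01$ by the choice $b=\Theta(1/\max(\alpha^2,\eta))$ — comfortably absorbed by the constant‑fraction score gaps established in the previous paragraphs.
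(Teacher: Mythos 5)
Your privacy accounting, the bucketing, and the coarse/fine phases track the paper's architecture, but the heart of your utility argument --- the good/bad score gap in the support-selection step --- has a genuine flaw. You set the threshold at $L=\Theta(\max(\alpha,\sqrt{\eta}))$ and claim that for any ``bad'' $T$ (one omitting more than $O(\alpha+\sqrt{\eta})$ of the $\ell_2$-mass of $\mu$) every fixed unit $v$ satisfies $\Pr(v^\top m_{jT}\ge L)\le 0.01$. This is false whenever the \emph{retained} mass is non-negligible: $v^\top m_{jT}$ has mean $v^\top\mu_T$, so taking $v=\mu_T/\|\mu_T\|$ gives mean $\|\mu_T\|$, and a bad $T$ that still contains a single large coordinate of $\mu$ (say $\mu=(10,5,0,\dots,0)$ with $T\ni 1$, $T\not\ni 2$) has $\|\mu_T\|\gg L$, hence score $\approx\lceil n/b\rceil$, exactly like the true support. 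With a near-zero threshold the exponential mechanism therefore cannot distinguish $T^\star$ from subsets that drop an arbitrarily large chunk of $\|\mu\|$, and no tuning of constants (the ``calibration'' you flag as the main obstacle) repairs this; the problem is structural, not a delicate interaction between $L$ and the bucket size.

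The paper resolves it by referencing the threshold to the norm rather than to the noise floor: $L$ is taken near $\|\mu\|$ (namely $\|\mu\|-c$), so a high score certifies that $\|\mu_T\|$ is close to the full norm. The high-score guarantee for $T^\star$ is \cref{lem:score_of_opt}, and the low-score guarantee applies to subsets with $\|\mu_T\|<L$ (\cref{lem:score_of_bad_subsets}), with a bound $n\sqrt{\lambda}/(L-\|\mu_T\|)+\tfrac{2}{L}\sqrt{nk\lambda}+\sqrt{n\log(3/\beta)}$ proved uniformly over $v$ via McDiarmid, symmetrization, and Ledoux--Talagrand contraction (your VC-net step plays this role and is fine in itself --- the uniformity over $v$ is not where your argument breaks). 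If you redo your analysis with a norm-referenced threshold, you must also say how $L$ is set without knowing $\|\mu\|$ and how ``omits more than $O(\alpha)$ of the mass'' translates into a quantitative deficit of $\|\mu_T\|$ below $L$; that translation is exactly where the bucketed variance is spent. Two secondary points: the paper obtains the $\alpha+O(\sqrt{\eta})$ robustness as a black-box consequence of \cref{thm:meta} rather than through your (otherwise reasonable) direct count of corrupted buckets, and your fine phase is consistent with \cref{alg:bounded_cov} only if that exponential mechanism is run to \emph{minimize} the outlier score, which you assume implicitly.
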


To prove \cref{thm:sparse-exp-bcov} we can no longer rely on the hypothesis
selection mechanism of \cite{bun2019private}, as the hypothesis class of
distributions with bounded second moments doesn't have a finite cover in total
variation distance. 
Instead, we design a new exponential-mechanism-based approach to identify the
nonzero coordinates of $\mu$, inspired by quantile-based score functions used in
robust statistics.

We show that the exponential mechanism is going to select a subset
$\hat T$ of coordinates such that $\|\mu_{\hat T}\|$ is close to $\|\mu\|$ with
high probability. We are guaranteed that the exponential mechanism will select
a candidate with score
\begin{equation}
    \label{eq:exp_mech_guarantee}
   \opt - 2\frac{\Delta^{(S)}}{\epsilon}\log\left(|\mathcal{H}|/\beta)\right)
\end{equation}
with probability at least $\beta$. We first show that the sensitivity of the
score function $S$ is $1$, which directly affects the guarantee
in~\cref{eq:exp_mech_guarantee}.

For convenience, we restate the definition of $S$ below:
\begin{equation}
    S(\{x_i\}_{i\leq n}, T; L) = \max_{v\in\mathbb{R}^k, \|v\|_2=1}
                    \sum_{i=1}^n \1\{ v^\top x_{iT} \geq L\}.
\end{equation}
To ease notation, we will omit the $\{x_i\}_{i\leq n}$ argument whenever it is
clear from context.

\begin{lemma} \label{lem:sensitivity_score_sparse_mean}
    The score function $S(\cdot, \cdot; L)$ has sensitivity $\Delta^{(S)} = 1$
    for all values of $L$.
\end{lemma}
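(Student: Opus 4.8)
The plan is to reduce this to the elementary fact that a maximum of functions, each of which changes by at most one when a single sample is swapped, itself changes by at most one. Recall that the sensitivity is $\Delta^{(S)} = \max_{T,\, D,D'}|S(D,T;L) - S(D',T;L)|$, where the max ranges over $T \subseteq [d]$ with $|T| = k$ and over neighboring datasets $D = \{x_1,\dots,x_n\}$, $D' = \{x_1',\dots,x_n'\}$, i.e. pairs with $x_i = x_i'$ for every $i$ except a single index $i_0$. So I would fix such a pair and such a $T$, and bound $|S(D,T;L) - S(D',T;L)|$.

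First, for a \emph{fixed} unit vector $v \in \R^k$, write $g_v(D) = \sum_{i=1}^n \1\{\iprod{v, x_{iT}} \geq L\}$. Since $D$ and $D'$ agree on all samples except the one indexed $i_0$, the sums $g_v(D)$ and $g_v(D')$ differ only in the single term $\1\{\iprod{v,x_{i_0 T}} \geq L\}$ versus $\1\{\iprod{v,x_{i_0 T}'} \geq L\}$, each lying in $\{0,1\}$; hence $|g_v(D) - g_v(D')| \leq 1$ for every $v$ on the unit sphere.

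Next I would pass this through the maximum over $v$. From $g_v(D) \leq g_v(D') + 1$ for all $v$, taking the supremum over unit $v$ gives $S(D,T;L) = \sup_v g_v(D) \leq \sup_v g_v(D') + 1 = S(D',T;L) + 1$, and the reverse inequality follows by symmetry, so $|S(D,T;L) - S(D',T;L)| \leq 1$. (The supremum is in fact attained, since $g_v$ ranges over the finite set $\{0,1,\dots,n\}$, which is why writing $\max_v$ in the definition of $S$ is legitimate.) Taking the outer maximum over $T$ and over neighboring $D,D'$ yields $\Delta^{(S)} \leq 1$. For the matching lower bound, and hence exact equality, it suffices to exhibit one neighboring pair achieving the gap: take $n=1$, $L > 0$, let $x_{1T}$ be any vector with $\|x_{1T}\| > L$ and $x_{1T}' = 0$; then $S(\{x_1\},T;L) = 1$ while $S(\{x_1'\},T;L) = 0$.

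There is no genuine obstacle here — the argument is immediate. The only point worth isolating is that the step ``$\max_v$ does not inflate the bound'' rests purely on the elementary inequality $|\sup_v a_v - \sup_v b_v| \leq \sup_v |a_v - b_v|$, and uses nothing about linear threshold functions specifically; this is exactly the reason the sensitivity of $S$ matches that of a plain sum of indicators rather than growing with the size of the set of directions $v$.
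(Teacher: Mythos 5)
Your proposal is correct and follows essentially the same argument as the paper, which simply observes in one line that changing a single sample affects at most one indicator in the sum and hence changes $S$ by at most $1$; you just make the step through the maximum over $v$ explicit and add the (easy, and not actually needed for the privacy application) matching lower bound showing the sensitivity is exactly $1$.
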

\begin{proof}
    Changing a single element $x_i$ in the dataset affects at most one of the
    indicators in the sum, thus changing $S$ by at most $1$.
\end{proof}

Next, we show that the ground-truth subset has a high score.
\begin{lemma}
    \label{lem:score_of_opt}
    Let $x_1, \ldots, x_n\sim P$ be independent samples and suppose that $P$ has
    a $k$-sparse mean $\mu$ such that $\|\mu\|_2\leq R$, and
    $\|\Sigma\|_{op}\leq \lambda$.  Then for the ground-truth subset $T^*$ and
    some $c > 0$ we have $S(\{x_i\}_{i\leq n}, T^*; \|\mu\| - c)\geq n -
    \frac{n\lambda}{c^2} - \sqrt{n\log(3/\beta)/2}$ with probability
    at least $1-\beta/3$. 
\end{lemma}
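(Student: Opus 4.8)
The plan is to lower bound $S(\{x_i\}_{i\le n}, T^*; \|\mu\|-c)$ by exhibiting a single well-chosen direction $v$ that already achieves a large value of the inner sum, since $S$ is a maximum over all unit vectors. Because $T^*$ is a $k$-subset containing the support of $\mu$, we have $\mu_{T^*} = \mu$ and $\|\mu_{T^*}\| = \|\mu\|$, so the natural candidate is the unit vector $v = \mu_{T^*}/\|\mu\| \in \R^k$. For this choice, $v^\top x_{iT^*} = \langle \mu, x_i\rangle/\|\mu\| =: Y_i$, since coordinates outside $T^*$ contribute nothing on the $\mu$ side of the inner product.

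First I would record the first two moments of the scalar $Y_i$. Linearity gives $\E[Y_i] = \langle \mu,\mu\rangle/\|\mu\| = \|\mu\|$, and $\var(Y_i) = \mu^\top \Sigma \mu/\|\mu\|^2 \le \|\Sigma\|_{op} \le \lambda$ — this is precisely the place where the operator-norm bound on the covariance enters. By Chebyshev's inequality, $\P[Y_i < \|\mu\| - c] \le \lambda/c^2$, so the indicator $Z_i := \1\{v^\top x_{iT^*} \ge \|\mu\| - c\} = \1\{Y_i \ge \|\mu\|-c\}$ satisfies $\E[Z_i] \ge 1 - \lambda/c^2$, hence $\E[\sum_{i\le n} Z_i] \ge n - n\lambda/c^2$.

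Next I would apply Hoeffding's inequality to $\sum_{i\le n} Z_i$, a sum of $n$ independent $[0,1]$-valued random variables: with probability at least $1-\beta/3$ we have $\sum_i Z_i \ge \E[\sum_i Z_i] - \sqrt{n\log(3/\beta)/2} \ge n - n\lambda/c^2 - \sqrt{n\log(3/\beta)/2}$ (choosing the deviation $t = \sqrt{n\log(3/\beta)/2}$ so that $e^{-2t^2/n} = \beta/3$). Since $S(\{x_i\}_{i\le n}, T^*; \|\mu\|-c) = \max_{\|v'\|=1}\sum_i \1\{v'^\top x_{iT^*} \ge \|\mu\|-c\} \ge \sum_i Z_i$ by our particular choice of $v$, the claimed bound follows on this event.

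I do not expect a real obstacle here: the argument is a single-direction test with $v = \mu/\|\mu\|$, followed by Chebyshev for the per-sample probability and Hoeffding for the concentration of the count. The only points requiring mild care are the identity $\mu_{T^*} = \mu$ (so that $v$ is a genuine unit vector and $v^\top x_{iT^*} = \langle\mu,x_i\rangle/\|\mu\|$) and matching the deviation term in Hoeffding to the constant $3$ in $\beta/3$, which is chosen so that this is one of three failure events to be union-bounded in the full analysis. The parameter $c$ is deliberately left free, as it will later be tuned (via the threshold $L = \|\mu\| - c$) against the complementary lemmas that upper bound the scores of ``bad'' subsets.
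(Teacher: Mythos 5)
Your proposal is correct and follows essentially the same route as the paper: test the single direction $v=\mu_{T^*}/\|\mu\|$, apply Chebyshev per sample to get $\E$ of the indicator at least $1-\lambda/c^2$, then concentrate the count. The only (harmless, arguably cleaner) difference is that you apply Hoeffding to the fixed-direction indicator sum and use $S\geq\sum_i Z_i$, whereas the paper applies McDiarmid's bounded-differences inequality directly to the max-over-directions score $S$; both give the same $\sqrt{n\log(3/\beta)/2}$ deviation term.
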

\begin{proof}
    Since $T^*$ is the optimal subset, we have that $\|\mu_{T^*}\|=\|\mu\|$.
    Using $v=\frac{\hmut}{\|\hmut\|}$ as a representative, we get
    \begin{equation}
        \begin{split}
            \E S(T^*;L) &\geq \E \sum_{i=1}^n \1\{ \hmut^\top \xit \geq \|\hmut\| L \} \\
            &= \sum_{i=1}^n \P\left(  \hmut^\top \xit \geq \|\mu\| L  \right).
        \end{split}
    \end{equation}
    Now we bound $\var(\hmut^\top \xit)$. Let $\Sigma_{T^*}$ be the
    covariance matrix
    restricted to the coordinates in $T^*$.
    By Cauchy's interlacing theorem, we have that $\|\Sigma_{T^*}\|_{op}\leq
    \lambda$. Thus, 
    \begin{equation}
        \var(\hmut^\top \xit) \leq \|\hmut\|^2\lambda = \|\mu\|^2\lambda.
    \end{equation}
    This allows us to use Chebyshev's inequality on each term of the sum to get
    \begin{equation}
        \begin{split}
            \P\left( \hmut^\top \xit \geq \|\mu\| L \right)
            &= 1 - \P\left( \hmut^\top \xit - \|\mu\|^2 < \|\mu\|L - \|\mu\|^2 \right) \\
            &\geq 1 - \frac{\|\mu\|^2\lambda}{\left(\|\mu\|L-\|\mu\|^2\right)^2}\\
            &= 1 - \frac{\lambda}{(L - \|\mu\|)^2} = 1 - \frac{\lambda}{c^2}
        \end{split}
    \end{equation}
    as long as $L < \|\mu\|$, which is guaranteed from the condition that $c > 0$.
    Thus, in total, we have
    \begin{equation}
        \E S(T^*; \|\mu\| - c) \geq n - \frac{n\lambda}{c^2}.
    \end{equation}
    Finally, from McDiarmid's inequality, we have that
   \begin{equation}
       \P\left(S(T;L) - \E S(T;L) \geq t \right) \leq \exp\left(-\frac{2t^2}{n}\right),
   \end{equation}
   which concludes the proof.
\end{proof}

In addition to the ground-truth subset of coordinates having a large score, we
want ``bad'' subsets to have a low score.

\begin{lemma}
    \label{lem:score_of_bad_subsets}
Let $x_1, \ldots, x_n\sim P$ be independent samples and suppose that $P$ has a
$k$-sparse mean $\mu$ such that $\|\mu\|_2\leq R$, and $\|\Sigma\|_{op}\leq
\lambda$. Let $ T\subseteq [d]$. Then for any
$L$ such that $\|\mu_{T}\| < L$ we have that
$S(T; L) \leq n\frac{\sqrt{\lambda}}{L-\|\mu_T\|}+ \frac{2}{L}\sqrt{n k
\lambda} + \sqrt{n\log(3/\beta)}$ with probability at least $1-\beta / 3$.
\end{lemma}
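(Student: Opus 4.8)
The plan is to peel off the three terms on the right-hand side one at a time. Throughout, write $\delta := L - \|\mu_T\| > 0$ and $y_i := x_{iT} - \mu_T$, so that the $y_i$ are i.i.d.\ with mean $0$ and covariance $\Sigma_T$; by Cauchy's interlacing theorem (as in the proof of \cref{lem:score_of_opt}) we have $\Sigma_T \preceq \lambda I$ and $\Tr(\Sigma_T) \le k\lambda$.

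I would first handle the $\sqrt{n\log(3/\beta)}$ term by concentration. By \cref{lem:sensitivity_score_sparse_mean}, replacing one sample changes $S(T;L)$ by at most $1$, so McDiarmid's inequality gives $S(T;L) \le \E S(T;L) + \sqrt{(n/2)\log(3/\beta)}$ with probability at least $1-\beta/3$. It then remains to bound $\E S(T;L) = \E \max_{\|v\|=1} \sum_{i\le n} \1\{\iprod{v, x_{iT}} \ge L\}$ by $n\sqrt\lambda/\delta + (2/L)\sqrt{nk\lambda}$.

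For a fixed unit vector $v$, the variable $\iprod{v, x_{iT}}$ has mean $\iprod{v,\mu_T} \le \|\mu_T\|$ and $\E|\iprod{v,x_{iT}} - \iprod{v,\mu_T}| \le (v^\top \Sigma_T v)^{1/2} \le \sqrt\lambda$ by Cauchy--Schwarz, so Markov's inequality gives $\P(\iprod{v, x_{iT}} \ge L) \le \P(|\iprod{v, y_i}| \ge \delta) \le \sqrt\lambda/\delta$; summing over $i$ bounds the ``population count'' $\sup_v \sum_i \P(\iprod{v, x_{iT}} \ge L)$ by $n\sqrt\lambda/\delta$, which is the first term. The remaining quantity is the uniform fluctuation $\E \sup_v \big| \sum_i \big(\1\{\iprod{v, x_{iT}} \ge L\} - \P(\iprod{v,x_{iT}} \ge L)\big)\big|$. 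Here I would dominate the indicator by a Lipschitz ramp $\phi$ (slope $\Theta(1/L)$, $\phi(0) = 0$, $\phi \equiv 1$ on $[L,\infty)$), symmetrize to a Rademacher average, apply the Ledoux--Talagrand contraction principle to strip off $\phi$, and reach $O(1/L)\cdot\E\|\sum_i \xi_i x_{iT}\|$ for i.i.d.\ signs $\xi_i$; since $\E\|\sum_i\xi_i x_{iT}\|^2 = n\,\E\|x_{iT}\|^2 = n(\Tr(\Sigma_T) + \|\mu_T\|^2) \le n(k\lambda + L^2)$, Jensen gives $\E\|\sum_i\xi_i x_{iT}\| \le \sqrt{nk\lambda} + \sqrt n\,\|\mu_T\|$. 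The $\sqrt{nk\lambda}$ piece yields the $(2/L)\sqrt{nk\lambda}$ term (the factor $k$ being exactly $\Tr(\Sigma_T)/\lambda$), and $O(\sqrt n\,\|\mu_T\|/L) = O(\sqrt n)$ plus any leftover slack folds into the $\sqrt{n\log(3/\beta)}$ term.

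The main obstacle is this last step, which must reconcile the two scales $L$ and $\delta = L - \|\mu_T\|$: a ramp turning on at $L - \delta/2$ keeps the count bound at $O(\sqrt\lambda/\delta)$ but has slope $\Theta(1/\delta)$, producing a fluctuation term with $1/\delta$ instead of the sharper $1/L$, whereas a ramp turning on at $L/2$ has slope $\Theta(1/L)$ but loses the count bound once $\|\mu_T\|$ is comparable to $L$. I would resolve this by a short case split: if $\|\mu_T\| \le L/4$ then $\delta \asymp L$ and either ramp suffices; if $\|\mu_T\| > L/4$ then $L \asymp \|\mu_T\|$, and one may further assume $\delta > \sqrt\lambda$ since otherwise $n\sqrt\lambda/\delta \ge n \ge S(T;L)$ makes the claimed bound vacuous, after which the constants work out with the $\delta/2$-ramp. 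Getting the $\iprod{v,\mu_T}$ offset inside the threshold to pass cleanly through symmetrization and contraction --- so that the \emph{centered} second-moment bound $\Tr(\Sigma_T) \le k\lambda$ drives the fluctuation, rather than a cruder VC-type bound that would yield $\sqrt{nk}$ with no $\sqrt\lambda$ --- is the delicate part.
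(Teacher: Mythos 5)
Your plan has the same architecture as the paper's proof: McDiarmid via the sensitivity-$1$ property of $S$, then a split of $\E S(T;L)$ into a population term controlled by Markov with the first absolute moment (yielding $n\sqrt{\lambda}/(L-\|\mu_T\|)$) and a fluctuation term controlled by symmetrization plus Ledoux--Talagrand contraction at Lipschitz scale $1/L$, ending at $\tfrac{2}{L}\E\|\sum_i \sigma_i x_{iT}\| \leq \tfrac{2}{L}\sqrt{n\,\Tr(\Sigma_T)} \leq \tfrac{2}{L}\sqrt{nk\lambda}$. The paper uses the surrogate $\1\{t \geq L\} \leq |t|/L$ rather than a ramp, applies this domination \emph{inside} the symmetrized Rademacher average, and silently drops the $\|\mu_T\|^2$ contribution to $\E\|\sum_i \sigma_i x_{iT}\|^2$ that you correctly flag; so the two wrinkles you worry about are real and are exactly the points the paper glosses over (pointwise domination does not commute with a Rademacher average, since the signs are negative half the time, which is why the rigorous route is the dominate-by-a-ramp-first argument you describe, as in Lugosi--Mendelson).

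The genuine gap is in your Case 2. When $\|\mu_T\| > L/4$ and $\sqrt{\lambda} < \delta \ll L$ the target bound is not vacuous (its first term is $n\sqrt{\lambda}/\delta \ll n$), but the $\delta/2$-ramp has Lipschitz constant $2/\delta$, so after symmetrization and contraction your fluctuation term is of order $\tfrac{1}{\delta}\sqrt{nk\lambda} + \tfrac{L}{\delta}\sqrt{n}$, which exceeds the claimed $\tfrac{2}{L}\sqrt{nk\lambda} + \sqrt{n\log(3/\beta)}$ by a factor on the order of $L/\delta$; the ramp's population term is also $2n\sqrt{\lambda}/\delta$, twice what the statement allows. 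So ``the constants work out'' is not justified, and no choice of a single ramp repairs this: what the rigorous argument actually yields is the lemma with the fluctuation at scale $1/\delta$ (equivalently, with the population term evaluated at threshold $L/2$), up to constants. Indeed, the $1/L$ scale cannot be recovered in full generality: taking $\Sigma_T = \lambda I$ on a set $T$ with $|T| = k \gtrsim nL\delta/\lambda$ and $\|\mu_T\| = L-\delta$ with $\delta \gg \sqrt{\lambda \log n}$, one can tilt a unit vector slightly off $\mu_T/\|\mu_T\|$ into the span of the noise and capture all $n$ samples, so $S(T;L) = n$ while the right-hand side of the statement is $o(n)$ --- the stated form is only safe in the regime $n \gtrsim k$ where the population term dominates, which is how it is used in \cref{thm:sparse-exp-bcov}. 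The paper's own write-up does not resolve this either; the honest fix, for both your proof and the paper's, is to prove and use the slightly weaker bound coming from the $L/2$-ramp (population term $n\sqrt{\lambda}/(L/2 - \|\mu_T\|)$ when $\|\mu_T\| \leq L/4$, or fluctuation at scale $1/\delta$ in general), which suffices for the downstream application, rather than to claim the stated constants. Your Case 1 is fine up to constant factors; your Case 2 claim as written would fail.
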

\begin{proof}
   First, note that $S$ has bounded differences with respect to the sequence of
   random vectors $X=\{X_1,\ldots, X_n\}$. In particular,
   from~\cref{lem:sensitivity_score_sparse_mean} we have that
   $|S(X;L)-S(X';L)|\leq 1$ for any $X, X'$ that differ in one entry. Thus, we
   can apply McDiarmid's inequality to obtain
   \begin{equation}
       \P\left(S(T;L) - \E S(T;L) \geq t \right) \leq \exp\left(-\frac{2t^2}{n}\right)
   \end{equation}
   for all $L, T$, and $t$. The rest of the proof is devoted to bounding $\E S( T;
   L)$. To ease notation, for $X=\{x_1,\dots, x_n\}$ define
   \begin{equation}
       f_{T, L, v}(X) = \sum_{i=1}^n\1\{v^\top x_{iT} \leq L\}.
   \end{equation}
   We start with
   \begin{equation}
       \begin{split}
           &\E S(T;L) = \E_X \maxv f_{T, L, v}(X) \\
           &\leq \E_X\left[ \maxv\left( f_{T, L, v}(X) - \E f_{T, L, v}(X) \right)
                            + \maxv\E f_{T, L, v}(X) \right] \\
           &= \E_X\maxv\left( f_{T, L, v}(X) - \E f_{T, L, v}(X) \right)
                            + \maxv\E f_{T, L, v}(X) \\
           &= \E_X\maxv\left( f_{T, L, v}(X) - \E f_{T, L, v}(X) \right)
                            + n\maxv\P_{X}\left(v^\top x_{T}\leq L\right).
       \end{split}
   \end{equation}
   We follow with a symmetrization argument for the first term. Let
   $\{\sigma_i\}_{i=1}^n$ be independent Rademacher random variables. Then
   \begin{equation}
       \begin{split}
           &\E_X\maxv\left( f_{T, L, v}(X) - \E f_{T, L, v}(X) \right) \\
           &\leq \E_{X, X',\sigma_i}\maxv\left(\sigma_i(f_{T, L, v}(X) - f_{T, L, v}(X')) \right) \\
           &\leq \E_{X, X',\sigma_i}\maxv\left(
                \sum_{i\leq n} \sigma_i \left(\1\{v^\top x_{iT} \leq L\} - \1\{v^\top x'_{iT} \leq L\} \right)
           \right) \\
           &\leq 2\E_{X, \sigma_i}\maxv\left( \sum_{i\leq n} \sigma_i \1\{v^\top x_{iT} \leq L\} \right),
       \end{split}
   \end{equation}
   via the triangle inequality.  Now we can use the inequality
       $\1\left\{\eta^\top v \leq L\right\} \leq \frac{|\eta^\top v|}{L}$
   and Ledoux-Talagrand contraction~\cite{lugosi2019sub} to get
    \begin{equation}
        \begin{split}
           & 2\E_{X, \sigma_i}\maxv\left( \sum_{i\leq n} \sigma_i \1\{v^\top x_{iT} \leq L\} \right)\\
           &\leq 2\E_{X, \sigma_i}\maxv \sum_{i\leq n} \sigma_i \frac{|v^\top x_{iT}|}{L} \\
           &\leq \frac{2}{L}\E_{X, \sigma_i}\maxv \sum_{i\leq n} \sigma_i \cdot v^\top x_{iT} \\
           &= \frac{2}{L}\E_{X, \sigma_i} \maxv v^\top\left(\sum_{i\leq n} \sigma_i \cdot x_{iT} \right) \\
           &= \frac{2}{L}\E_{X, \sigma_i} \left\|\sum_{i\leq n} \sigma_i \cdot x_{iT} \right\| \\
           &\leq \frac{2}{L} \sqrt{n\Tr(\Sigma_T)},
        \end{split}
    \end{equation}
    where $\Sigma_T$ is the restriction of the covariance matrix $\Sigma$ to the
    coordinates of the subset $T\subseteq [d]$. By the Cauchy interlacing
    theorem we have that $\|\Sigma_T\|_{op}\leq \|\Sigma_T\|$ and thus overall
    we have
    \begin{equation}
        \E_X\maxv\left( f_{T, L, v}(X) - \E f_{T, L, v}(X) \right) \leq \frac{2}{L} \sqrt{nk\lambda}.
    \end{equation}

   For the second term we have
   \begin{equation}
    \begin{split}
       & n \maxv \P_X (v^\top x_{T}\leq L)\\
       &= n \maxv \P_X (v^\top x_{T} -  \|\mu_T\| \leq L - \|\mu_T\|)\\
       &\leq n \maxv \P_X (v^\top x_{T} - v^\top \mu_T \leq L - \|\mu_T\|)\\
       &= n \maxv \E_X \left[\1\left\{  (x_{T} -  \mu_T)^\top v \leq L - \|\mu_T\|\right\}\right]\\
       &\leq n\maxv \E_X\left[ \frac{|(x_T-\mu_T)^\top v|}{L-\|\mu_T\|} \right]
       \leq \frac{n\|\Sigma\|^{1/2}}{L-\|\mu_T\|}.
    \end{split}
   \end{equation}
\end{proof}

Finally, we show that the dense (fine and coarse) private mean estimation
algorithms satisfy the properties desired by~\cref{alg:bounded_cov} in
Lemmas~\ref{lem:bounded-cov-coarse-dense} and~\ref{lem:bounded-cov-fine-dense}.

\begin{lemma}[Coarse dense estimation]\label{lem:bounded-cov-coarse-dense}
    Let $x_1,\ldots,x_n \sim P$ be independent samples and suppose $P$ has
    covariance $\Sigma \preceq I$ and mean $\mu\in\R^k$ such that $\|\mu\|_2\leq
    R$. Then running exponential mechanism on $\{x_i\}_{i\leq n}$  with the
    score function  $S_{coarse}(\{x_{i}\}_{i\leq n}, x, L) = \sum_{i\leq n}
    \1\{\|x_{i} - x\| \geq L\}$ gives us an estimate $\hat\mu$ such that
    $\|\hat\mu - \mu\|_2 \leq \sqrt{k}$ with probability at least $1-\beta$ so
    long as $n =\Omega\left(\frac{k\log R + \log(1/\beta)}{\e}\right)$.
\end{lemma}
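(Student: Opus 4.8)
The plan is to run the exponential mechanism over a $\sqrt{k}$-net of the ball $\{x\in\R^k:\|x\|\le R\}$, using (a sign-adjusted form of) $S_{coarse}$ as the score, to select a point that has a $0.9$-fraction of the samples in its radius-$L$ neighborhood, and then to pass from ``$\hat\mu$ is close to many samples'' to ``$\hat\mu$ is close to $\mu$'' by a pigeonhole/triangle-inequality step. We may assume $R\ge\sqrt k$, since otherwise $\hat\mu=0$ already satisfies $\|\hat\mu-\mu\|\le R<\sqrt k$ using no samples. I would fix $L=c\sqrt k$ for a large absolute constant $c$, let $\cH$ be a maximal $\sqrt k$-packing of $\{x:\|x\|\le R\}$, and record two facts: by a volume bound $|\cH|\le(1+2R/\sqrt k)^k$, so $\log|\cH|=O(k\log R)$; and, being maximal, $\cH$ is a $\sqrt k$-net, so it contains a point $x^\star$ with $\|x^\star-\mu\|\le\sqrt k$.

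The score $S_{coarse}(\cdot,x,L)$ has sensitivity $1$ (replacing one sample changes exactly one indicator in the sum, hence the score by at most $1$, exactly as in \cref{lem:sensitivity_score_sparse_mean}), as does $n-S_{coarse}$; we run the exponential mechanism to \emph{minimize} $S_{coarse}$, i.e., to maximize the number of samples within distance $L$ of the selected point. The first step is to control $\opt$. Since $\Sigma\preceq I$ on $\R^k$, $\E\|x_i-\mu\|^2=\Tr(\Sigma)\le k$, so Markov gives $\P(\|x_i-\mu\|>L/2)\le 4k/L^2=4/c^2$, which is a small constant for $c$ large. A Chernoff bound then shows that with probability $1-\beta/2$ at least $0.9n$ of the samples satisfy $\|x_i-\mu\|\le L/2$, using $n=\Omega(\log(1/\beta))$. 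For such samples $\|x_i-x^\star\|\le L/2+\sqrt k\le L$, so $S_{coarse}(x^\star,L)\le 0.1n$, and hence $\opt\le 0.1n$ for the minimization problem.

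Now I would invoke the utility guarantee of the exponential mechanism \cite{mcsherry2007mechanism}: with probability $1-\beta/2$ the output $\hat\mu$ has $S_{coarse}(\hat\mu,L)\le \opt+\tfrac{2}{\e}\big(\log(|\cH|/|\cH^*|)+\log(1/\beta)\big)\le 0.1n+\tfrac{2}{\e}\big(O(k\log R)+\log(1/\beta)\big)$, which is at most $0.2n$ once $n=\Omega\big((k\log R+\log(1/\beta))/\e\big)$ with a large enough constant (here $|\cH^*|\ge 1$). Thus at least $0.8n$ samples lie within $L$ of $\hat\mu$, while (on the same event as above) at least $0.9n$ lie within $L$ of $\mu$; by inclusion--exclusion some sample $x_i$ lies within $L$ of both, so $\|\hat\mu-\mu\|\le 2L=O(\sqrt k)$. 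A union bound over the two failure events gives probability $\ge 1-\beta$. The constant in $O(\sqrt k)$ is immaterial downstream --- it can be absorbed by rescaling $L$, or into the radius of the packing used in the fine-estimation step of \cref{alg:bounded_cov} --- so the stated bound $\|\hat\mu-\mu\|\le\sqrt k$ follows.

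None of these ingredients is hard on its own; the one thing to get right is the accounting that keeps the exponential mechanism's additive loss $\tfrac{2}{\e}(\log|\cH|+\log(1/\beta))$ a small fraction of $n$ while keeping $\log|\cH|=O(k\log R)$. This is what forces the net granularity to be $\Theta(\sqrt k)$ and the threshold $L$ to be $\Theta(\sqrt k)$, and it is exactly where the sample requirement $n=\Omega\big((k\log R+\log(1/\beta))/\e\big)$ comes from.
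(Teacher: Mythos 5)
Your proposal is correct and follows essentially the same route as the paper: a $\sqrt{k}$-granularity packing of the radius-$R$ ball with $\log|\cH| = O(k\log R)$, the sensitivity-$1$ score $S_{coarse}$, the exponential mechanism's utility bound, and the observation that any candidate within threshold distance of a constant fraction of the samples must be $O(\sqrt{k})$-close to $\mu$. The only difference is that you prove that last identifiability step from scratch (Markov on $\E\|x_i-\mu\|^2\leq k$, a Chernoff bound, and a pigeonhole/triangle-inequality argument) where the paper cites standard concentration results, and your $O(\sqrt{k})$ rather than exactly $\sqrt{k}$ matches what the paper's own proof actually establishes, with the constant absorbed downstream.
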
 

\begin{lemma}[Fine dense estimation]\label{lem:bounded-cov-fine-dense}
    Let $x_1,\ldots,x_n \sim P$ be independent samples and suppose $P$ has
    covariance $\Sigma \preceq I$ and mean $\mu\in\R^k$ such that $\|\mu\|_2\leq
    1$ and $\|\mu\|_0\leq k$. Then running exponential mechanism on
    $\{x_i\}_{i\leq n}$ with the score function $S_{fine}(\{x_{i}\}_{i\leq n},
    x, L) = \max_{v:\|v\|=1}\sum_{i\leq n}\1\{\iprod{x_{i} - x, v} \geq L\}$
    gives us an estimate $\hat\mu$ such that $\|\hat\mu - \mu\|_2 \leq \alpha$
    with probability at least $1-\beta$ so long as $n =\Omega\left(\frac{k\log k
    + \log(1/\beta)}{\alpha^2\e}\right)$.
\end{lemma}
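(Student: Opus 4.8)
The plan is to run the exponential mechanism of \cite{mcsherry2007mechanism} to \emph{minimize} $S_{fine}$ over a finite net of candidate means, after a variance-reduction step. First I would \textbf{bucket}: partition the $n$ samples into $b=\Theta(1/\alpha^2)$-sized groups and replace each group by its average, obtaining $n' = \Theta(n\alpha^2)$ effective samples $m_1,\dots,m_{n'}$, each with mean $\mu$ and covariance $\Sigma/b \preceq \alpha^2 I$; equivalently, after rescaling by $1/\alpha$ one may assume the effective samples have covariance $\preceq I$ and the target error is $\Theta(1)$, so that together with the coarse step (\cref{lem:bounded-cov-coarse-dense}) the relevant search region is a ball of radius $O(\sqrt k)$, over which I take a $1$-net $\cH$ with $|\cH|\le \exp(O(k\log k))$. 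Running the exponential mechanism over $\cH$ with threshold $L = \Theta(\alpha)$ and score $-S_{fine}(\{m_i\}_{i\le n'},\,\cdot\,;L)$ is $\e$-DP, since, exactly as in \cref{lem:sensitivity_score_sparse_mean}, changing one $m_i$ moves the score by at most $1$.

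For utility I would prove the two comparisons that drive the exponential-mechanism guarantee. \emph{Good candidates score low:} for the net point $x^\star$ nearest $\mu$, I would show $S_{fine}(x^\star;L)\le 0.1\,n'$ with probability $1-\beta/3$, by bounding $S_{fine}(x^\star;L)$ by $n'\maxv \P(\iprod{m_i-x^\star,v}\ge L)$ plus a fluctuation term $\maxv(\sum_i \1\{\iprod{m_i-x^\star,v}\ge L\} - \E[\cdots])$. The first term is $\le 0.01\,n'$ by Chebyshev once $L$ is a large multiple of $\alpha$ (using the per-direction variance bound on the $m_i$), and the second is handled by McDiarmid (bounded differences $1$) after reducing its expectation --- via symmetrization, the inequality $\1\{t\ge L\}\le |t|/L$, and Ledoux--Talagrand contraction --- to $\tfrac{2}{L}\E\|\sum_i\sigma_i(m_i-\mu)\| \le \tfrac{2}{L}\sqrt{n'\,\Tr(\Sigma/b)} \lesssim \tfrac{\alpha}{L}\sqrt{n' k}$, which is $\le 0.01\,n'$ once $n'\gtrsim k$; this mirrors the computation in the proof of \cref{lem:score_of_bad_subsets}. \emph{Far candidates score high:} for any net point $x$ with $\|x-\mu\|\ge C\alpha$ and a large constant $C$, taking $v=(\mu-x)/\|\mu-x\|$ gives $\iprod{m_i-x,v}=\iprod{m_i-\mu,v}+\|\mu-x\|\ge L$ whenever $\iprod{m_i-\mu,v}\ge L-C\alpha$, which by Chebyshev fails for only an $o(1)$-fraction of the $m_i$ in expectation; McDiarmid plus a union bound over $\cH$ then yields $S_{fine}(x;L)\ge 0.9\,n'$ simultaneously for all such $x$ with probability $1-\beta/3$.

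Combining, with probability $1-\beta$ the mechanism outputs $\hat\mu$ with $S_{fine}(\hat\mu;L) \le \min_{x\in\cH}S_{fine}(x;L) + \tfrac{2}{\e}(\log|\cH|+\log(1/\beta)) \le 0.1\,n' + \tfrac{2}{\e}(O(k\log k)+\log(1/\beta))$, which is $<0.9\,n'$ as soon as $n' \gtrsim \tfrac{k\log k + \log(1/\beta)}{\e}$, i.e.\ $n\gtrsim \tfrac{k\log k+\log(1/\beta)}{\alpha^2\e}$ as claimed. Since every far candidate has score $\ge 0.9\,n'$, $\hat\mu$ cannot be far, so $\|\hat\mu-\mu\|\le C\alpha$, and rescaling $\alpha\mapsto\alpha/C$ completes the argument. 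I expect the main obstacle to be the uniform-over-$v$ control in the ``good candidate'' step: a crude bound would pay a factor polynomial in $|\cH|$, and only the symmetrization/contraction route shows that the $\maxv$ costs merely the $\sqrt k$ coming from $\Tr(\Sigma_T)$, which is what keeps the sample complexity linear (up to the logarithm) in $k$ rather than quadratic.
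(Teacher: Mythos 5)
Your proposal follows essentially the same route as the paper's proof: a packing of the localized ball, the exponential mechanism minimizing the sensitivity-$1$ outlier-counting score, bucketing to pay the $\alpha^2$ factor, and a two-sided utility argument showing near-$\mu$ candidates score $\leq 0.1n'$ while far candidates score $\geq 0.9n'$, with the sample complexity coming from $\log|\cH|/\e$. Your symmetrization/Ledoux--Talagrand treatment of the uniform-over-$v$ bound for the good candidate is in fact more explicit than the paper's appeal to ``standard concentration,'' and correctly mirrors the computation in \cref{lem:score_of_bad_subsets}.
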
 

For both the coarse and fine estimation, our proof strategy will be as follows:
first, argue that the sizes $|P_{coarse}|$ and $|P_{fine}|$ of the packings are
small enough such that $\max\{\log(|P_{coarse}|), \log(|P_{fine}|)\}\leq k\log
k/\alpha^2 + k\log(R)$. After that, we will show that low score (few
``outliers'') is achieved if and only if we are at a good mean candidate
($\alpha$-close to $\mu$).  

\begin{proof}[Proof of \cref{lem:bounded-cov-coarse-dense}]
    The size of $\sqrt{k}$-packing of the $\ell_2$ ball of radius $R$ in $k$ dimensions
    is $O\left({R}^k\right)$. Thus, $\log(|P_{coarse}|) \leq k\log R$, as
    desired.

    The coarse estimation score function $S_{coarse}$ has sensitivity $1$, by an
    argument analogous to the one in~\cref{lem:sensitivity_score_sparse_mean}. 

    Now we turn to showing that $S_{coarse}$ is low if and only if we are at a
    mean candidate $\hat\mu$ such that $\|\hat\mu - \mu\|\leq \sqrt{k}$
    We can directly leverage standard concentration arguments to argue that any
    ball of $\mathrm{poly}(k)$ radius that contains at least $0.9 n$ of the
    samples must be at a distance at most $O\left(\sqrt{k}\right)$ to
    $\mu$~\cite{hopkins2020robust} with probability at least $1-\beta$. This is
    enough to show both directions of the desired implication. 

    Thus, in total, the coarse exponential mechanism gives us a coarse estimate
    with good utility with probability at least $1-\beta$ as long as
    $n=\Omega\left(\frac{\Delta}{\e}\left(k\log R + \log(1/\beta)\right)\right)
    =\Omega\left(\frac{k\log R + \log(1/\beta)}{\e}\right)$.
\end{proof}

\begin{proof}[Proof of \cref{lem:bounded-cov-fine-dense}]

    As in the proof of~\cref{lem:bounded-cov-coarse-dense}, the size of
    $\e$-packing (with $\ell_2$ balls) of the unit $\ell_2$ ball in $k$
    dimensions is $O\left(\left( \frac{1}{\e}\right)^k\right)$. Thus,
    $\log(|P_{fine}|) \leq k\log k/\alpha^2$, as desired.

    For the fine estimation part, without loss of generality assume that
    $\alpha=1$; bucketed means as employed in the support estimation part get us
    at the cost of an $\alpha^2$ factor in the sample complexity. We now show
    that a low score for a point $x$ implies that $\|\mu-x\|\leq 1$. More
    formally, let ``low'' mean $\leq 0.1n$. A low score implies that
    \begin{equation}
        \max_{v:\|v\|=1} \iprod{X_i - x, v}
        = \max_{v:\|v\|=1} \left(\iprod{X_i - \mu, v}  +  \iprod{\mu - x, v}\right)
        \leq L
    \end{equation}
    for at least $0.9n$ indices $i$. Taking the maximum for each term,
    this becomes
    $\max_{v:\|v\|=1} \iprod{X_i - x, v} \leq \max_{v:\|v\|=1}\iprod{X_i-\mu,v}
    + \|\mu-x\|$.
    From the bounded covariance assumption, we get that for at least $0.9n$ of
    the samples, we must have $\|\mu-x\|\leq L - 1$. Setting $L=2$ shows this
    direction. Suppose now we have a good mean candidate $x$. Then
    \begin{equation}
        \max_{v:\|v\|=1} \iprod{X_i - x, v}
        = \max_{v:\|v\|=1} \left(\iprod{X_i - \mu, v}  +  \iprod{\mu - x, v}\right)
        \leq \max_{v:\|v\|=1} \iprod{X_i - \mu, v} + 1.
    \end{equation}
    Now from standard concentration we have that
    $\max_{v:\|v\|=1}\iprod{X_i-\mu,v}\leq 1$ for at least $0.9n$ of the samples,
    and we are done.

\end{proof}

With this, we are ready to prove~\cref{thm:sparse-exp-bcov}.
\begin{proof}[Proof of \cref{thm:sparse-exp-bcov}]
    First, we note that bucketing with bucket size $b$ is done in order to
    reduce the variance of the samples. In particular, the bucketed means $m_i$
    have covariance matrix $\Sigma_m$ s.t. $\Sigma_m \preceq \lambda/b I$, where
    $\lambda := \lambda_{\max}(\Sigma)$. 

    From~\cref{eq:exp_mech_guarantee}, together with ~\cref{lem:score_of_opt}
    and~\cref{lem:score_of_bad_subsets}, we get that to avoid choosing a subset
    $T$ for which $\|\mu_T - \mu\| \geq \alpha$ with probability at most
    $\beta$, we need
    \begin{equation}
        n - \frac{n\lambda/b}{\alpha^2} - \sqrt{\frac{n\log(3/\beta)}{2}}
        - \frac{2(k\log(d) + \log(3/\beta))}{\epsilon}
        \geq n\frac{\sqrt{\lambda}}{2\sqrt{b}\alpha} + \frac{2}{L}\sqrt{\frac{nk\lambda}{b}} +
        \sqrt{\frac{n\log(3/\beta)}{2}},
    \end{equation}
    or equivalently
    \begin{equation}
        \sqrt{n}\left(1
        - \frac{\lambda}{b\alpha^2}
        - \frac{1}{2} \sqrt{\frac{{\lambda}}{{b}\alpha^2}}
        \right)
        \geq
        \frac{2(k\log(d) + \log(3/\beta))}{\sqrt{n}\epsilon}
        + \frac{2}{L}\sqrt{\frac{k\lambda}{b}}
        + 2\sqrt{\frac{\log(3/\beta)}{2}}.
    \end{equation}
    This holds true when $b \geq 25\frac{\lambda}{\alpha^2}$ and
    $n\geq
    \max\left\{
    10(k\log(d) + \log(3/\beta))/\epsilon,
    10k\lambda/\left(bL^2\right),
    10\log(3/\beta)
    \right\}$.
    This gets us the support estimation part of the sample complexity in the
    statement of~\cref{thm:sparse-exp-bcov}.

    Combining the support estimation step with the dense estimation steps from
    \cref{lem:bounded-cov-coarse-dense} and \cref{lem:bounded-cov-fine-dense}
    gives the desired result in the absence of corruptions. The $\alpha +
    \sqrt{\eta}$ rate for $\eta$-corrupted inputs follows directly
    from~\cref{thm:meta}. 
\end{proof}

It is worth noting that if we directly leverage existing heavy-tailed estimators
for the dense estimation
subroutine~\cite{kamath2020private,hopkins2021efficient}, we would get
suboptimal robustness guarantees -- either $\alpha +\sqrt{d\eta}$ or $\alpha +
\sqrt{\log(R)\eta}$ in the $\eta$-contamination model.

Now we turn to an information-theoretic lower bound for heavy-tailed sparse
private mean estimation.
\begin{proposition}\label{prop:sparse-lb-bcov}
   Suppose $M$ is an $\epsilon$-DP algorithm such that for every distribution
   $D$ with $k$-sparse mean $\mu$ with $\|\mu\|\leq R$ and covariance
   $\Sigma\preceq I$, $M$ produces an estimate $\hat\mu$ from $n$ samples such
   that $\|\mu - \hat{\mu}\| \leq \alpha$, with probability $1-\beta$. Then \[ n
   = \Omega\left( \frac{k\log d}{\alpha^2\epsilon}\right) .\]
\end{proposition}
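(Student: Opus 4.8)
The plan is to run the standard packing lower bound for pure differential privacy, but with the packing placed over the \emph{support} of $\mu$ and with the per-instance distributions chosen ``spiky'' enough that a very small total-variation perturbation still carries a mean shift of order $\alpha$. Throughout we may restrict to the regime $\alpha = O(\sqrt{k})$ and $R = \Omega(\alpha)$, outside of which the claimed bound is vacuous or the estimation problem is trivial; and, since the hypothesis holds for every $\beta$, we fix $\beta = 1/4$.

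\textbf{Step 1: build the packing.} First I would invoke a standard Gilbert--Varshamov argument to fix a family $\mathcal{T}$ of $k$-subsets of $[d]$ with $|T \triangle T'| \ge k/2$ for all distinct $T,T' \in \mathcal{T}$ and $\log|\mathcal{T}| = \Omega(k\log(d/k))$. For a large absolute constant $c$ and each $T \in \mathcal{T}$, set $\mu_T = \tfrac{c\alpha}{\sqrt k}\sum_{j\in T} e_j$, so that $\|\mu_T\|_0 = k$, $\|\mu_T\|_2 = c\alpha \le R$, and $\|\mu_T - \mu_{T'}\|_2 = \tfrac{c\alpha}{\sqrt k}\sqrt{|T\triangle T'|} \ge \tfrac{c\alpha}{\sqrt2} > 2\alpha$. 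Let $P_T$ be the product distribution on $\R^d$ whose $j$-th marginal, for $j\in T$, equals $\tfrac{\sqrt k}{c\alpha}$ with probability $\tfrac{c^2\alpha^2}{k}$ and $0$ otherwise, and is identically $0$ for $j\notin T$. Then $\E_{P_T} = \mu_T$, the covariance of $P_T$ is diagonal with entries $1-\tfrac{c^2\alpha^2}{k}\le 1$ on $T$ and $0$ off $T$, so $\Sigma_{P_T}\preceq I$ and $P_T$ is an admissible instance. Crucially, the $j$-th marginals of $P_T$ and $P_{T'}$ agree off $T\triangle T'$ and differ there by total variation exactly $\tfrac{c^2\alpha^2}{k}$, so by subadditivity of TV over product coordinates $\dtv(P_T,P_{T'}) \le |T\triangle T'|\cdot\tfrac{c^2\alpha^2}{k} \le 2c^2\alpha^2 =: \gamma = O(\alpha^2)$.

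\textbf{Step 2: the packing inequality via group privacy.} For $T\in\mathcal{T}$ let $S_T = \{\hat\mu\in\R^d : \|\hat\mu-\mu_T\|_2\le\alpha\}$; these are pairwise disjoint by Step 1, and the hypothesis gives $\Pr_{X\sim P_T^n}[M(X)\in S_T]\ge 1-\beta$. For $T\ne T'$, couple $X\sim P_T^n$ with $X'\sim P_{T'}^n$ coordinatewise by the maximal coupling; the number of differing samples is stochastically dominated by $\mathrm{Binomial}(n,\gamma)$, so the event $E=\{d_H(X,X')\le 2n\gamma\}$ has probability at least $1/2$ by Markov. Since $M$ is $\e$-DP, on $E$ we have $\Pr[M(X')\in S_T\mid X,X'] \ge e^{-2\e n\gamma}\Pr[M(X)\in S_T\mid X]$, hence
\[
  \Pr_{X'\sim P_{T'}^n}[M(X')\in S_T] \;\ge\; e^{-2\e n\gamma}\,\E\!\bigl[\mathbf 1_E\Pr[M(X)\in S_T\mid X]\bigr] \;\ge\; e^{-2\e n\gamma}\bigl(\tfrac12-\beta\bigr) \;=\; \tfrac14 e^{-2\e n\gamma}.
\]
Summing over $T\in\mathcal{T}\setminus\{T'\}$, using disjointness of the $S_T$ and $\Pr_{X'\sim P_{T'}^n}[M(X')\in S_{T'}]\ge 1-\beta=\tfrac34$,
\[
  \bigl(|\mathcal{T}|-1\bigr)\tfrac14 e^{-2\e n\gamma} \;\le\; \sum_{T\ne T'}\Pr_{X'\sim P_{T'}^n}[M(X')\in S_T] \;\le\; \tfrac14,
\]
so $2\e n\gamma \ge \log(|\mathcal{T}|-1) = \Omega(k\log(d/k))$, i.e. $n = \Omega\!\bigl(\tfrac{k\log(d/k)}{\alpha^2\e}\bigr)$, which is $\Omega\!\bigl(\tfrac{k\log d}{\alpha^2\e}\bigr)$ whenever $k\le d^{0.99}$ (the regime relevant here).

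\textbf{Main obstacle.} The one genuinely nontrivial point — and the reason a naive Gaussian packing yields only the weaker $\Omega\!\bigl(\tfrac{k\log d}{\alpha\e}\bigr)$ — is decoupling the mean separation (which must be $\gtrsim\alpha$ for the $S_T$ to be disjoint) from the per-coordinate total variation (which we need to be $O(\alpha^2)$ to land the $\alpha^{-2}$). Bounded-covariance distributions make this possible: a mass-$\Theta(\alpha^2/k)$ atom at height $\Theta(\sqrt k/\alpha)$ has coordinate-variance $\le 1$ yet coordinate-mean $\Theta(\alpha/\sqrt k)$, so TV $\approx$ (mean)$^2/k$ rather than $\approx$ mean. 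The large-minimum-distance constant-weight code in Step 1 is then essential to keep the aggregate TV at $O(\alpha^2)$ instead of $O(k\alpha^2)$; the remainder (admissibility of the $P_T$, TV subadditivity, and the textbook pure-DP packing argument) is routine.
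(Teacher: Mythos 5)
Your proposal is correct and follows essentially the same route as the paper: the paper also builds a $d^{\Omega(k)}$-size packing of ``spiky'' bounded-covariance product distributions indexed by a sparse code (three-point marginals with mean shift $\alpha/\sqrt{k}$ and per-coordinate TV $\Theta(\alpha^2/k)$, versus your two-point marginals varying the support) and then invokes the standard pure-DP packing/group-privacy argument, which the paper only sketches as ``standard concentration and packing arguments'' while you spell it out, including the harmless caveats ($R\gtrsim\alpha$, $\alpha\lesssim\sqrt k$, $k\le d^{0.99}$) that the paper leaves implicit.
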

\begin{proof}
    Our proof is a straightforward adaptation of the methods used to
    prove~\cite[Theorem 6.1]{kamath2020private}.
    In particular, we show the statement is true for a class of product
    distributions that satisfy the mean and covariance requirements.

    Formally, let
    \begin{equation} Q_0 =
        \begin{cases}
            -\frac{\sqrt{k}}{\alpha} \ w.p.\ \frac{\alpha^2}{k}\\
            0 \ \ \ \ \ \  w.p.\ 1 - 2\frac{\alpha^2}{k} \\
            \frac{\sqrt{k}}{\alpha} \ \ \ w.p.\ \frac{\alpha^2}{k}.
        \end{cases}
    \end{equation}
    and
    \begin{equation} Q_1 =
        \begin{cases}
            -\frac{\sqrt{k}}{\alpha} \ w.p.\ \frac{1}{2}\frac{\alpha^2}{k} \\
            0 \ \ \ \ \ \  w.p.\ 1 - 2\frac{\alpha^2}{k} \\
            \frac{\sqrt{k}}{\alpha} \ \ \ w.p.\ \frac{3}{2}\frac{\alpha^2}{k}.
        \end{cases}
    \end{equation}
    For $c\in\{0, 1\}^d$, define $Q_c = \bigotimes_{i=1}^d Q_{c_i}$.
    We will work only with $k$-sparse distributions, so we only need to consider
    $c$ such that $\|c\|_1=k$ Let $\cC_k$ denote the set of such $k$-sparse
    vectors in $\{0, 1\}^d$. For any $c, c'\in \cC_k$ we have that the maximum
    distance in total variation between $Q_{c}$ and $Q_{c'}$
    is at most $\alpha^2$. Now let $\cH$ be a linear code with Hamming distance
    $k/4$. This implies that for every $c, c'\in\cH$, we have that $c$ and $c'$
    differ on at least $k/4$ coordinates. From coding theory (Gilbert-Varshamov
    bound,~\cite[Lemma 3.1]{ba2010lower} for a more direct argument for
    $k$-sparse codes) we know that there exists a code of size
    $|\cH|=\Omega\left(d^{k}\right)$. The result follows from standard
    concentration and packing arguments.
\end{proof}

\section{Polynomial-Time Sparse Mean Estimation with $\tilde{O}(k^2)$
Samples}\label{sec:strongly-private-sparse}
In this section we describe a polynomial-time $\e$-DP algorithm for sparse mean
estimation with strong group privacy guarantees -- it maintains privacy of
groups up to size $n/\poly \log(d)$. (Hence, the algorithm is automatically
robust to corruption of a $1/\poly \log(d)$ fraction of inputs, per
\cref{thm:meta}.)

\begin{algorithm}[h!]
    \caption{Polynomial-time algorithm for sparse sub-Gaussian mean
    estimation}\label{alg:k2_logR_algo_sos}
    \hspace*{\algorithmicindent}
    \textbf{Input:} iterations $N$, bucket size $b$,
    number of samples $n$, initial
    estimate $\mu_0$, step size $\eta$\\
    \begin{algorithmic}[1]
    \State $m_j \gets \frac{1}{b}\sum_{j=i\cdot b}^{(i+1)b} x_i$ for all
    $j\in[\lceil n/b\rceil]$
    \Comment{compute bucketed means}

    \For{$1\leq i\leq N$}
      \State Check \texttt{Halt-Estimation}$\left(\{m_j\}_{j=1}^n,
      \mu_{i-1}\right)$, return $\mu_{i-1}$ if \texttt{halt}
      \State $d_i\gets$\texttt{Distance-Estimation}$\left(\{m_j\}_{j=1}^n,
      \mu_{i-1}\right)$
      \State $g_i\gets$\texttt{Gradient-Estimation}$\left(\{m_j\}_{j=1}^n,
      \mu_{i-1}, d_i\right)$
      \State $\bar\mu_i\gets \mu_{i-1} + \eta d_i g_i$
      \State $\mu_i \gets$ \texttt{Sparsify}$(\bar\mu_i, k)$
    \EndFor

    \end{algorithmic}
    \textbf{Output:} mean estimate $\mu_T$
  \end{algorithm}

\begin{theorem}\label{thm:sos_sparse_mean_algo}
 There exist universal constants $C>0, C'>0$ such that for every $\e, R > 0$,
 $\beta \in (0,1)$, and $d,k \in \N$ such that $k\geq d^{0.4}$, there is a
 polynomial-time $\e$-DP algorithm with the following guarantees.  For every
 $k$-sparse $\mu \in \R^d$ with $\|\mu\| \leq R$, given independent samples
 $X_1,\ldots,X_n \sim \cN(\mu,I)$, with probability at least $1-\beta$ the
 algorithm outputs $x \in \R^d$ such that $\|\mu - x\| \leq \alpha$, so long as
 \[ n \geq C (\log d)^{C'} \log R \cdot \frac{ k^2 + \log(1/\beta) + \log \log R}{\alpha^2\e}
 \mper \]
\end{theorem}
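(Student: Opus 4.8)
The plan is to analyze the iterative descent scheme of \cref{alg:k2_logR_algo_sos}. First I would reduce to the case $\alpha = \Theta(1)$: given target error $\alpha$, rescale the samples by $1/\alpha$ (equivalently, use the bucketed means to drive down the effective variance, as the algorithm does), so that it suffices to estimate a rescaled $k$-sparse mean to constant error, at the price of a $1/\alpha^2$ factor in the sample complexity. The algorithm then maintains iterates $\mu_0 = 0,\mu_1,\ldots,\mu_N$ with $N = O(\log(Rd))$ under two invariants: (i) $\|\mu_t - \mu\| \le 0.9\,\|\mu_{t-1}-\mu\|$ until the iterate is within $O(1)$ of $\mu$, and (ii) each $\mu_t$ is $k$-sparse. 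Each step does (a) a private estimate $d_t$ of $\|\mu - \mu_{t-1}\|$ up to a constant factor, adapting the distance-estimation gadget of \cite{hopkins2021efficient} to the sparse setting; (b) private selection of a unit vector $g_t$ with $\iprod{g_t,\mu-\mu_{t-1}} \ge 0.9\,d_t$; (c) the update $\bar\mu_t = \mu_{t-1} + \Theta(d_t)\,g_t$ followed by $\mu_t = \mathrm{Sparsify}(\bar\mu_t,k)$, the top-$k$ truncation, where \cref{lem:sparsify-iterates} bounds the distance increase from truncation by a constant factor of the gain just made, so with the right constants invariant (i) survives. \texttt{Halt-Estimation} privately detects when $d_t = O(1)$ and stops; composing $O(\log(Rd))$ rounds of $O(\e/\log(Rd))$-DP subroutines gives $\e$-DP by basic composition, and the union bound over the $N$ rounds (each succeeding with probability $1-\beta/N$) is the source of the $\log\log R$ term.

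The heart is step (b), gradient selection via the SoS exponential mechanism of \cite{hopkins2021efficient}. Since $\mu - \mu_{t-1}$ is $2k$-sparse, the ideal exponential-time mechanism would sample a $2k$-sparse unit $v$ with probability $\propto \exp(\e\, s_{\cX}(v))$, where $s_{\cX}(v) = \sum_{i\le n}\mathbf{1}\{\iprod{v,X_i-\mu_{t-1}}\ge d_t - O(1)\}$; this score has sensitivity $1$, and a uniformly random $2k$-sparse unit vector lands in the good cone $\iprod{v,\mu-\mu_{t-1}}\ge 0.9\,d_t$ with probability $d^{-O(k)}$. To obtain polynomial time I would run the SoS exponential mechanism over the convex \emph{fattened} $\ell_1$-ball $\cC = \{v : \exists w,\ \|v-w\|\le 0.01,\ \|w\|\le 1,\ \|w\|_1\le O(\sqrt k)\}$. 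Two things must be checked. The volume bound (\cref{lem:l1-fattening}): all $w$ within $0.01$ of the unit direction of $\mu-\mu_{t-1}$ lie in $\cC$ and are good, while by Sudakov minoration $\cC$ is covered by $d^{O(k^2)}$ balls of radius $\approx 1/\sqrt k$, so the good set is a $d^{-O(k^2)}\cdot(0.01/(0.01 + 1/\sqrt k))^d = d^{-O(k^2)}e^{-O(d/\sqrt k)}$ fraction of $\cC$; the hypothesis $k \ge d^{0.4}$ is exactly what forces $e^{-O(d/\sqrt k)} \ge d^{-O(k^2)}$, so this fraction is $d^{-O(k^2)}$ and $n \gg (k^2\log d + \log(1/\beta))/\e$ suffices for concentration on good $v$. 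And the bounded-differences and utility \emph{proofs} must live in the low-degree SoS proof system for the exponential-time-to-polynomial-time conversion to apply; bounded differences for a sum-of-thresholds score is standard SoS \cite{hopkins2020mean}.

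The main obstacle, and the chief new technical content, is establishing \emph{utility inside SoS}. The textbook proof that $s_{\cX}(v)\ge 0.9n$ implies $\iprod{v,\mu-\mu_{t-1}}\ge 0.9\,d_t$ rests on sub-Gaussian concentration uniformly over $v\in\cC$, which is not captured by low-degree SoS. Instead I would prove, by an SoS-friendly argument, that if $n \gg (\log d)^{O(1)}(k^2 + \log(1/\beta))$ then with probability $1-\beta$ \emph{every} unit $w$ with $\|w\|_1\le\sqrt k$ satisfies $\sum_{i\le n}\mathbf{1}\{\iprod{X_i-\mu,w}\gg 1\}\le 0.1n$ (\cref{lem:sos-utility}); this suffices, since if $s_{\cX}(v)\ge 0.9n$ some sample $X_i$ has both $\iprod{v,X_i-\mu_{t-1}}\ge d_t - O(1)$ and $\iprod{v,X_i-\mu}\le O(1)$, whence $\iprod{v,\mu-\mu_{t-1}}\ge d_t - O(1)$. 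By a McDiarmid (bounded-differences) reduction, this concentration follows from bounding $\E\max_w\sum_{i\le n}\mathbf{1}\{\iprod{X_i-\mu,w}\gg 1\}$, and the SoS-provable chain is: Cauchy--Schwarz bounds it by $0.01\sqrt n\,(\E\max_w\iprod{M,ww^\top})^{1/2}$ with $M = \sum_{i\le n}(X_i-\mu)(X_i-\mu)^\top$; splitting $M = M_{\mathrm{diag}} + M_{\mathrm{off}}$ and applying H\"older gives $\iprod{M,ww^\top}\le \|M_{\mathrm{diag}}\|_\infty\|w\|_2^2 + \|M_{\mathrm{off}}\|_\infty\|w\|_1^2$ (entrywise $\ell_\infty$); the diagonal term is $O(n)$ and negligible under the outer square root, and the off-diagonal term contributes $\|M_{\mathrm{off}}\|_\infty\cdot k$. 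A scalar concentration bound --- a union bound over the $\binom d2$ off-diagonal entries, each a sum of $n$ products of two independent standard Gaussians --- gives $\E\|M_{\mathrm{off}}\|_\infty \le (\log d)^{O(1)}\sqrt n$ (\cref{lem:covariance-subset}), for an overall bound $\lesssim n^{3/4}k$ up to polylog factors, which is $\le 0.01n$ once $n \gg k^2(\log d)^{O(1)}$. Crucially every inequality here is a degree-$O(1)$ SoS inequality (Cauchy--Schwarz and H\"older via explicit SOS decompositions of PSD matrices), with the bound on $\|M_{\mathrm{off}}\|_\infty$ entered as a hard constraint on the dataset.

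Finally I would assemble: the per-round requirement $n \gg (\log d)^{O(1)}(k^2 + \log(1/\beta))/\e$, inflated by $1/\alpha^2$ from the rescaling, by $\log R$ from the $O(\log R)$ iterations/distance scales, and by $\log\log R$ from the union bound over rounds, yields $n \ge C(\log d)^{C'}\log R\cdot(k^2 + \log(1/\beta) + \log\log R)/(\alpha^2\e)$; privacy is immediate from basic composition of the $O(\log R)$ rounds of $\e/O(\log R)$-DP distance/gradient/halt subroutines, and the distance-estimation subroutine is handled by adapting \cite{hopkins2021efficient} with the same sparse score functions, reusing the SoS utility lemma above. I expect the SoS-provable concentration of \cref{lem:sos-utility} (and the matching $\|M_{\mathrm{off}}\|_\infty$ tail bound) to be where essentially all the work is; the rest is bookkeeping of constants and a careful choice of $N$, $\eta$, and the halt threshold to make the contraction invariant robust to the sparsification step.
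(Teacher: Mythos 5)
Your proposal is correct and follows essentially the same route as the paper: reduce to $\alpha = \Theta(1)$ by bucketing, run sparsity-preserving iterative descent with private distance estimation, halting, and top-$k$ truncation (\cref{lem:sparsify-iterates}), select gradients via the SoS exponential mechanism over the fattened $\ell_1$ ball (\cref{lem:l1-fattening}), and establish SoS-provable utility through the diagonal/off-diagonal covariance split and entrywise concentration (\cref{lem:covariance-subset}, \cref{lem:sos-utility}), with the $\log R$ and $\log\log R$ factors arising exactly as you describe from composition and the per-round union bound. The decomposition into subroutines and the key technical lemmas match the paper's own proof.
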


To prove Theorem~\ref{thm:sos_sparse_mean_algo} we assemble our main lemmas here, and prove them in subsequent sections.
The first key lemma captures a private gradient-finding procedure, finding a direction correlated with $x - \mu$ for $k$-sparse $x$.

\begin{lemma}[Finding Gradients]
\label{lem:gradient}
For every $c > 0$ there exists $C > 0$ such that for every $\beta,R,\e > 0$
there is a polynomial-time $\e$-DP algorithm, \texttt{Gradient-Estimation}, with
the following guarantees. For every $d \in \N$ and $k \in \N$ with $k \geq
d^{0.4}$ and every $\mu \in \R^d$ with $\|\mu\| \leq R$, given 
\begin{itemize}
\item $n$ independent samples $X_1,\ldots,X_n \sim \cN(\mu,I)$,
\item a vector $x \in \R^d$ such that
\begin{itemize}
  \item $\|x\|_2 \leq R$,
  \item $\|x\|_0 \leq k$, and
  \item $\|x - \mu\| \geq C$
\end{itemize}
\item a number $r \in [(1-2/C) \|x - \mu\|, (1-1/C)\|x - \mu\|]$
\end{itemize}
and if
\[
n \geq (\log d)^{C} \cdot (k^2 + \log(1/\beta)) \cdot \frac 1 {\epsilon}\mcom
\]
then with probability at least $1-\beta$, the algorithm produces a vector $v$ such that $\left\| v - \tfrac{\mu - x}{\|\mu - x\|} \right\| \leq c$.
Furthermore, the algorithm runs in time $\poly(n,d, \log(1/\e),\log R)$.
\end{lemma}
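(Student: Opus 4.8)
The plan is to instantiate the Sum-of-Squares exponential mechanism of \cite{hopkins2021efficient} over a convex relaxation of the sparse unit sphere. Write $u = \tfrac{\mu - x}{\|\mu - x\|}$ and $D = \|\mu - x\| \ge C$; since $\mu$ and $x$ are $k$-sparse, $u$ is $2k$-sparse, so $\|u\|_1 \le \sqrt{2k}$ and $\|u\|_2 = 1$. Writing $X_i = \mu + g_i$ with $g_i \sim \cN(0,I)$, I would use the score function $s_{\cX}(v) = \sum_{i \le n} \1\{ \iprod{v, X_i - x} \ge r - \tau \}$ for a constant $\tau = \tau(c)$, and run the exponential mechanism over the ``fattened'' body $\mathcal{K} = \{ v : \exists\, w,\ \|v - w\|_2 \le \rho,\ \|w\|_2 \le 1,\ \|w\|_1 \le 10\sqrt k \}$ for a small constant $\rho = \rho(c)$ --- a Minkowski sum of two convex sets, hence convex. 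The mechanism is $\e$-DP because the score is a sum of $n$ thresholds, so exchanging one sample changes it by at most $1$, and this bounded-differences property has a standard SoS proof \cite{hopkins2020mean}; encoding the thresholds with auxiliary Boolean variables turns the score into a low-degree polynomial program, and since $\mathcal{K}$ is convex the SoS exponential mechanism runs in $\poly(n,d,\log(1/\e),\log R)$ time. It then remains only to establish the volume and utility hypotheses of the mechanism's guarantee.

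For the volume condition I would follow the informal ``$\ell_1$-fattening'' argument: every $v$ with $\|v - u\|_2 \le \rho$ lies in $\mathcal{K}$ (witnessed by $w = u$) and has $\iprod{v, \mu - x} \ge (1 - \rho) D$, while by Sudakov minoration the Gaussian width of $\mathcal{K}$ is $O(\sqrt{k \log d})$, so $\mathcal{K}$ is covered by $d^{O(k^2)}$ Euclidean balls of radius $\rho + 1/\sqrt k$. Hence a Euclidean ball around $u$ occupies a $d^{-O(k^2)} \exp(-O(d/(\rho\sqrt k)))$ fraction of the volume of $\mathcal{K}$, which equals $d^{-O(k^2)}$ because $k \ge d^{0.4}$ gives $d/\sqrt k \le d^{0.8} \le k^2$ (and $d$ is large). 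Combined with a Chernoff bound showing $s_{\cX}(u) \ge 0.99 n$ with probability $1-\beta$ once $n \gg \log(1/\beta)$ (via Gaussian concentration of $\iprod{u,g_i}$ and $\tau$ a suitable constant), the exponential mechanism's utility bound gives that its output $v \in \mathcal{K}$ satisfies $s_{\cX}(v) \ge 0.99 n - \tfrac{2}{\e}(O(k^2 \log d) + \log(1/\beta)) \ge 0.8 n$, provided $n \gg (\log d)^{O(1)}(k^2 + \log(1/\beta))/\e$.

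The crux is the utility direction, which must be carried out inside the SoS proof system. I would isolate the statement: if $n \gg (\log d)^{O(1)}(k^2 + \log(1/\beta))$, then with probability $1-\beta$, for every $v \in \mathcal{K}$ we have $\sum_{i\le n} \1\{ \iprod{X_i - \mu, v} \ge c_0 \} \le 0.1 n$, \emph{together with} an SoS certificate of this bound from data-dependent scalars. The textbook proof --- VC dimension / symmetrization over halfspace indicators --- is not SoS-expressible, so instead I would bound $\sum_i \1\{\iprod{g_i, v} \ge c_0\}$ by $\tfrac{\sqrt n}{c_0}\big(\sum_i \iprod{g_i, v}^2\big)^{1/2}$ via Cauchy--Schwarz, write $\sum_i \iprod{g_i, v}^2 = \iprod{M, vv^\top}$ for $M = \sum_i g_i g_i^\top$, split $M = M_{\mathrm{diag}} + M_{\mathrm{off}}$, and apply H\"older, $\iprod{M, vv^\top} \le \|M_{\mathrm{diag}}\|_\infty \|v\|_2^2 + \|M_{\mathrm{off}}\|_\infty \|v\|_1^2$, each step a low-degree polynomial inequality. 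The only probabilistic inputs, proved by ordinary sub-exponential concentration and a union bound over the $d^2$ entries of $M$, are $\|M_{\mathrm{diag}}\|_\infty = O(n)$ and $\|M_{\mathrm{off}}\|_\infty \le (\log d)^{O(1)}\sqrt n$; together with $\|v\|_1 = O(\sqrt k)$ on the $\ell_1$-ball part of $\mathcal{K}$ this forces the $0.1n$ bound once $n \gg (\log d)^{O(1)}k^2$. I expect the main technical obstacle to be reconciling the $\ell_1$ ball (which is what the $n \asymp k^2$ scaling needs) with the fattened body $\mathcal{K}$ (which the volume bound and hence the efficient sampler need), along with the accompanying careful choice of $\rho$ and $C$ in terms of $c$.

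To finish, the output $v \in \mathcal{K}$ has $s_{\cX}(v) \ge 0.8 n$, so at least $0.8n$ indices satisfy $\iprod{v, X_i - x} \ge r - \tau$, while at most $0.1n$ satisfy $\iprod{v, g_i} \ge c_0$; hence some index satisfies both, and subtracting yields $\iprod{v, \mu - x} \ge r - \tau - c_0 \ge (1 - 2/C)D - \tau - c_0 \ge (1 - O(1)/C) D$ using $D \ge C$. Since $v \in \mathcal{K} \subseteq (1+\rho)B_2^d$, this gives $\|v - u\|_2^2 = \|v\|_2^2 - 2\iprod{v,u} + 1 \le (1+\rho)^2 - 2(1 - O(1)/C) + 1 = O(\rho + 1/C)$, so choosing $\rho$ and $1/C$ small enough in terms of $c$ --- and normalizing $v$ to the unit sphere as post-processing if desired --- gives $\big\| v - \tfrac{\mu - x}{\|\mu - x\|} \big\|_2 \le c$, with the running time already noted.
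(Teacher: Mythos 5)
Your overall architecture is the same as the paper's: run the SoS exponential mechanism of \cite{hopkins2021efficient} with a sum-of-thresholds score encoded by Boolean variables $b_i$, over the fattened $\ell_1$ body of \cref{lem:l1-fattening}, with the volume ratio coming from Sudakov minoration and $k \geq d^{0.4}$, and with utility certified via a sparse-direction covariance bound obtained from the diagonal/off-diagonal H\"older split. However, two steps as written do not deliver the lemma as stated. First, your only probabilistic input is an entrywise bound on $M$ proved ``by sub-exponential concentration and a union bound over the $d^2$ entries.'' To make that event hold with probability $1-\beta$ you must allow deviations of order $\sqrt{n\log(1/\beta)}$ in each off-diagonal entry, and after multiplying by $\|v\|_1^2 = O(k)$ the requirement $\iprod{M,vv^\top} \lesssim n$ forces $n \gtrsim k^2\log(1/\beta)$ --- a \emph{multiplicative} dependence, whereas the lemma claims the additive $n \gtrsim (\log d)^C(k^2+\log(1/\beta))$, and this additive form is exactly what distinguishes \texttt{SoS} from \texttt{Threshold} and is needed to match the lower-bound tradeoff and to get automatic robustness. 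The paper avoids this in \cref{lem:covariance-subset} by certifying the covariance bound only over a subset $S$ of $(1-c)n$ samples, observing that the maximal certified-subset size is a bounded-differences statistic, and applying McDiarmid: the entrywise Bernstein bounds are then needed only at constant confidence, and $\log(1/\beta)$ enters only additively (the discarded samples are absorbed later using Booleanity of the $b_i$). Without some such device your argument proves a strictly weaker sample complexity.

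Second, your concluding pigeonhole is not available in this framework. The SoS exponential mechanism does not output a vector whose \emph{true} indicator score is $\geq 0.8n$; the score it assigns to a point $z \in \mathcal{K}$ is a relaxation value, the maximum of $\pE \sum_i b_i$ over pseudoexpectations satisfying $\cP$ with $w \approx z$. From a high relaxation value you cannot extract a single index $i$ on which two events hold simultaneously and ``subtract.'' Instead, the entire implication ``constraints plus $\sum_i b_i \geq (1-1/C)n$ implies $\left\|w - \tfrac{\mu-x}{\|\mu-x\|}\right\|^2 \leq c$'' must itself be a low-degree SoS proof (and, for the mechanism's bookkeeping, degree one in the score constraint). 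That is the content of the paper's \cref{lem:sos-utility}: one bounds $\sum_{i\in S} b_i \iprod{X_i-\mu,v} \leq 4n$ by SoS Cauchy--Schwarz against the certified covariance bound, lower-bounds $r\sum_{i \in S} b_i \leq \sum_{i\in S} b_i\iprod{X_i-x,v}$, and controls the correction term $\Paren{\sum_{i\in S} b_i - n}\iprod{\mu-x,v}$ using $b_i^2 = b_i$. Your Cauchy--Schwarz/H\"older certificate is the right input for the covariance step, but the downstream combination has to be rewritten in this pseudoexpectation-compatible form (you would also need to verify the remaining hypotheses of the SoS exponential mechanism, e.g.\ robust satisfiability of $\cP$ over $\mathcal{K}$, which your sketch does not address).
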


The second lemma shows that there is a private procedure to estimate the distance from the current iterate $x$ to $\mu$, as long as $x$ is $k$-sparse.

\begin{lemma}[Estimating distance]\label{lem:estimating-distance}
  There exists a polynomial-time $\epsilon$-DP algorithm
  \texttt{Distance-Estimation} taking as input a vector $x\in\R^d$ with
  $\|x\|\leq R$ and independent samples $X_1,\ldots, X_n$ from $\cN(\mu, I)$,
  where $\|\mu\|\leq R$, with the following properties.
  For every $c > 0$ there is $C > 0$ such that given $n\geq \max\left\{\log(d)^C(k^2
  + \log(1/\beta))/\epsilon, 100 \log(R)\log(\log(R)/\beta)/\e \right\}$ samples, and if $\|x-\mu\|\geq C$ the algorithm outputs a
  distance estimate $\hat r$ such that $(1-c) \|x-\mu\| \leq \hat r \leq
  (1+c)\|x-\mu\|$ with probability at least $1-\beta$.
\end{lemma}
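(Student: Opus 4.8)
The plan is to privately locate the \emph{scale} of $\|x-\mu\|$ by a geometric search over $O(\log R)$ candidate radii, using at each radius a noisy ``is $\|x-\mu\|\gtrsim r$?'' test built from the same sparse-direction score function (and its SoS relaxation) that drives \cref{lem:gradient}. Set $y_i = X_i - x \sim \cN(\mu-x,I)$. Since the vectors handed to \texttt{Distance-Estimation} are outputs of \texttt{Sparsify}, $x$ is $k$-sparse, and $\mu$ is $k$-sparse, so $v^\star := (\mu-x)/\|\mu-x\|$ is a $2k$-sparse unit vector and hence lies in the convex set $\mathcal{K} := \{v\in\R^d : \|v\|_1\le\sqrt{2k},\ \|v\|_2\le1\}$. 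Fix an absolute constant $C_1$ large enough for the concentration below, set $C := 100\,C_1/c$ and $c_0 := c/10$, and let $r_\ell := C(1+c_0)^\ell$ for $\ell=0,\dots,L$, with $L=O(\log(R)/c)$ chosen so that $r_L \ge 2R + 2C_1 \ge \|x-\mu\| + 2C_1$.

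For a radius $r$ let $\tilde s(r)$ be the degree-$O(1)$ sum-of-squares relaxation (exactly as in the proof of \cref{lem:gradient}, whose bounded-differences-in-SoS step for sum-of-threshold scores is standard \cite{hopkins2020mean}) of
\[
    s(r) \;=\; \max_{v\in\mathcal{K}}\ \sum_{i\le n}\1\{\iprod{v,y_i}\ge r-C_1\}\mper
\]
Each $\tilde s(r)$ has sensitivity $1$ and is computable by one semidefinite program in time $\poly(n,d)$, and $\tilde s(r)\ge s(r)$ since it relaxes a maximum. The claim is that with probability $1-\beta$ the map $r\mapsto \tilde s(r)$ is a ``noisy step'' dropping near $r=\|x-\mu\|$. \emph{High side}: if $r\le\|x-\mu\|$, plugging $v^\star$ and a Chernoff bound on $\#\{i:\iprod{v^\star,X_i-\mu}\ge -C_1\}$ (each summand is $\cN(0,1)$, so at least a $0.95$ fraction in expectation once $C_1\ge 2$) gives $\tilde s(r)\ge s(r)\ge 0.9n$ as soon as $n\gg\log(1/\beta)$. \emph{Low side}: from $\iprod{v,y_i}=\iprod{v,X_i-\mu}+\iprod{v,\mu-x}\le\iprod{v,X_i-\mu}+\|x-\mu\|$, any $r\ge\|x-\mu\|+2C_1$ forces $\{\iprod{v,y_i}\ge r-C_1\}$ to imply $\iprod{v,X_i-\mu}\ge C_1$, so $s(r)\le\max_{v\in\mathcal{K}}\#\{i:\iprod{v,X_i-\mu}\ge C_1\}$, and this last quantity is $\le 0.1n$ with probability $1-\beta$ once $n\gg(\log d)^{O(1)}(k^2+\log(1/\beta))$ --- precisely the SoS-certifiable concentration already established for \cref{lem:gradient} (see \cref{lem:covariance-subset} and \cref{lem:sos-utility}), so it passes to $\tilde s(r)$. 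Thus, w.p.\ $1-\beta$: $\tilde s(r_\ell)\ge 0.9n$ for every $r_\ell\le\|x-\mu\|$ and $\tilde s(r_\ell)\le 0.1n$ for every $r_\ell\ge\|x-\mu\|+2C_1$.

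Given this behaviour, I would locate the transition privately by a noisy linear scan: add independent $\mathrm{Lap}(O(L/\e))$ noise to each $\tfrac{1}{2}n - \tilde s(r_\ell)$, let $\ell^\star$ be the least $\ell$ whose noised value is positive, and output $\hat r := r_{\ell^\star}$. Basic composition over the $L=O(\log R)$ sensitivity-$1$ queries makes this $\e$-DP, and all $L$ noises are $<0.4n$ in absolute value with probability $1-\beta$ once $n\gg\log(R)\log(\log(R)/\beta)/\e$ --- the second term of the claimed bound. On the good event, no $r_\ell\le\|x-\mu\|$ is flagged while every $r_\ell\ge\|x-\mu\|+2C_1$ is, so (using $r_0=C\le\|x-\mu\|$, hence $\ell^\star\ge1$) we get $r_{\ell^\star}>\|x-\mu\|$ and $r_{\ell^\star-1}<\|x-\mu\|+2C_1$, whence
\[
    \|x-\mu\| \;<\; \hat r \;=\; (1+c_0)\,r_{\ell^\star-1} \;<\; (1+c_0)\Paren{1+\tfrac{2C_1}{\|x-\mu\|}}\|x-\mu\|\mper
\]
Since $\|x-\mu\|\ge C=100C_1/c$ we have $2C_1/\|x-\mu\|\le c/50$, and with $c_0=c/10$ this yields $(1-c)\|x-\mu\|\le\hat r\le(1+c)\|x-\mu\|$. (If $\|x-\mu\|<C$ every scale reads ``low'' and the scan returns $r_0=C$, which is harmless, as the lemma only asserts accuracy when $\|x-\mu\|\ge C$.)

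The only genuinely new work is this geometric search and the calibration in the last display --- picking $C$ large in terms of $c$ so the additive $O(1)$ resolution of $s(\cdot)$ becomes multiplicative $(1\pm c)$ accuracy. The substantive obstacle, the concentration $\max_{v\in\mathcal{K}}\#\{i:\iprod{v,X_i-\mu}\ge C_1\}\le 0.1n$ with only $n\sim(\log d)^{O(1)}k^2$ samples \emph{together with} its provability in low-degree SoS, is exactly what \cref{lem:gradient} already needs, so it is inherited rather than re-derived. (The factor $1/\e$ and the power of $\log d$ in the first term of the stated sample complexity are slack: taking $\e\le 1$ without loss of generality, that requirement already implies the $n\gg(\log d)^{O(1)}(k^2+\log(1/\beta))$ used above.)
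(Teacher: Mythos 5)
Your proposal is correct and follows the same core strategy as the paper: use the SoS relaxation of the sparse-direction threshold-count as a monotone, sensitivity-$1$ statistic which is large for radii below $\|x-\mu\|$ (certified by plugging in $v^\star=(\mu-x)/\|\mu-x\|$) and small for radii above it (certified via the sparse covariance bound of \cref{lem:covariance-subset}), and then privately locate the transition over $O(\log R)$ scales -- the paper does this with the private binary-search primitive of \cite{hopkins2021efficient} rather than your geometric grid with per-query Laplace noise and basic composition, and it calibrates the drop multiplicatively (a window $[(1-c_e)\|x-\mu\|,(1+c_s)\|x-\mu\|]$) rather than via your additive $2C_1$ margin converted to $(1\pm c)$ through $\|x-\mu\|\ge C$; both variants work and cost the same $\log(R)\log(\log(R)/\beta)/\e$ term. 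The one place where you assert rather than derive is the transfer of the low-side bound from the true maximum $s(r)$ to the relaxation $\tilde s(r)$: this is not literally contained in \cref{lem:sos-utility}, and it is in fact the bulk of the paper's own proof (an SoS Cauchy--Schwarz counting argument starting from the constraints $b_i^2=b_i$, $b_i\iprod{X_i-x,v}\ge b_i r$, the fixed-vector bound $\iprod{v,\mu-x}\le\|\mu-x\|$, and the certified bound $\sum_{i\in S}\iprod{X_i-\mu,v}^2\le 2n$ from \cref{lem:covariance-subset}); given that lemma the computation is routine, so this is an omission of detail rather than a flaw, but the correct dependency is \cref{lem:covariance-subset} plus that short SoS derivation, not \cref{lem:sos-utility}.
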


The next lemma shows that we can preserve the sparsity of our iterates by a simple thresholding procedure.

\begin{lemma}[Sparsifying iterates]\label{lem:sparsify-iterates}
  Let \texttt{Sparsify}$(x, k)$ denote the procedure of projecting $x$ onto its
  $k$ largest (in absolute value) coordinates, breaking ties at random.  Let
  $x,y \in \R^d$ with $\|y\|_0 \leq k$, and let $x'=$\texttt{Sparisfy}$(x,k)$.
  Then $\|x' - y\| \leq 4 \|x - y\|$.
\end{lemma}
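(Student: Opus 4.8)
The plan is to reduce the claim to a comparison of coordinate magnitudes, in the style of the standard hard-thresholding estimate from compressed sensing. Write $S = \mathrm{supp}(x')$ for the set of $k$ coordinates of largest absolute value retained by \texttt{Sparsify} (any valid choice allowed by the tie-breaking rule), and $T = \mathrm{supp}(y)$, so that $|T| \le k = |S|$ and $x' = x_S$, the restriction of $x$ to $S$. By the triangle inequality, $\|x' - y\| = \|x_S - y\| \le \|x_S - x\| + \|x - y\| = \|x_{S^c}\| + \|x - y\|$, so it suffices to bound $\|x_{S^c}\|$, the $\ell_2$ mass of $x$ discarded by the thresholding step, in terms of $\|x-y\|$.

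Next I would split $S^c = (S^c\cap T^c)\,\cup\,(S^c\cap T)$. On $T^c$ we have $y=0$, so $x_{S^c\cap T^c} = (x-y)_{S^c\cap T^c}$, contributing at most $\|(x-y)_{T^c}\|\le \|x-y\|$. For the part $S^c\cap T$, the key combinatorial point is $|S\setminus T| = k - |S\cap T| \ge |T| - |S\cap T| = |T\setminus S| = |S^c\cap T|$, i.e.\ there are at least as many coordinates in $S\setminus T$ as in $S^c\cap T$; this is exactly where $\|y\|_0\le k$ is used. Since every coordinate in $S$ has absolute value at least that of every coordinate outside $S$ (and this survives any tie-breaking), $\max_{i\in S^c\cap T}|x_i| \le \min_{j\in S\setminus T}|x_j|$, and combining with the cardinality bound gives $\|x_{S^c\cap T}\|^2 \le |S^c\cap T|\cdot\max_i |x_i|^2 \le |S\setminus T|\cdot\min_j|x_j|^2 \le \|x_{S\setminus T}\|^2$. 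Since $S\setminus T\subseteq T^c$, again $x_{S\setminus T} = (x-y)_{S\setminus T}$.

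Finally I would combine: $S\setminus T$ and $S^c\cap T^c$ are disjoint subsets of $T^c$, so $\|x_{S^c}\|^2 = \|x_{S^c\cap T^c}\|^2 + \|x_{S^c\cap T}\|^2 \le \|(x-y)_{S^c\cap T^c}\|^2 + \|(x-y)_{S\setminus T}\|^2 \le \|x-y\|^2$. Hence $\|x_{S^c}\|\le\|x-y\|$, and therefore $\|x'-y\|\le 2\|x-y\| \le 4\|x-y\|$; the loose constant leaves slack and is not tight. There is no genuine obstacle in this argument: the only step requiring care is the cardinality inequality $|S\setminus T|\ge |S^c\cap T|$ (the sole place the sparsity of $y$ enters) and checking that the $S$-versus-$S^c$ magnitude comparison does not depend on how ties are resolved, which it does not.
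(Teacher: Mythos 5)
Your proof is correct and takes essentially the same route as the paper's: both arguments hinge on the key comparison that the discarded coordinates lying in $\mathrm{supp}(y)$ are no larger in magnitude and no more numerous than the retained coordinates outside $\mathrm{supp}(y)$, so their $\ell_2$ mass is dominated by that of $(x-y)$ restricted to those retained coordinates, after which a triangle inequality finishes the argument. Your bookkeeping (squared norms over disjoint index sets) is slightly tighter and yields the constant $2$ rather than the paper's $4$, which of course still implies the stated bound.
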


The last lemma shows that the algorithm can detect when the current iterate $x$ is close enough to $\mu$.

\begin{lemma}[Deciding to halt]\label{lem:halt}
  There exists a universal constant $C$ such that for all $C'>C$ there exists a
  polynomial-time $\e$-DP algorithm $\texttt{Halt-Estimation}$
  with the following guarantees.
  For every $k$-sparse $x$ and $\mu$ such that $\|x\|,\|\mu\| \leq R$, given $x$ and $n \geq (\log d)^C (k^2 + \log(1/\beta))/\e$ samples from $\cN(\mu,I)$, \texttt{Halt-Estimation} outputs ``halt'' or ``do not halt''.
  If $\|x - \mu\| \leq C'$, then with probability at least $1-\beta$, \texttt{Halt-Estimation} outputs ``halt'', and if $\|x - \mu\| \geq 200C'$ then with probability at least $1-\beta$ \texttt{Halt-Estimation} outputs ``do not halt''.
  Furthermore, \texttt{Halt-Estimation} runs in time $\poly(n,d,\log(1/\e),\log R)$.
\end{lemma}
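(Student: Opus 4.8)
The plan is to build \texttt{Halt-Estimation} as a single-threshold version of \texttt{Distance-Estimation}: instead of producing a multiplicative estimate of $\|x-\mu\|$ --- which is why \cref{lem:estimating-distance} requires $\|x-\mu\|\ge C$ --- it only has to decide on which side of one large constant threshold $\|x-\mu\|$ lies, and this can be done for \emph{every} value of $\|x-\mu\|$. Fix the window $(C',200C')$, pick $T=10C'$ inside it, and reuse the sparse-projection machinery of \cref{sec:strongly-private-sparse}: with $\cC$ the fattened scaled $\ell_1$-ball used there and $c_0=O(1)$ the additive slack used by \texttt{Gradient-Estimation}, let $s(x)$ denote the (polynomial-time computable) value of the SoS relaxation of $\max_{v\in\cC}\sum_{i\le n}\1\{\langle v,X_i-x\rangle\ge T-c_0\}$, and have the mechanism output ``do not halt'' iff $s(x)+\mathrm{Lap}(1/\e)\ge n/2$. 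Exchanging one sample changes $s(x)$ by at most $1$ --- the bounded-differences property of such sum-of-threshold scores has a standard SoS proof \cite{hopkins2020mean} --- so this is $\e$-DP by the Laplace mechanism and post-processing, and it runs in time $\poly(n,d,\log(1/\e),\log R)$ by solving the associated semidefinite program.

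For utility, suppose first $\|x-\mu\|\ge200C'$. Since $x$ and $\mu$ are $k$-sparse, the unit vector $\hat v=(\mu-x)/\|\mu-x\|$ is $2k$-sparse, so $\|\hat v\|_1\le\sqrt{2k}$ and $\hat v\in\cC$; moreover $\langle\hat v,X_i-x\rangle=\|\mu-x\|+\langle\hat v,X_i-\mu\rangle$ with $\langle\hat v,X_i-\mu\rangle\sim\cN(0,1)$, so $\Pr(\langle\hat v,X_i-x\rangle<T-c_0)\le\Pr\big(\cN(0,1)<-190C'-c_0\big)\le\tfrac1{100}$. A Chernoff bound over the $n$ i.i.d.\ samples (using $n\gg\log(1/\beta)$) gives $s(x)\ge0.9n$ with probability $1-\beta$, and since $|\mathrm{Lap}(1/\e)|=O(\log(1/\beta)/\e)\le0.3n$ with probability $1-\beta$ (using $n\gg\log(1/\beta)/\e$), the mechanism outputs ``do not halt''. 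Suppose instead $\|x-\mu\|\le C'$. Every $v\in\cC$ satisfies $\|v\|_2\le1.01$, so $\langle v,X_i-x\rangle\ge T-c_0$ forces $\langle v,X_i-\mu\rangle\ge T-c_0-1.01C'\ge8C'$ once the universal constant $C$ (hence any $C'>C$) is larger than $c_0$; therefore $s(x)$ is at most the SoS relaxation of $\max_{v\in\cC}\sum_{i\le n}\1\{\langle v,X_i-\mu\rangle\ge8C'\}$, which is precisely the outlier count bounded by the uniform concentration lemma that underlies \texttt{Gradient-Estimation} and \texttt{Distance-Estimation}. By \cref{lem:covariance-subset} and \cref{lem:sos-utility}, for $n\ge(\log d)^{C}(k^2+\log(1/\beta))$ this count is at most $0.1n$ with probability $1-\beta$, provided $8C'$ exceeds the absolute constant produced by that lemma --- again arranged by enlarging $C$. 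Then $s(x)+\mathrm{Lap}(1/\e)\le0.1n+O(\log(1/\beta)/\e)<n/2$ with probability $1-\beta$, so the mechanism outputs ``halt'', and the claimed sample and running-time bounds follow.

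The main point is that there is essentially no new technical content here beyond the analyses of \texttt{Gradient-Estimation} and \texttt{Distance-Estimation}: the crux --- controlling the SoS-relaxed outlier count uniformly over the convex relaxation $\cC$, in particular handling the $\ell_1$-ball ``fattening'', with only $\tilde O(k^2)$ samples --- is the technical heart of those lemmas, and the plan is to invoke it as a black box (\cref{lem:covariance-subset}, \cref{lem:sos-utility}) rather than re-derive it. The one thing to double-check is that the two constant thresholds --- roughly $200C'$ of Gaussian tail on the ``do not halt'' side, and exceeding the fixed constant from \cref{lem:sos-utility} on the ``halt'' side --- can be cleared simultaneously, which is exactly what the freedom to enlarge the universal constant $C$ (and take any $C'>C$) provides.
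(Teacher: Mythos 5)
Your proposal is essentially the paper's own proof: the paper's \texttt{Halt-Estimation} likewise computes the SoS-relaxed thresholded outlier count $\max\pE\sum_i b_i$ over $\cP(x,1,100C',X_1,\ldots,X_n)$ (i.e.\ the fattened $\ell_1$-ball relaxation), adds $\mathrm{Lap}(1/\e)$, and thresholds at $0.1n$, with sensitivity/privacy inherited from \texttt{Distance-Estimation} and correctness argued exactly as you do — a witness direction $(\mu-x)/\|\mu-x\|$ for the ``do not halt'' side and the \cref{lem:covariance-subset}-based count bound (viewed as a one-step binary search from \cref{lem:estimating-distance}) for the ``halt'' side. The only cosmetic differences are your choice of constants ($T=10C'$, cut at $n/2$ versus $100C'$, cut at $0.1n$) and that the ``halt''-side count bound is really the Cauchy--Schwarz argument of \cref{lem:estimating-distance} built on \cref{lem:covariance-subset} rather than \cref{lem:sos-utility} per se, which does not affect correctness.
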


Now we can put together the pieces to prove \cref{thm:sos_sparse_mean_algo}.
\begin{proof}[Proof of \cref{thm:sos_sparse_mean_algo}]
First of all, by standard arguments, using buckets of size $b = 1/\alpha^2$ yields samples from $\cN(\mu, \alpha^2 I)$.
So by rescaling, it suffices therefore to prove the theorem in the case $\alpha = \Theta(1)$, which is what we will do.

For some $\beta'$ we choose momentarily, let us condition on all of the $1-\beta'$-probability events specified in Lemmas~\ref{lem:gradient},\ref{lem:estimating-distance}, and~\ref{lem:halt}, for all $O(\log R)$ calls to \texttt{Halt-Estimation}, \texttt{Distance-Estimation}, and \texttt{Gradient-Estimation}.
Choosing $\beta' = \Omega(\beta / \log R)$, so long as $n \geq (\log d)^{O(1)} \log R (k^2 + \log(1/\beta) + \log \log R) / \e$, by a union bound, with probability $1-\beta$ all these events occur.

Having conditioned on all these events, from~\cref{lem:halt}, it suffices to show that there exists $C > 0$ such that within $O(\log R)$ iterations of the main loop of \cref{alg:k2_logR_algo_sos} some iterate $\mu_i$ has $\|\mu_i - \mu\| \leq C$.
Then, we can choose $C'$ in Lemma~\ref{lem:halt} to have $C' \geq C$ and we obtain the guarantee that (a) \texttt{Halt-Estimation} outputs ``halt'' in some iteration and (b) when it does so, the output $\mu_{i}$ satisfies $\|\mu_i - \mu\| \leq O(1)$.

  It suffices therefore to show that there exists universal constant $C'>0$
  such that each iteration of gradient descent (lines 4-7
  in~\cref{alg:k2_logR_algo_sos}) shrinks $\|\mu_t -\mu\|$ by a factor of at least
  $0.9$, i.e. if $\|\mu_{t-1} - \mu\| \geq C'$ then $\|\mu_t-\mu\| \leq 0.9\|\mu_{t-1} -\mu\|$.
  Now we
  show that this follows from the guarantees of~\cref{lem:gradient}
  and~\cref{lem:estimating-distance}. In particular, it is enough to show
  that $\left\|g_t - \frac{\mu-\mu_{t-1}}{\|\mu-\mu_{t-1}\|} \right\|\leq 0.01$.
  Given that $(1-c)\|\mu_{t-1}-\mu\| \leq d_t \leq
  (1+c)\|\mu_{t-1}-\mu\|$ for a constant $c$ we will choose below (\cref{lem:estimating-distance}), before calling \texttt{Sparsify} we have that
  \begin{equation}
    \begin{split}
      \|\bar\mu_t-\mu\| &= \left\| \left(\mu_{t-1} -\mu + \eta\frac{\mu-\mu_{t-1}}{\|\mu-\mu_{t-1}\|}d_t\right) +
       \eta d_t g_t - \eta d_t \frac{\mu-\mu_{t-1}}{\|\mu-\mu_{t-1}\|} \right\| \\
      &\leq \left\|(\mu_{t-1} -\mu) - \eta\frac{\mu_{t-1} -\mu}{\|\mu-\mu_{t-1}\|}d_t\right\| + 0.01\eta d_t \\
      &\leq \left\| 1 - \frac{\eta d_t}{\|\mu_{t-1}-\mu\|} \right\| \|\mu_{t-1}-\mu\|  + 0.01(1+c)\eta \|\mu_{t-1}-\mu\|\\
      &\leq \left(\max\{|1-\eta(1+c)|, |1-\eta(1-c)|\} + 0.01(1+c)\eta \right) \|\mu_{t-1}-\mu\|.
    \end{split}
  \end{equation}
  Setting $\eta = 1$, we get that
  \begin{equation}
     \|\bar\mu_t-\mu\| \leq (1.01 c + 0.01)\|\mu_{t-1}-\mu\|.
  \end{equation}
  Choosing $c \leq 0.1$, and applying the guarantees of
  \cref{lem:sparsify-iterates} (\texttt{Sparsify} loses at most a factor of $4$)
  finishes the argument.
\end{proof}

\section{Omitted Proofs from~\cref{sec:strongly-private-sparse}}
To set up, we define two key systems of polynomial inequalities.

\begin{definition}[The polynomial systems $\cQ$ and $\cP$]
  Let $\cQ$ be the following polynomial system in variables $v_1,\ldots,v_d,c_1,\ldots,c_d$:
  \[
  \|v\|^2 = 1, \, \|c\|^2 = 1, \, v_i v_j \leq c_i c_j \text{ and } -v_i v_j \leq c_i c_j \text{ for all $i,j \in [d]$, } \Paren{\sum_{i \in [d]} c_i}^2 \leq s\mper
  \]

For $X_1,\ldots,X_n \in \R^d$ and $x \in \R^d$ and $\alpha,s > 0$, we define the following system of inequalities $\cP(x,\delta,\alpha,s,X_1,\ldots,X_n)$ in variables $w_1,\ldots,w_d,v_1,\ldots,v_d, b_1,\ldots,b_n, c_1,\ldots,c_d$ to be:
\begin{align*}
  \cQ \cup \{ b_i^2 = b_i, b_i \iprod{X_i - x,v} \geq b_i \alpha \}_{i \in [n]} \cup \{ \|w-v\|^2 \leq \delta \}\mper
\end{align*}
\end{definition}

\subsection{Estimating Distance}
Here we prove \cref{lem:estimating-distance}.
\begin{proof}
  Let $X=\{X_i\}_{i\leq n}$ and let $f(r, X;\delta, s)= \max\pE\sum_{i\leq n}b_i$ s.t.
  $\pE$ satisfies $\cP(x, \delta, r, s, X)$; in words, $f(r, X)$ (roughly) counts the
  number of samples considered as ``inliers'' when we use radius $r$. The key
  idea is that this number is ``large'' when $r\gg\|x-\mu\|$ and ``small'' when
  $r\ll\|x-\mu\|$.

  More formally, we invoke the following private binary search
  primitive~\cite[Theorem 6.15]{hopkins2021efficient}: Given a function
  $f\colon[0, R]\times \cX^n\to [0,n]$, which is decreasing in its first
  argument, and has sensitivity $1$ in its second argument,
  $S\stackrel{\mathrm{def}}{=} \log(R/a)$ rounds of binary search suffice to
  output an estimate $\hat r\in[r_e-a, r_s+a]$, where $r_e=\min r$ s.t. $f(r,
  X)\leq ek + \Delta$, and $r_s=\max r$ s.t.  $f(r, X)\geq sk - \Delta$ for
  $\Delta=S\log(S/\beta)/\e$.  We choose $s=1-2\delta, e=1-\delta, \Delta =
  \frac{\delta}{10}n$ for some $\delta > 0$.

  First, monotonicity follows from the fact that the solution for a given
  $r=r'>r''$ is a feasible solution for $r=r''$, and bounded sensitivity follows
  from the fact that the value of the (non-relaxed) quadratic program has
  sensitivity $1$ since we can change at most one of the indicators upon
  changing a single sample; furthermore, there is an SoS proof of
  that~\cite{hopkins2021efficient}.

  Now we turn to quantifying $r_e$ and $r_s$; we show the following two facts:
  \begin{enumerate}
    \item \label{turn_left} For all $c_e>0, \delta > 0$, there exists $C_e>0$
    such that if $\|x-\mu\|\geq C_e$, then $r_e\geq (1-c_e)\|x-\mu\|$.
    \item \label{turn_right} For all $c_s>0, \delta > 0$, there exists $C_s>0$
    such that if $\|x-\mu\|\geq C_s$, then $r_s\leq (1+c_s)\|x-\mu\|$.
  \end{enumerate}

  For Fact~\ref{turn_left}, it suffices to show that $f((1-c_e)\|x-\mu\|, X)\geq
  (1-\delta) n$. Now, it is enough to exhibit a feasible direction. In particular,
  set $v=\frac{\mu-x}{\|\mu-x\|}$.  Let $S\subseteq[n]$ be a subset of samples
  such that $|S|\geq (1-\delta)n$ and $\cQ \proves_{O(1)} \sum_{i\in S}
  \iprod{X_i-\mu, v}^2 \leq 2n$. We know such a subset exists from
  \cref{lem:covariance-subset} as long as $n\geq \log(d)^C\left(k^2 +
  \log(1/\beta)\right)/\epsilon$. Then we have that there exists $S'\subseteq S$
  such that $|S'|\geq (1-2\delta)n$ for which we have $|\iprod{X_i
  -\mu, v}|\leq 1/(2\delta)$ for all $i\in S'$. From this, we know that
  \begin{equation}
    \sum_{i\in S'}\iprod{X_i-x, v} =
    \sum_{i\in S'}\iprod{X_i-\mu, v} + \sum_{i\in S'}\iprod{\mu - x, v}
    \geq  - \frac{1}{2\delta} + \|x-\mu\|.
  \end{equation}
  Now we need $\|x-\mu\|\geq \frac{1}{2c_e\delta}$ for Fact~\ref{turn_left} to
  hold. Setting $C_e$ to the RHS finishes the argument.

  For Fact~\ref{turn_right}, it is enough to show that $f((1+c_s)\|x-\mu\|, X)\leq
  (1-2\delta) n$.
  Fix $\delta'$ and $L$ to be chosen later.
  Let $S\subseteq[n]$ be a subset of samples such that $|S|\geq
  (1-\delta')n$ and $\cQ \proves_{O(1)} \sum_{i\in S} \iprod{X_i-\mu, v}^2 \leq 2n$
  (such exists from \cref{lem:covariance-subset}).

  Note that $(a+b)^2 \leq (1+L^2)a^2+(1+1/L^2)b^2$ is true for any $L\in\R$ and
  has an SoS proof. Using this and the SoS Cauchy-Schwarz inequality, we have the following sequence of inequalities, all with degree-$2$ SoS proofs.
  \begin{equation}
    \begin{split}
      \left(\sum_{i\leq n} b_i\right)^2&\leq \left((n-|S|) + \sum_{i\in S} b_i\right)^2
      \leq\left((n-|S|) + \sum_{i\in S}b_i \frac{\iprod{X_i - x, v}}{r} \right)^2\\
      &\leq \Paren{1+L^2}(n-|S|)^2 + \left(1 + \frac{1}{L^2}\right)\Paren{ \sum_{i \in S} b_i \frac{\iprod{X_i - x, v}}{r} }^2\\
      &\leq \Paren{1+L^2}(\delta'n)^2 + \left(1 + \frac{1}{L^2}\right)\Paren{ \sum_{i \in S} b_i \frac{\iprod{X_i - \mu, v} + \iprod{\mu-x, v}}{r} }^2\\
      &\leq \Paren{1+L^2}(\delta'n)^2 + \\
      & \ \ \ \ \ \Paren{1+\frac{1}{L^2}}\Paren{ \Paren{1+L^2}
       \Paren{\sum_{i \in S} b_i \frac{\iprod{X_i - \mu,v}}{r}}^2 +
       \Paren{1 + \frac{1}{L^2}} \Paren{\sum_{i \in S} b_i \frac{\iprod{\mu-x,v}}{r}}^2 }\\
      & \leq \Paren{1+L^2}(\delta' n)^2 +
       \Paren{1 + \frac{1}{L^2}}\Paren{1+L^2} \Paren{\sum_{i\in S}b_i^2} \frac{2n}{r^2} + \\
      & \ \ \ \ \ \Paren{1+\frac{1}{L^2}}^2 \Paren{\sum_{i\in S}b_i^2}
        \Paren{ \sum_{i\in S} \frac{\|\mu-x\|^2}{r^2} }\\
      &\leq n^2\cdot \Paren{\Paren{1+L^2}(\delta')^2 +
       \Paren{L^2 + 2 + \frac{1}{L^2}} \frac{|S|}{n} \frac{2}{r^2} +
       \Paren{1+\frac{1}{L^2}}^2  \frac{|S|^2}{n^2} \frac{\|\mu-x\|^2}{r^2}}\\
      &\leq n^2\cdot \Paren{\Paren{1+L^2}(\delta')^2 +
       \Paren{2L^2 + 4 + \frac{2}{L^2} +
       \Paren{1+\frac{1}{L^2}}^2 \|\mu-x\|^2} \frac{1}{r^2}}
    \end{split}
  \end{equation}
  We can choose $\delta' \leq \frac{\sqrt{1-2\delta}}{100\sqrt{1 + L^2}}$ such
  that the first term is sufficiently small. Next, we choose $L$ so that the
  overall sum is at most $(1-2\delta)n$. 
  In particular, we want that
  \begin{equation}
       \frac{2L^2 + 4 + \frac{2}{L^2}}{(1+c_s)^2\|\mu-x\|^2} +
       \Paren{\frac{1+1/L^2}{1+c_s}}^2
       \leq \frac{2L^2 + 4 + \frac{2}{L^2}}{(1+c_s)^2C_s^2} +
       \Paren{\frac{1+1/L^2}{1+c_s}}^2
       \leq (1-2\delta)^2.
  \end{equation}
  Thus, we can choose $L$ such that $L \geq
  \frac{1}{\sqrt{c_s-2\delta(1+c_s)}}$ and $2L^2+4+2/L^2\leq
  \Paren{(1-2\delta)(1+c_s)C_s}^2$ (solving self-consistently).

  Combining the conclusion of~\cite[Theorem 6.15]{hopkins2021efficient} with
  the conclusions of Facts~\ref{turn_left} and~\ref{turn_right}, we receive as output
  a distance estimate $\hat r\in[(1-c_e)\|x-\mu\| - a, (1+c_s)\|x-\mu\| + a]$.
  Noting that $\|x-\mu\|\geq \max\{C_e, C_s\}$, we can set $c_s=c_e=c/2$ and
  $a=\max\{C_e, C_s\} \cdot c/2$ to obtain the desired result.

  \end{proof}

\subsection{Sparsifying}
\begin{lemma}[\cref{lem:sparsify-iterates} restated]
  Let \texttt{Sparsify}$(x, k)$ denote the procedure of projecting $x$ onto its
  $k$ largest (in absolute value) coordinates, breaking ties at random.  Let
  $x,y \in \R^d$ with $\|y\|_0 \leq k$, and let $x'=$\texttt{Sparisfy}$(x,k)$.
  Then $\|x' - y\| \leq 4 \|x - y\|$.
 \end{lemma}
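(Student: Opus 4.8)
The plan is to treat this as the standard ``hard thresholding'' stability estimate. Write $S \subseteq [d]$ for the set of $k$ coordinates of largest absolute value selected by $\texttt{Sparsify}$, so that $x' = x_S$ (the restriction of $x$ to $S$, zeros elsewhere), and let $T = \{i : y_i \neq 0\}$, so $|T| \leq k$. If $d \leq k$ then $\texttt{Sparsify}(\cdot,k)$ is the identity and there is nothing to prove; so I may assume $d > k$ and $|S| = k$. First I would split the error by coordinates: since $x'$ vanishes off $S$,
\[ \|x' - y\|^2 = \sum_{i \in S}(x_i - y_i)^2 + \sum_{i \notin S} y_i^2 = \|(x-y)_S\|^2 + \|y_{S^c}\|^2 \leq \|x-y\|^2 + \|y_{S^c}\|^2 \mper \]
So everything reduces to bounding $\|y_{S^c}\|$, the $\ell_2$-mass of $y$ living on coordinates that $\texttt{Sparsify}$ discards; note this mass is supported on $T \setminus S$.

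The heart of the argument — and the step I expect to be the only real content — is the bound on $\|y_{S^c}\|$ via a counting/injection trick. Since $|S \setminus T| = k - |S \cap T| \geq |T| - |S \cap T| = |T \setminus S|$, I can fix an injection $\pi \colon T \setminus S \hookrightarrow S \setminus T$. For each $i \in T \setminus S$: because $\pi(i) \in S$ and $i \notin S$, the defining property of $\texttt{Sparsify}$ gives $|x_i| \leq |x_{\pi(i)}|$; and because $\pi(i) \notin T$ we have $y_{\pi(i)} = 0$, hence $|x_{\pi(i)}| = |(x-y)_{\pi(i)}|$. Combining these with the triangle inequality $|y_i| \leq |x_i| + |(x-y)_i|$ yields $|y_i| \leq |(x-y)_i| + |(x-y)_{\pi(i)}|$. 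Squaring, summing over $i \in T \setminus S$, and using $(a+b)^2 \leq 2a^2 + 2b^2$ together with the injectivity of $\pi$ (so that $\sum_{i} (x-y)_{\pi(i)}^2 \leq \|x-y\|^2$):
\[ \|y_{S^c}\|^2 = \sum_{i \in T \setminus S} y_i^2 \leq 2 \sum_{i \in T \setminus S} (x-y)_i^2 + 2 \sum_{i \in T \setminus S} (x-y)_{\pi(i)}^2 \leq 4 \|x-y\|^2 \mper \]

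Plugging this back into the coordinate split gives $\|x' - y\|^2 \leq 5\|x-y\|^2$, and since $\sqrt{5} < 4$ this proves $\|x'-y\| \leq 4\|x-y\|$, in fact with room to spare. The main (and essentially only) obstacle is setting up the injection $\pi$ correctly and arguing that it controls the discarded mass $\|y_{S^c}\|$; once that is in place, the coordinate decomposition and the elementary inequality $(a+b)^2 \le 2a^2+2b^2$ are pure bookkeeping, and the constant $4$ is not tight.
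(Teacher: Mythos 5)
Your proof is correct, and it reaches the stated bound with room to spare. The core ingredient is the same as in the paper's proof: the mass of $y$ on the discarded coordinates lives only on $\mathrm{supp}(y)$ minus the selected set, and there $|x_i|$ is dominated by the values of $x$ on selected coordinates lying outside $\mathrm{supp}(y)$, where $x$ and $x-y$ coincide. But the bookkeeping is genuinely different. The paper (with $S=\mathrm{supp}(y)$, $T=\mathrm{supp}(x')$) runs a chain of triangle inequalities on restricted vectors, using $\|x_{S\setminus T}\|\le\|x_{T\setminus S}\|$ and losing one $\|x-y\|$ at each step, which yields the constant $4$; you instead split $\|x'-y\|^2$ orthogonally into kept and discarded coordinates and control the discarded mass coordinate-wise via an explicit injection $\pi$ from (in your notation) $T\setminus S$ into $S\setminus T$, together with $(a+b)^2\le 2a^2+2b^2$. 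Your injection also supplies exactly the cardinality fact that is needed, namely $|T\setminus S|\le|S\setminus T|$ because $|T|\le k=|S|$, whereas the paper asserts the equality $|S\setminus T|=|T\setminus S|$, which literally holds only when the two supports have the same size; its norm comparison $\|x_{S\setminus T}\|\le\|x_{T\setminus S}\|$ implicitly rests on the same pairing argument you spell out. The payoff of your route is the sharper constant $\sqrt{5}$ and full rigor about set sizes and tie-breaking; the payoff of the paper's is brevity.
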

\begin{proof}
  Let $S=\mathrm{supp}(y)$ and $T=\mathrm{supp}(x')$. Now note that
  $|S\setminus{T}|=|T\setminus S|\leq k$. Additionally, from the assumption that
 we select the largest $k$ components, we have that $\|x_{T\setminus
  S}\|\geq \|x_{S\setminus T}\|$.

  Using the above observation, together with the triangle inequality, we have
  that
  \begin{equation}
    \begin{split}
    \|x'-y\| &\leq \|x_{S\setminus T}'-y_{S\setminus T}\| +
                   \|x_{T\setminus S}'-y_{T\setminus S}\| +
                   \|x-y\| \\
    &\leq \|x_{S\setminus T}'- x_{S\setminus T}\| +
          \|x_{S\setminus T} - y_{S\setminus T}\| +
       \|x_{T\setminus S}-y_{T\setminus S}\| + \|x-y\| \\
    &\leq \|x_{S\setminus T}\| + 3\|x-y\| \\
    &\leq \|x_{T\setminus S}\| + 3\|x-y\| \\
    &\leq 4\|x-y\|,
    \end{split}
  \end{equation}
  as desired.

\end{proof}

\subsection{Deciding to Halt}
In this section we prove \cref{lem:halt}.
\begin{proof}[Proof of \cref{lem:halt}]
  The algorithm is as follows.
  Compute $Z\stackrel{\mathrm{def}}{=}\max \pE \sum_{i=1}^n b_i$ over all $\pE$
  satisfying $\cP(x, 1, 100C', X_1,\ldots,X_n)$ and add noise $Lap(1/\e)$.  If
  the result is $\leq 0.1n$, output ``halt'', otherwise output ``do not halt''.

  Sensitivity of $Z$ was already proved in~\cref{lem:estimating-distance}.
  Privacy follows directly from the guarantees of the Laplace mechanism.

  Next we establish correctness. This also follows directly from the arguments
  in \cref{lem:estimating-distance} since we can view halt estimation as a
  one-step binary search. In particular, let $S\subseteq [n]$ be a
  subset such that $|S|\geq (1-\delta)n$ and $\cQ\proves \sum_{i\in
  S}\iprod{X_i-x,v}^2 \leq 2n$.
  Then we claim that the algorithm above will output ``halt'' with probability
  at least $1-\beta$, using that $C'$ is at least some universal constant.

  At the same time, suppose that the algorithm above outputs ``halt''.
  Conditioning again on an event of probability $1-\beta$, this means that $\max \pE \sum_{i=1}^n b_i$ over all $\pE$ satisfying $\cP(x, 1, 100C', X_1,\ldots,X_n)$ is at most $0.2n$.
  In particular, for every $2k$-sparse unit vector $v$, at least $0.8n$ choices of $i$ have $\iprod{X_i - x,v} \leq 100C'$, and the same is true for $\iprod{X_i - \mu, v}$.
  Choosing some $i$ such that this holds simultaneously for $v = (\mu-x)/\|\mu -x\|$ in the first case and $v = (x-\mu)/\|x - \mu\|$ in the second, we obtain by adding the two,
  \[
  200C' \geq \iprod{X_i - x, (\mu - x)/ \|\mu - x\|} - \iprod{X_i - \mu, (\mu-x)/\|\mu - x\|} = \|\mu - x\|\mper
  \]
\end{proof}

\subsection{Finding Gradients}
In this section we prove \cref{lem:gradient}.
The promised algorithm will use the SoS exponential mechanism of
\cite{hopkins2021efficient}.

Our first lemma says that the quadratic form of the covariance matrix of
$X_1,\ldots,X_n$, after throwing out a few samples, has bounded quadratic form
in sparse directions, and that furthermore this has an SoS proof.  

\begin{lemma}\label{lem:covariance-subset}
  For every $c > 0$ there exists a universal constant $C > 0$ such that for all
  large-enough $d \in \N$, all $s \leq d$, and all $\beta \in (0,1)$, for all $n
  \geq (\log d)^C (s^2 + \log(1/\beta))$, with probability at least $1-\beta$
  over i.i.d. $X_1,\ldots,X_n \sim \cN(0,I)$ there is a subset $S \subseteq [n]$
  with $|S| \geq (1-c)n$ and such that 
  \[
  \cQ \proves_{O(1)} \sum_{i \in S} \iprod{X_i,v}^2 \leq 2n \mcom
  \]
\end{lemma}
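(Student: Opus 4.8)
The plan is to split the argument into a purely SoS-algebraic reduction and a probabilistic tail estimate. For the reduction, observe that the two axioms $v_iv_j\le c_ic_j$ and $-v_iv_j\le c_ic_j$ of $\cQ$ give, for every $i,j$, both $c_ic_j\ge 0$ and $a\,v_iv_j\le |a|\,c_ic_j$ for any fixed scalar $a$, each with a degree-$2$ SoS proof (an axiom times a nonnegative constant). Summing over all pairs $j,k$ with coefficients given by the entries of a fixed symmetric matrix $A\in\R^{d\times d}$, and using $\|v\|^2=1$ together with $(\sum_j c_j)^2\le s$, I get
\[
  \cQ \proves_{O(1)} \ \sum_{j,k} A_{jk}v_jv_k \ \le\ \sum_j A_{jj}v_j^2 \ +\ \max_{j\neq k}|A_{jk}|\,\Paren{\sum_j c_j}^2 \ \le\ \max_j |A_{jj}| \ +\ s\cdot\max_{j\neq k}|A_{jk}|\mper
\]
Taking $A = M_S - |S|\,I$ with $M_S := \sum_{i\in S} X_iX_i^\top$, and using $\iprod{X_i,v}^2 = v^\top X_iX_i^\top v$ and $\langle I,vv^\top\rangle=\|v\|^2=1$, this becomes
\[
  \cQ\proves_{O(1)} \ \sum_{i\in S}\iprod{X_i,v}^2 \ \le\ |S| \ +\ \max_j\Big|\textstyle\sum_{i\in S}X_{ij}^2 - |S|\Big| \ +\ s\cdot\max_{j\neq k}\Big|\textstyle\sum_{i\in S}X_{ij}X_{ik}\Big|\mper
\]
So it suffices to produce $S$ with $|S|\ge(1-c)n$ for which, with probability $\ge 1-\beta$, the diagonal deviation is at most $0.1n$ and $s\cdot\max_{j\neq k}|\sum_{i\in S}X_{ij}X_{ik}|\le 0.5n$; since $|S|\le n$, the bound $\le 2n$ then follows.

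To construct $S$, I would truncate: let $S = \{\, i : \|X_i\|_\infty\le\tau \,\}$ with $\tau = \Theta(\sqrt{\log d})$ chosen large enough that $\Pr(\|X_i\|_\infty>\tau)\le c/2$; since these events are i.i.d., $|S|\ge(1-c)n$ with probability $1-e^{-\Omega(cn)}\ge 1-\beta$ once $n\gtrsim\log(1/\beta)$ (when $\beta$ is extremely small, instead take $\tau=\Theta(\sqrt{\log(nd/\beta)})$, so $S=[n]$ with probability $1-\beta$ and nothing is discarded). For the diagonal entries, $\sum_{i\in S}X_{ij}^2$ is a sum of $|S|$ truncated $\chi^2_1$ variables, so Bernstein plus a union bound over the $d$ coordinates gives $\max_j|\sum_{i\in S}X_{ij}^2-|S||\le O(\sqrt{n\log(d/\beta)})\le 0.1n$. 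For the off-diagonal entries the key point is that $\sum_{i\in S}X_{ij}X_{ik} = \sum_{i=1}^n X_{ij}X_{ik}\,\1[\|X_i\|_\infty\le\tau]$ and, for $j\neq k$, each summand is mean-zero — this is exactly where the coordinate-wise sign symmetry of $\cN(0,I)$ enters, since flipping the sign of $X_{ij}$ negates the product but not the indicator. These summands are bounded by $\tau^2$ with variance $\le 1$, so Bernstein plus a union bound over the $\le d^2$ pairs gives $\max_{j\neq k}|\sum_{i\in S}X_{ij}X_{ik}|\le O(\sqrt{n\log(d/\beta)}+\tau^2\log(d/\beta))$ with probability $\ge 1-\beta$; multiplying by $s$ and invoking $n\ge(\log d)^C(s^2+\log(1/\beta))$ for a large enough universal $C$ makes this $\le 0.5n$. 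A union bound over the three failure events completes the proof.

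The SoS manipulations and the truncation count are routine; the substance is the off-diagonal estimate, and I expect that to be the main obstacle. The $\ell_\infty$-truncation is genuinely necessary: the products $X_{ij}X_{ik}$ are only sub-exponential, so $\sum_i X_{ij}X_{ik}$ has a heavy upper tail driven by a handful of atypically large samples, which no sub-Gaussian bound can control down to the scale $n/s$. The delicate bookkeeping is the interaction of the factor $s$ from the sparsity relaxation, the $\poly\log d$ factors from the entrywise union bounds, and the $\log(1/\beta)$ from the tail probability: one must check that under $n\ge(\log d)^C(s^2+\log(1/\beta))$ the quantity $s\cdot\max_{j\neq k}|(M_S)_{jk}|$ truly falls below $0.5n$, which is why $C$ is taken to be a large universal constant and why for very small $\beta$ one prefers the heavier truncation threshold that discards no samples at all. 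One should also verify that the degree of the SoS derivation in the first step is $O(1)$ uniformly in $d$ and $s$ — which it is, since every inequality used is one axiom of $\cQ$ scaled by a nonnegative data-dependent constant and then summed.
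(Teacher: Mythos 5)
Your SoS reduction to the entrywise bounds $\max_j|(M_S)_{jj}-|S||$ and $s\cdot\max_{j\neq k}|(M_S)_{jk}|$ is fine and matches the paper's algebra, and the $\ell_\infty$-truncation is also the paper's choice of subset. The genuine gap is exactly at the step you flagged as "delicate bookkeeping": the claim that $n\geq(\log d)^C(s^2+\log(1/\beta))$ makes $s\cdot O\bigl(\sqrt{n\log(d/\beta)}+\tau^2\log(d/\beta)\bigr)\leq 0.5n$ is false in part of the lemma's parameter range. To get per-entry failure probability $\beta$ you must pay a deviation of order $\sqrt{n\log(1/\beta)}$ (this is forced by the variance, so no truncation threshold, including your heavier $\tau=\Theta(\sqrt{\log(nd/\beta)})$ variant, removes it), and the hypothesis only guarantees $s^2+\log(1/\beta)\lesssim n/(\log d)^C$, not $s^2\log(1/\beta)\lesssim n$. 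Concretely, take $s^2\approx\log(1/\beta)\approx n/(\log d)^C$ (e.g.\ $s=d$, $\log(1/\beta)=d^2$, $n\approx(\log d)^C d^2$): then $s\sqrt{n\log(1/\beta)}\approx n^{3/2}/(\log d)^C\gg n$, so your certificate does not exist with the required probability, and indeed with probability much larger than $\beta$ some off-diagonal entry of $M_S$ really is of size $\gg n/s$, so no argument that conditions on your fixed truncated subset can reach failure probability $\beta$.

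The paper avoids this by decoupling the high-probability requirement from the entrywise bounds. It defines $B=\max\{|S| \,:\, \cQ\proves_{O(1)}\sum_{i\in S}\iprod{X_i,v}^2\leq 2n\}$, notes that $B$ has bounded differences $1$ (dropping a changed sample from a certified $S$ keeps the certificate), and applies McDiarmid: $n\gtrsim\log(1/\beta)$ alone gives $|B-\E B|\leq cn/10$ with probability $1-\beta$. It then only needs the Bernstein/union-bound estimates on $M_S$ for the truncated subset to hold with \emph{constant} failure probability $\delta=\delta(c)$, where the off-diagonal bound is $s\cdot O(\sqrt{n}+\log d)\ll n$ once $n\geq(\log d)^C s^2$ — the $\log(1/\beta)$ never multiplies $s$. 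To repair your proof you need this (or some equivalent) boosting step; the direct union bound at level $\beta$ cannot work as stated.
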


The next lemma gives the SoS proof of utility we need for SoS exponential mechanism.

\begin{lemma}\label{lem:sos-utility}
  For every $c > 0$ there is $C > 0$ such that the following holds.
  Let $X_1,\ldots,X_n \sim \cN(\mu,I)$ for $\mu \in \R^d$ with $\|\mu\|_0 \leq k$, with $d \in \N$ sufficiently large.
  For all $x \in \R^d$ with $\|x - \mu\| > C$, all $\beta \in (0,1)$, and $r \in [(1-1/C)\|x - \mu\|, (1+1/C) \|x - \mu\|]$, and $n \geq (\log d)^C (k^2 + \log(1/\beta))$, with probability at least $1-\beta$ over $X_1,\ldots,X_n$, there is a degree-$O(1)$ SoS proof
  \[
  \cP(x,1/C, r, k/c , X_1,\ldots,X_n), \sum_{i \leq n} b_i \geq (1-1/C)n \proves_{O(1)} \left \| w - \frac{\mu - x}{\|\mu - x\|} \right \|^2 \leq c\mcom
  \]
  and, furthermore, this proof has degree-$1$ in the constraint $\sum_{i \leq n} b_i \geq (1-1/C)n$.
\end{lemma}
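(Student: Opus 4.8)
\emph{Proof plan.} The plan is to write down an explicit degree-$O(1)$ SoS derivation; the only nontrivial external input is \cref{lem:covariance-subset}, which we use as a black box. First, apply \cref{lem:covariance-subset} to the centered samples $X_i - \mu \sim \cN(0,I)$ with sparsity parameter $s = k/c$: since $n \geq (\log d)^C (k^2 + \log(1/\beta))$ (absorbing the $1/c^2$ into $C$ for $d$ large), with probability $1-\beta$ there is a \emph{fixed} subset $S \subseteq [n]$ with $|S| \geq (1-1/C)n$ such that $\cQ \proves_{O(1)} \sum_{i \in S} \iprod{X_i - \mu, v}^2 \leq 2n$. Condition on this event. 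Write $u := (\mu - x)/\|\mu - x\|$ (a fixed unit vector, since $x,\mu$ are fixed), $t := \iprod{\mu - x, v}$ (a linear form in the program variables), and $B := \sum_{i \in S} b_i$. All sums below are over the fixed index set $S$, and all displayed inequalities are understood as having constant-degree SoS proofs from the axioms of $\cP(x, 1/C, r, k/c, X_1,\ldots,X_n)$.

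\emph{Step 1: aggregate the inlier constraints.} Summing the axiom $b_i \iprod{X_i - x, v} \geq b_i r$ over $i \in S$ and using the identities $\iprod{X_i - x, v} = \iprod{X_i - \mu, v} + t$ and $\sum_{i \in S} b_i \iprod{\mu-x,v} = tB$, we get $tB \geq rB - \sum_{i \in S} b_i \iprod{X_i - \mu, v}$. By SoS Cauchy--Schwarz, $\big(\sum_{i \in S} b_i \iprod{X_i - \mu, v}\big)^2 \leq \big(\sum_{i \in S} b_i^2\big)\big(\sum_{i \in S} \iprod{X_i - \mu, v}^2\big) \leq n \cdot 2n$, using $b_i^2 = b_i \leq 1$ and \cref{lem:covariance-subset}; hence $\sum_{i \in S} b_i \iprod{X_i - \mu, v} \leq \sqrt{2}\,n$ (via the standard SoS step $2 p \le \lambda + \lambda^{-1}p^2$ with $\lambda = \sqrt 2\, n$), and therefore $tB \geq rB - \sqrt{2}\,n$. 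This step does not use the constraint $\sum_{i \leq n} b_i \geq (1-1/C)n$.

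\emph{Step 2: pass from the averaged bound to a bound on $v$, then on $w$.} Using $r \geq (1-1/C)\|x-\mu\|$, $0 \leq B \leq n$, and the SoS Cauchy--Schwarz consequence $t \leq \|\mu-x\|$ (equivalently $\|\mu-x\| - t = \tfrac{\|\mu-x\|}{2}\|u-v\|^2 \geq 0$), the inequality of Step 1 rearranges to $B(\|\mu-x\| - t) \leq \tfrac{n}{C}\|\mu-x\| + \sqrt 2\, n$. Now invoke the constraint $\sum_{i \leq n} b_i \geq (1-1/C)n$ once, linearly, together with $b_i \leq 1$, to obtain $B \geq (1-2/C)n$ and hence $B - n/2 \geq 0$; multiplying this nonnegative quantity by the nonnegative $\|\mu-x\| - t$ (via the SoS product rule, increasing degree by $O(1)$) and combining with the previous line yields $\tfrac{n}{2}(\|\mu-x\| - t) \leq \tfrac{n}{C}\|\mu-x\| + \sqrt 2\, n$. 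Dividing by the positive constant $n/2$, then by the positive constant $\|\mu-x\| \geq C$, and substituting $t = \|\mu-x\|\iprod{u,v}$, gives $1 - \iprod{u,v} \leq (2 + 2\sqrt 2)/C$, i.e.\ $\|v-u\|^2 = 2(1-\iprod{u,v}) = O(1/C)$ using the axiom $\|v\|^2 = 1$. Finally $\|w-u\|^2 \leq 2\|w-v\|^2 + 2\|v-u\|^2 \leq 2/C + O(1/C)$ by the axiom $\|w-v\|^2 \leq 1/C$, which is at most $c$ once $C$ is a large enough absolute constant multiple of $1/c$. Since the constraint $\sum_i b_i \geq (1-1/C)n$ appeared only in the single product $\big(B - \tfrac{n}{2}\big)\big(\|\mu-x\| - t\big)$, the assembled proof has degree $O(1)$ overall and degree $1$ in that constraint, as required.

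\emph{Main obstacle.} The things to be careful about are purely internal to the SoS formalism. We must never divide by a program variable; the only divisions above are by the fixed positive scalars $n$ and $\|\mu-x\|$, which is legitimate. The one place needing the SoS product rule rather than an informal "product of nonnegatives'' is the term $\big(B - \tfrac{n}{2}\big)\big(\|\mu-x\| - t\big) \geq 0$, and one should double-check there that the constraint $\sum_i b_i \geq (1-1/C)n$ enters only linearly. The structural reason \cref{lem:covariance-subset} cannot be avoided is that with only $n \approx k^2$ samples a handful of atypical samples could make $\sum_{i \leq n} \iprod{X_i - \mu, v}^2$ as large as $\omega(n)$ for some $\sqrt{k/c}$-sparse unit vector $v$; restricting to the subset $S$ is exactly what makes the Cauchy--Schwarz bound in Step 1 go through SoS-provably, and hence is where essentially all the quantitative work lives.
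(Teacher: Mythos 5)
Your proposal is correct and follows essentially the same route as the paper's proof: invoke \cref{lem:covariance-subset} for the centered samples, use SoS Cauchy--Schwarz (plus the square-removal trick) to bound $\sum_{i\in S} b_i\iprod{X_i-\mu,v}$ by $O(n)$, combine with the inlier constraints $b_i\iprod{X_i-x,v}\geq b_i r$ and the (degree-$1$) use of $\sum_i b_i \geq (1-1/C)n$ to conclude $\iprod{\mu-x,v}\geq (1-O(1/C))\|\mu-x\|$, and transfer to $w$ via $\|w-v\|^2\leq 1/C$ and SoS triangle inequality. The only difference is cosmetic: where the paper controls the cross term $\bigl(\sum_{i\in S}b_i-n\bigr)\iprod{\mu-x,v}$ by a weighted AM--GM argument, you instead multiply the certified nonnegatives $B-n/2$ and $\|\mu-x\|-t$, which is an equally valid way to keep the proof linear in the constraint.
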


The last lemma proves key properties of the convex set over which SoS exponential mechanism will run a log-concave sampling algorithm.

\begin{lemma}\label{lem:l1-fattening}
  Let $d,k \in \N$ with $k \leq d$, and let $\delta > 0$.
  Then the set
  \[
  \cC(k,\delta) = \{ x \in \R^d \, : \, \exists y \in \R^d \text{ s.t. } \|y\|_2 \leq 1, \|y\|_1 \leq \sqrt{k}, \|x - y\|_2 \leq \delta \}
  \]
  has the following properties:
  \begin{itemize}
  \item $\cC$ is compact and convex.
  \item $\cC$ has diameter $O(1+\delta)$.
  \item $\cC$ admits polynomial-time projection and membership oracles.
  \item For every $v \in \R^d$ with $\|v\| \leq 1$ and $\|v\|_1 \leq \sqrt{k}$, there exists a set $S_v \subseteq \cC$ such that for all $w \in S_v$, $\|w - v \| \leq \delta$, and $|S_v| \geq \exp(-O(d / (\delta \sqrt k) + k^2 \log d)) \cdot |\cC|$, where $|\cdot|$ denotes Lebesgue measure.
  \end{itemize}
\end{lemma}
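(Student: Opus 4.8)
The plan is to write the fattened body as a Minkowski sum $\cC(k,\delta) = K + \delta B$, where $B = B_2^d$ is the unit Euclidean ball, $\delta B$ is its $\delta$‑scaling, and $K := \{y\in\R^d : \|y\|_2\le 1,\ \|y\|_1\le\sqrt k\}$ is the intersection of the unit $\ell_2$ ball with the $\ell_1$ ball of radius $\sqrt k$. With this description the first three bullets are essentially immediate. \emph{Compactness and convexity}: $K$ is an intersection of two compact convex sets, $\delta B$ is compact and convex, and a Minkowski sum of compact convex sets is compact and convex. \emph{Diameter}: any two points of $\cC$ have the form $y_i+\delta u_i$ with $y_i\in K\subseteq B$ and $\|u_i\|\le 1$, so their distance is at most $\|y_1-y_2\|+\delta\|u_1-u_2\|\le 2+2\delta=O(1+\delta)$.

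For the oracles, note $x\in\cC$ iff $\mathrm{dist}_{\ell_2}(x,K)\le\delta$, so a membership oracle follows from projecting onto $K$ (or just estimating distance to $K$). Projection onto $K$ is a convex program whose two constraints each admit a trivial separation oracle, hence it is polynomial‑time solvable (ellipsoid/interior point); concretely the KKT conditions force the minimizer to have the soft‑thresholded form $y^\star=(1+\mu)^{-1}\mathrm{soft}_\lambda(x)$ for nonnegative multipliers $\mu,\lambda$, which can be found by a nested binary search using monotonicity of $\|y^\star\|_2$ in $\mu$ and of $\|y^\star\|_1$ in $\lambda$. Finally, projection onto $\cC$ reduces to projection onto $K$: if $x\in\cC$ the projection is $x$, and otherwise (so $\mathrm{dist}(x,K)>\delta$) one checks directly from $\cC=K+\delta B$ that the nearest point is $\Pi_K(x)+\delta\tfrac{x-\Pi_K(x)}{\|x-\Pi_K(x)\|}$. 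In passing, $\delta B\subseteq\cC\subseteq(1+\delta)B$ (the inner ball because $0\in K$), so $\cC$ is sandwiched between known Euclidean balls, as needed to run the log‑concave sampler of \cite{hopkins2021efficient} over it.

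It remains to prove the measure bound, which is the main point. Given $v$ with $\|v\|_2\le1$, $\|v\|_1\le\sqrt k$ — i.e. $v\in K$ — set $S_v := v+\delta B$. Then $S_v\subseteq K+\delta B=\cC$ and every $w\in S_v$ has $\|w-v\|\le\delta$, so $S_v$ has the two required properties; it remains to lower bound $|S_v|/|\cC|$, where $|S_v|=\delta^d|B|$. To upper bound $|\cC|$ we cover it by Euclidean balls. Since $K\subseteq\sqrt k\,B_1^d$, its Gaussian width satisfies $w(K)\le w(\sqrt k\,B_1^d)=\sqrt k\,\E_{g\sim\cN(0,I_d)}\|g\|_\infty=O(\sqrt{k\log d})$, so Sudakov minoration gives $\log N(K,1/\sqrt k)\le O\!\big(k\cdot w(K)^2\big)=O(k^2\log d)$; call $N:=N(K,1/\sqrt k)\le\exp(O(k^2\log d))$ the number of radius‑$1/\sqrt k$ balls needed to cover $K$. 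Enlarging each of those balls to radius $1/\sqrt k+\delta$ covers all of $\cC=K+\delta B$, hence $|\cC|\le N\,(1/\sqrt k+\delta)^d|B|$. Combining and cancelling $|B|$,
\[
  \frac{|S_v|}{|\cC|}\;\ge\;\frac{\delta^d}{N\,(1/\sqrt k+\delta)^d}
  \;=\;\frac{1}{N}\Bigl(1+\tfrac{1}{\delta\sqrt k}\Bigr)^{-d}
  \;\ge\;\exp\!\Bigl(-O(k^2\log d)-\tfrac{d}{\delta\sqrt k}\Bigr)\mcom
\]
using $\ln(1+t)\le t$. This is exactly the claimed bound.

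The only genuine choice in the argument — and the place where the right exponent comes from — is the scale $1/\sqrt k$ at which we cover $K$: covering at a finer scale $t$ needs $\exp(\Theta(k\log d/t^2))$ balls, which overshoots $\exp(O(k^2\log d))$ once $t\ll1/\sqrt k$, while the cost of inflating to radius $t+\delta$ (to reach $\cC$) and comparing against $\delta$‑balls is $(1+t/\delta)^d$, which at $t=1/\sqrt k$ contributes exactly $\exp(O(d/(\delta\sqrt k)))$; the two terms balance to give the stated exponent. Everything else — the Minkowski‑sum facts, the explicit projection formula, the Gaussian width of the scaled $\ell_1$ ball, and Sudakov minoration — is standard, so I expect no serious obstacle beyond getting this bookkeeping right.
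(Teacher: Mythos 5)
Your proof is correct and follows essentially the same route as the paper's: the volume bound comes from Sudakov minoration applied to the scaled $\ell_1$ ball at scale $1/\sqrt{k}$, inflating the cover by $\delta$ to cover $\cC$, and comparing against the $\delta$-ball $S_v$ around $v$, exactly as in the paper. Your Minkowski-sum formulation $\cC = K + \delta B$ and the explicit projection formula for $\cC$ just make the (brief) oracle discussion in the paper more concrete, and are fine.
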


Now we can apply Theorem 4.5 of \cite{hopkins2021efficient} to prove \cref{lem:gradient}.

\begin{proof}[Proof of \cref{lem:gradient}]
  We verify that the conditions of Theorem 4.5 of \cite{hopkins2021efficient}
  apply to our polynomial system $\cP$, with $p(b) = \sum_{i \in [n]} b_i -
  (1-1/C)n$ for a large-enough constant $C$, and the convex set $\cC(k,1/C)$ of
  \cref{lem:l1-fattening}. 
  \begin{itemize}
  \item Compactness and convexity of $\cC$, projection and membership oracle,
  diameter at most $\poly(n,d)$: guaranteed by \cref{lem:l1-fattening}. 
  \item $\cP$ is Archimedian: $\cP$ contains constraints upper-bounding
  $\|v\|^2, \|c\|^2, \|b\|^2$ by $\poly(d,n)$, by inspection.
  \item Robust satisfiability: let $\eta = 1/\poly(n,d)$.
  We claim that for all $X_1,\ldots,X_n,\alpha$, the system
  $\cP(x,2/C,\alpha,s,X_1,\ldots,X_n)$ is $\eta$-robustly satisfiable with
  respect to $\cC$ and $p(b) = \sum_{i \leq n} b_i - (1-1/C)n$. 

  To see this, consider any $y \in \cC$ and any $y'$ such that $\|y'-y\| \leq 1/\poly(d,n)$.
  Then, setting $w = y'$, we can satisfy $\cP$ by choosing $v$ to be the element of $\cC$ satisfying $\|v - y\| \leq 1/C$ and having $\|v\|_2 \leq 1, \|v\|_1 \leq \sqrt{k/c}$; then we will have $\|w - v\| \leq 1/C + 1/\poly(d,n) \leq 2/C$.
  Choosing $c_i = |v_i|$, and choosing $b_i = 0$ for all $i$ satisfies the rest of the constraints in $\cP$.
  \item SoS proof of bounded sensitivity: same as Example 4.2 in \cite{hopkins2021efficient}.
  \item SoS proof of utility: existence with probability at least $1-\beta$ over choice of samples follows directly from \cref{lem:gradient}.
  This proof if expressible in at most $\poly(d,n,\log R)$ bits by inspection of the proof of \cref{lem:gradient}.
  \item Volume ratio: We claim that for $n \geq (\log d)^C (k^2 + \log(1/\beta))$ and $k \geq d^{0.4}$, there exists a set $S \subseteq \cC$ with $|S|/|\cC| \geq d^{-O(k^2)}$ (where $|\cdot|$ denotes Lebesgue measure) such that for all $z \in S$ there is a solution $v,w,b,c$ to $\cP$ with $p(b) \geq \Omega(n)$ and $w=z$.

  To see this, we take $S$ to be the set of $z$ such that $\|z-
  (\mu-x)/\|\mu-x\|\| \leq 1/C$.  First, $\|\mu-x\|_0 \leq 2k$, so $\|\mu
  - x\|_1 \leq \sqrt{2k} \|\mu-x\|_2$, hence $S \subseteq \cC$.  For any $z \in
  S$ we can set $w = z$ and $v = (\mu-x)/\|\mu-x\|$, then $\|w-v\|^2 \leq 2/C$.
  Take $c_i = |v_i|$; since $v$ is $2k$-sparse this satisfies the constraints of
  $\cQ$. Take $b_i = 1$ if $\iprod{X_i - x, \tfrac{\mu-x}{\|\mu-x\|}} =
  \iprod{X_i - \mu, \tfrac{\mu-x}{\|\mu-x\|}} + \|\mu-x\| \geq r$, again
  satisfying $\cP$. 

  It remains to show that $\sum_{i \in [n]} b_i - (1-1/C)n \geq \Omega(n)$.
  Applying \cref{lem:covariance-subset} and Markov's inequality, for large-enough $n \gg (\log d)^{C'}(k^2 + \log(1/\beta))$ for some other constant $C'$, there is a set $T \subseteq [n]$ with $|T_i| \geq (1-1/(10C))n$ and $|\iprod{X_i-\mu, (\mu-x)/\|\mu-x\|}| \leq O(1)$.
  So, as long as $r \leq \|\mu-x\| - O(1)$, $\sum b_i$ is large enough.
  \end{itemize}
  It follows that for fixed $x$ and $\mu$, with probability at least $1-\beta$ over $n \geq (\log d)^C (k^2 + \log(1/\beta))/\epsilon$ samples from $\cN(\mu,I)$, SoS exponential mechanism is $\e$-DP and outputs a vector $v$ such that $\|v - (\mu-x)/\|\mu-x\| \leq c$.

  For the running time dependence on $\log(1/\e)$, note that the dependence of $1/\e$ claimed in \cite{hopkins2021efficient} for SoS exponential mechanism can be improved by appeal to  the main result of \cite{mangoubi2021sampling}.
\end{proof}

\subsubsection{Proof of \cref{lem:sos-utility},~\cref{lem:covariance-subset}}

\begin{proof}[Proof of \cref{lem:covariance-subset}]
  Consider the random variable $B = \max \{ |S| \, : \, \cQ \proves_{O(1)} \sum_{i \in S} \iprod{X_i,v}^2 \leq 2n \}$.
  By McDiarmid's bounded-differences inequality, $\Pr ( |B - \E B| > t ) \leq 2 \exp(-\Omega(t^2 / n))$, so as long as $n \gg \log(1/\beta)$, we have $|B - \E B| \leq cn/10$ with probability at least $1-\beta$.
  It will suffice therefore to show that $\E B \geq (1-c/10)n$.
  For this in turn it suffices to show that with probability at least $1-c/20$ we have $B \geq (1-c/20)n$.

  Let $S \subseteq [n]$ be the set of $X_i$ such that $\|X_i\|_{\infty} \leq c' \sqrt{\log d}$, with $c' > 0$ chosen so that $|S| \geq (1-c/20)n$ with probability at least $(1-c/100)$.
  We claim that if $n \gg \poly \log (d) \cdot s^2$ then $\cQ \proves_{O(1)} \sum_{i \in S} \iprod{X_i,v}^2 \leq 2n$ with probability at least $1-c/100$; then a union bound finishes the proof.

   Let $M = \sum_{i \in S} X_i X_i^\top$.
  By Bernstein's inequality, with probability at least $1-\delta$ the following both hold:
  \begin{align*}
  & \max_{a \neq b \in [d]} |M_{ab}| \leq O(\sqrt{n \log(1/\delta)} + \log d \cdot \log(1/\delta) ) \\
  & \max_{a \in [d]} |M_{aa}| \leq n + O(\sqrt{n \log(1/\delta)} + \log d \cdot \log (1/\delta))\mper
  \end{align*}
  For all $a,b \in [d]$, $\cQ \proves_{O(1)} v_a v_b M_{ab} \leq |M_{ab}| c_a c_b$,
  so
  \begin{align*}
    \cQ \proves \sum_{i \leq n} \iprod{X_i,v}^2 & \leq \max_{a \in [d]} |M_{aa}| \sum_{a  \in [d]} c_a^2 + \max_{a \neq b \in [d]} |M_{ab}| \cdot \Paren{\sum_{a \in [d]} c_a }^2\\
    & \leq \max_{a \in [d]} |M_{aa}| + s \cdot \max_{a \neq b \in [d]} |M_{ab}| \\
    & \leq s \cdot O(\sqrt{n \log(1/\delta)} + \log d \cdot \log (1/\delta)) + n \mper
  \end{align*}
  Choosing $\delta$ a small enough, and then $C$ large enough, completes the proof.
\end{proof}

\begin{lemma}
\label{lem:remove-square}
For an indeterminate $X$ and $C > 0$,
\[
X^2 \leq C^2 \proves_2 X \leq 2C\mper
\]
\end{lemma}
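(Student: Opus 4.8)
The plan is to produce an explicit degree-$2$ sum-of-squares certificate. I would start from the polynomial identity
\[
4C\,(2C - X) \;=\; (2C - X)^2 \;+\; 3C^2 \;+\; (C^2 - X^2)\mcom
\]
which is verified in one line by expanding $(2C-X)^2 = 4C^2 - 4CX + X^2$ and collecting terms on the right-hand side to get $8C^2 - 4CX$. Since $C > 0$ we may divide through by $4C$, obtaining
\[
2C - X \;=\; \tfrac{1}{4C}\,(2C - X)^2 \;+\; \tfrac{3C}{4} \;+\; \tfrac{1}{4C}\,(C^2 - X^2)\mper
\]
On the right we have a square of the degree-$1$ polynomial $2C - X$ with nonnegative scalar coefficient, the nonnegative constant $3C/4$, and the hypothesis polynomial $C^2 - X^2$ multiplied by the degree-$0$ (hence trivially SoS) coefficient $1/(4C)$. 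Every term has degree at most $2$, so this display is precisely a degree-$2$ SoS derivation of $2C - X \ge 0$ from $C^2 - X^2 \ge 0$, which is the claim $X^2 \leq C^2 \proves_2 X \leq 2C$.

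The only genuine step is guessing the identity, and there is a systematic way to arrive at it: look for a representation $2C - X = \sigma_0 + \sigma_1\,(C^2 - X^2)$ with $\sigma_0$ a degree-$\le 2$ SoS polynomial and $\sigma_1$ a nonnegative scalar. Choosing $\sigma_1 = 1/(4C)$ forces $\sigma_0 = 2C - X - \tfrac{1}{4C}(C^2 - X^2) = \tfrac{1}{4C}\big((2C-X)^2 + 3C^2\big)$, which is manifestly a nonnegative combination of a square and a nonnegative constant. (In fact the same recipe with $\sigma_1 = 1/(2C)$ already certifies the stronger bound $X \leq C$, via $C - X = \tfrac{1}{2C}(C-X)^2 + \tfrac{1}{2C}(C^2 - X^2)$; but the weaker bound stated in the lemma is all that is needed where it is invoked, e.g. in the SoS manipulations in the proof of \cref{lem:estimating-distance}.)

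I do not anticipate any obstacle beyond this bookkeeping; in particular no appeal to $X$ being real-valued versus an indeterminate is needed, since the argument is a purely syntactic identity of polynomials together with the observation that squares and nonnegative scalars are SoS. If one wishes to be pedantic about the degree convention, note that the multiplier $1/(4C)$ is a constant (degree $0$), so $\tfrac{1}{4C}(C^2 - X^2)$ has degree $2$, matching the $\proves_2$ in the statement.
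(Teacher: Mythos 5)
Your certificate is correct: the identity $4C(2C-X) = (2C-X)^2 + 3C^2 + (C^2-X^2)$ checks out, and dividing by $4C>0$ exhibits $2C-X$ as a degree-$2$ SoS polynomial plus a nonnegative constant multiple of the axiom $C^2 - X^2 \geq 0$, which is exactly a degree-$2$ derivation. The paper reaches the same conclusion by a slightly different route: it starts from the identity $X = \tfrac{1}{2C}\bigl((X+C)^2 - X^2 - C^2\bigr)$, drops the nonnegative term $X^2+C^2$ to get $X \leq \tfrac{1}{2C}(X+C)^2$, applies the SoS triangle inequality $(X+C)^2 \leq 2(X^2+C^2)$, and then invokes the axiom to conclude $X \leq 2C$. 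So the two arguments differ only in the choice of square and bookkeeping -- yours packages everything into one closed-form Positivstellensatz certificate, while the paper chains two standard SoS facts; both are degree $2$ and equally usable where the lemma is invoked. Your parenthetical observation is also accurate and slightly sharper than the paper: the certificate $C - X = \tfrac{1}{2C}(C-X)^2 + \tfrac{1}{2C}(C^2-X^2)$ shows $X^2 \leq C^2 \proves_2 X \leq C$, so the factor $2$ in the lemma is merely a convenience (the paper's chain loses it by discarding $X^2+C^2$ before applying the triangle inequality), not a necessity of the proof system.
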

\begin{proof}
  Note that $X = \frac 1 {2C} ( (X + C)^2 - X^2 - C^2)$.
So $X^2 \leq C^2 \proves X \leq \frac 1 {2C} (X+C)^2 \leq \frac 1 {2C} \cdot 2 (X^2 + C^2)$ via SoS triangle inequality.
Using the axiom $X^2 \leq C^2$ completes the proof.
\end{proof}

Now we can prove \cref{lem:sos-utility}.

\begin{proof}[Proof of \cref{lem:sos-utility}]
  First, by SoS triangle inequality and the constraint $\|w-v\|^2 \leq 1/C$, for large-enough $C$, it suffices to show that
  \[
  \cP(x,1/C, r, k, X_1,\ldots,X_n), \sum_{i \leq n} b_i \geq (1-1/C) n \proves_{O(1)} \left \| v - \frac{\mu - x}{\|\mu - x\|} \right \|^2 \leq c/4\mper
  \]
  And, for this, it is enough to show $\cP, \sum_{i \leq n} b_i \geq (1-1/C)n \proves_{O(1)} \iprod{\mu-x,v} \geq (1-c/8)\|\mu - x\|$.
  (All these proofs must be degree-$1$ in $\sum_{i \leq n} b_i \geq (1-1/C)n$, which will be true by construction.)

  Let $S \subseteq [n]$ be the set of indices guaranteed to exist by \cref{lem:covariance-subset}, of size $(1-1/C)n$, for the vectors $X_1-\mu,\ldots,X_n-\mu$.
  We first claim that
  \[
  \cP \proves_{O(1)} \sum_{i \in S} b_i \iprod{X_i - \mu,v} \leq 4n \mper
  \]
  For this, using \cref{lem:remove-square}, we can show instead
  \[
  \cP \proves_{O(1)} \Paren{\sum_{i \in S} b_i \iprod{X_i - \mu,v}}^2 \leq 2n^2\mcom
  \]
  but this follows immediately from SoS Cauchy-Schwarz and \cref{lem:covariance-subset}.

  Now,
  \[
    \begin{split}
  \cP, \sum_{i \in n} b_i \geq (1-1/C)n \proves_{O(1)}& r \cdot (1-2/C)n \leq r \cdot \sum_{i \in S} b_i  \\
  &\leq \sum_{i \in S} b_i \iprod{X_i - x,v} = \sum_{i \in S} b_i \iprod{X_i - \mu,v} + b_i \iprod{\mu - x,v}\mper
    \end{split}
  \]
  Putting this together with the preceding, we have
  \[
  \cP, \sum_{i \in [n]} b_i \geq (1-1/C)n \proves_{O(1)} (1-2/C) r n - 4n \leq \iprod{\mu-x,v} \cdot \sum_{i \in S} b_i = n \cdot \iprod{\mu - x,v} + \Paren{\sum_{i \in S} b_i - n} \cdot \iprod{\mu-x,v}\mper
  \]
  Next we claim that $\cP, \sum_{i \in [n]} b_i \geq (1 - 1/C)n \proves_{O(1)} \Paren{\sum_{i \in S} b_i - n} \cdot \iprod{\mu-x,v} \leq O(1/\sqrt{C}) n \|\mu -x\|$.

  We have
  \begin{align*}
  \|v\|^2 \leq 1 \proves_{O(1)}& \Paren{\sum_{i \in S} b_i - n} \cdot \iprod{\mu-x,v} \\
 & \leq \frac {\sqrt{C} \|\mu-x\|} n \cdot \Paren{\sum_{i \in S} b_i -n }^2 + \frac{ n}{\sqrt{C} \|\mu-x\|} \iprod{\mu-x,v}^2\\
 & \leq \frac {\sqrt{C} \|\mu-x\|} n \cdot \Paren{\sum_{i \in S} b_i -n }^2 + (1/\sqrt{C}) n \|\mu-x\|\mper
 \end{align*}
 Now, $\{b_i^2 = b_i\}_{i \in [n]} \proves_{O(1)} \Paren{\sum_{i \in S} b_i - n}^2 \leq 2n \Paren{\sum_{i \in S} b_i - n}$, so via a proof which is degree-$1$ in $\sum_{i \in [n]} b_i \geq (1-1/C)n$ we have
 \[
 \cP, \sum_{i \in [n]} b_i \geq (1-1/C)n \proves_{O(1)} \Paren{\sum_{i \in S} b_i - n}^2 \leq 2 \cdot 1/C \cdot n^2\mper
 \]
 Putting everything together and using the assumption $r \geq (1-1/C) \|\mu-x\|$, we get
 \[
 \cP, \sum_{i \leq n} b_i \geq (1-1/C) n \proves_{O(1)} (1-O(1/\sqrt{C})) \|\mu-x\| - 4 \leq \iprod{\mu-x,v}
 \]
 which gives the conclusion by taking $C = C(c)$ large enough.
\end{proof}

\subsubsection{Proof of \cref{lem:l1-fattening}}

\begin{proof}[Proof of \cref{lem:l1-fattening}]
  Let $\cC = \{x \in \R^d \, : \, \exists v \in \R^d \text{ s.t. } \|x - v\| \leq \delta \text{ and } \|v\|_1 \leq \sqrt{k}, \|v\|_2 \leq 1 \}$ be a ``fattening'' of the scaled $\ell_1$ ball.
  $\cC$ is convex by inspection.

  To compute membership in $\cC$, given $x$, compute the projection $x'$ to $\{ v \, : \, \|v\|_1 \leq \sqrt{k} \text{ and } \|v\|_2 \leq 1\}$.
  If $\|x - x'\| \leq \delta$ then $x \in \cC$ and otherwise $x \notin \cC$.
  Furthermore, if $\|x - x'\| > \delta$, then for some $\delta'$, a hyperplane through $x + \delta'(x' - x)$ separates $x$ from $\cC$.
  With this separation oracle for $\cC$, projections can be computed by minimizing $\|x - y\|$ over $y \in \cC$.

  Now we move on to the volume arguments.
  By Sudakov minoration (see e.g. \cite{ledoux1991probability}), $\{ v \: \, \|v\|_1 \leq \sqrt{k} \}$ can be covered by $d^{O(k^2)}$ $\ell_2$ balls of radius $1/\sqrt{k}$.
  So, $\cC$ can be covered by $d^{O(k^2)}$ $\ell_2$ balls of radius $1/\sqrt{k} + \delta$.
  For any $v$ with $\|v\|_1 \leq \sqrt{k}$ and $\|v\|_2 \leq 1$, we know $S_v = \{ w \, : \, \|w - v\|\leq \delta\} \subseteq \cC$.
  Furthermore,
  \[
  \frac{|S_v|}{|\cC|} \geq d^{-O(k^2)} \cdot \Paren{\frac{\delta}{\delta + \tfrac 1 {\sqrt k}}}^d  = d^{-O(k^2)} 2^{-O(d / (\sqrt{k} \delta))}\mper
  \]
\end{proof}
\section{Fast Algorithms and Experiments}\label{sec:experimental-details}

\begin{algorithm}[h!]
\textbf{Input:} $\{x_i\}_{i\leq n}$, $T$, $b$, $\sigma^2$, $n$; 
$\hat\mu \gets \vec{0}$, selected $\gets$ []\\
\begin{algorithmic}[1]
\State $m_j \gets \frac{1}{b}\sum_{j=i\cdot b}^{(i+1)b} x_i$ for
$j\in[\lceil n/b\rceil]$ 
\Comment{compute bucketed means}
\State $z_i \gets \sum_{j=1}^{\lceil n/b\rceil} \1\{(m_j)_i \geq T\}$ for
$i\in[d]$.
\Comment{coordinate-wise threshold}
\For{$j \leq k$}
    \State $t\gets$ \texttt{exp-mech}$\left(\{z_i\}_{i\in [d]\setminus \text{selected}} \right)$
    \State selected.insert($t$)
    \State $\hat\mu_t\gets$ \texttt{KV-1D}$\left(\{(m_i)_t\}_{i=1}^d\right)$
    \Comment{univariate estimation}
\EndFor
\end{algorithmic}
\textbf{Output:} mean estimate $\hat\mu$
\caption{The subroutine \texttt{exp-mech}
refers to the exponential mechanism~\cite{mcsherry2007mechanism}, and 
\texttt{KV-1D} --- to the univariate sparse mean estimator
of~\cite{karwa2017finite}.}\label{alg:k2_logR_algo_peeling} 
\end{algorithm}

\begin{theorem}\label{thm:sparse-alg-peeling}
    Let $X_1,\ldots, X_n\sim\cN(\mu, \sigma^2)$.
    \cref{alg:k2_logR_algo_peeling} is $\epsilon$-DP outputs an
    estimate $\hat\mu$ s.t. $\|\hat\mu-\mu\|_2\leq \alpha$ with probability at
    least $1-\beta$ as long as
    \begin{equation}
        \begin{split}
    n &= \Omega\left(
   \underbrace{\frac{k^2\sigma^2\left(\log d + \log(1/\beta)\right)}{\alpha^2\epsilon}}_{\text{support estimation}} +
   \underbrace{\frac{\sigma^2k\log(2k/\beta)}{\alpha^2} + \frac{\sigma k^{1.5}\log(2k/\beta)}{\alpha\epsilon} + \frac{k\log(R)}{\epsilon}}_{\text{dense estimation in }k\text{ dimensions}} \right)
        \\    
         &= \Omega\left(\sigma^2\frac{k^2(\log d+\log(k/\beta))}{\alpha^2\epsilon} +
                        \frac{k\log(R)}{\epsilon}
                  \right).
        \end{split}
    \end{equation}
\end{theorem}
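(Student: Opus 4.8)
\emph{Setup and privacy.}
The plan is to analyze \cref{alg:k2_logR_algo_peeling} in three pieces --- privacy by composition, utility of the peeled coordinate selection, and utility of the univariate estimation --- and then to choose the bucket size $b$ and threshold $T$ so that the resulting accuracy and sample requirements balance; throughout, $\mu$ is $k$-sparse with $\|\mu\|_\infty \leq R$ and $X_i \sim \cN(\mu,\sigma^2 I)$. Changing a single sample alters exactly one bucketed mean $m_j$, which changes each count $z_i = \sum_{j} \1\{(m_j)_i \geq T\}$ by at most $1$; hence every coordinate score has $\ell_1$-sensitivity $1$. The algorithm makes $k$ calls to the exponential mechanism on these scores (over a shrinking coordinate universe) and $k$ calls to \texttt{KV-1D}; assigning each call budget $\epsilon/(2k)$, basic adaptive composition for pure DP gives $\epsilon$-DP, and bucketing is fixed preprocessing that does not enter the accounting.

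\emph{Coordinate selection.}
Take $b = \Theta(\sigma^2 k/\alpha^2)$, so the bucketed means have per-coordinate standard deviation $\sigma/\sqrt b$ a small constant multiple of $\alpha/\sqrt k$, and take $T = \Theta(\alpha/\sqrt k)$; call coordinate $i$ \emph{heavy} if $|\mu_i| \geq \tfrac12 \alpha/\sqrt k$ and otherwise \emph{light} (and, to detect negative coordinates, threshold $|(m_j)_i|$ rather than $(m_j)_i$, which preserves sensitivity $1$). Gaussian tail bounds and a union bound over all $d$ coordinates and all $B := \lceil n/b \rceil$ buckets show that with probability $1-\beta/4$ every heavy coordinate has $z_i \geq 0.95 B$ while every coordinate with $|\mu_i|$ below a slightly smaller constant multiple of $\alpha/\sqrt k$ has $z_i \leq 0.05 B$. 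By the exponential-mechanism utility bound, each of the $k$ rounds selects a coordinate whose true score is within $O(\tfrac{k}{\epsilon}\log(dk/\beta))$ of the current maximum with probability $1-\beta/(4k)$; so once $B \gg \tfrac{k}{\epsilon}(\log d + \log(k/\beta))$ --- equivalently $n \gg \tfrac{\sigma^2 k^2(\log d + \log(k/\beta))}{\alpha^2\epsilon}$, the ``support estimation'' term --- this error is at most $0.3B$. Conditioning on all these events, a contradiction argument shows every heavy coordinate is selected: if some heavy coordinate were never picked, then in each of the $k$ rounds the mechanism selects a \emph{distinct} coordinate with true score at least $0.95B - 0.3B > 0.05B$, hence a coordinate with $\mu_i \neq 0$ other than the omitted one, giving $k+1$ nonzero coordinates and contradicting $k$-sparsity.

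\emph{Univariate estimation and assembly.}
For each selected coordinate $t$, run \texttt{KV-1D}~\cite{karwa2017finite} on $\{(x_i)_t\}_{i \leq n}$ with privacy $\epsilon/(2k)$, target error $\alpha/(2\sqrt k)$, failure probability $\beta/(4k)$, and mean bound $R$; the three Karwa--Vadhan terms, after scaling the target error by $1/\sqrt k$ and the privacy by $1/(2k)$, are exactly $\tfrac{\sigma^2 k\log(k/\beta)}{\alpha^2} + \tfrac{\sigma k^{1.5}\log(k/\beta)}{\alpha\epsilon} + \tfrac{k\log R}{\epsilon}$, the ``dense estimation in $k$ dimensions'' terms. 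A union bound over the $k$ calls keeps the total failure probability at $\beta$. On the good event, each selected coordinate has $|\hat\mu_t - \mu_t| \leq \alpha/(2\sqrt k)$, so the selected coordinates contribute at most $\alpha^2/4$ to $\|\hat\mu - \mu\|^2$; every unselected coordinate is light, so (using that $\mu$ has at most $k$ nonzeros) the unselected coordinates contribute at most $k\cdot(\tfrac12\alpha/\sqrt k)^2 = \alpha^2/4$ more. Summing and taking square roots gives $\|\hat\mu - \mu\| \leq \alpha$.

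\emph{Main obstacle.}
The crux is the coordinate-selection analysis: pinning down $b$ and $T$ so that, uniformly over all $dB$ coordinate/bucket pairs, heavy coordinates are voted above threshold by essentially every bucket while light ones almost never are, and simultaneously ensuring this $\Theta(B)$ separation survives the $\Theta(k\log(d/\beta)/\epsilon)$ exponential-mechanism noise through all $k$ peeling rounds (which is where both factors of $k$ in the support term enter --- one from $b$, one from pure-DP composition). The privacy accounting and the black-box \texttt{KV-1D} guarantee are routine bookkeeping by comparison.
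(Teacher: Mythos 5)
Your proposal is correct and follows essentially the same route as the paper: bucket to shrink the per-coordinate variance to $\Theta(\alpha^2/k)$ (i.e.\ $b=\Theta(\sigma^2 k/\alpha^2)$, $T=\Theta(\alpha/\sqrt k)$), use the sensitivity-$1$ threshold counts $z_i$ as exponential-mechanism scores with per-round budget $\Theta(\epsilon/k)$ to separate heavy from (near-)zero coordinates, and finish with \texttt{KV-1D} at accuracy $\alpha/\sqrt k$ and privacy $\Theta(\epsilon/k)$ per selected coordinate, with union bounds over the $k$ rounds --- your sparsity-based contradiction argument is just a slightly more explicit version of the paper's claim that all above-threshold coordinates get picked. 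The one wrinkle is that the score separation should come from Chernoff/Binomial concentration of each count $z_i$ with a union bound over the $d$ coordinates (as the paper does), not a literal union bound over all coordinate--bucket pairs, since a near-zero coordinate will typically have a few buckets crossing $T$ (and forcing every bucket of every heavy coordinate above $T$ would also cost an extra logarithmic factor in $b$); with that substitution the rest of your argument goes through unchanged.
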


The proof is given in~\cref{sec:proof_peeling}. While the statement and proof
are given for Gaussians, we only use  the concentration properties of the
Gaussian distribution; hence the proof automatically applies for sub-Gaussian
distributions as well. 

Instead of the exponential mechanism, we could also use the
peeling~\cite{dwork2018differentially} algorithm, as is done for the
linear-in-$R$ algorithm in~\cite{cai2021cost}.

\subsection{Experimental details}

Now we turn to empirically validating the performance
of~\cref{alg:k2_logR_algo_peeling}. To isolate the effects of each subroutine
(support estimation and dense mean estimation), we plot (1) the performance of
the corresponding support estimation steps alone (2) $\ell_2$ error for both. 

\begin{figure}[h!]
    \centering
    \includegraphics[width=0.6\textwidth]{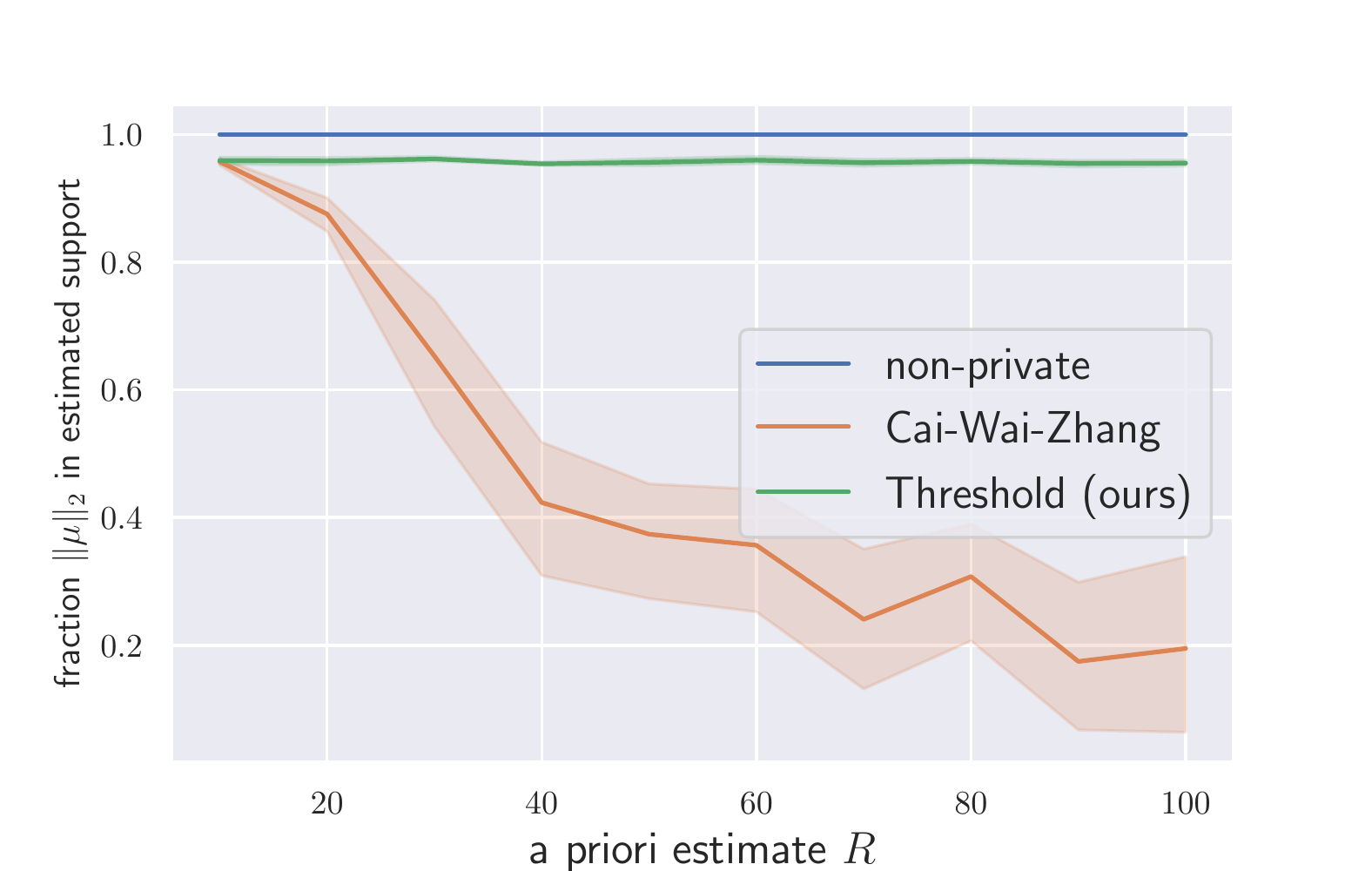}
    \caption{Empirical evaluation of $0.5$-DP algorithms (and a
    non-private baseline) for
    support estimation for $1500$ samples from $\cN(\mu, I)$ in ambient
    dimension $d=1000$ with $\|\mu\|_0=20$ (non-zero coordinates sampled
    uniformly from $[-10, 10]$) as a function of $R$, the a priori
    estimate of $\|\mu\|_2$.
    }\label{fig:experimental_results} 
\end{figure}

\begin{figure}[h!]
    \centering
    \includegraphics[width=0.6\textwidth]{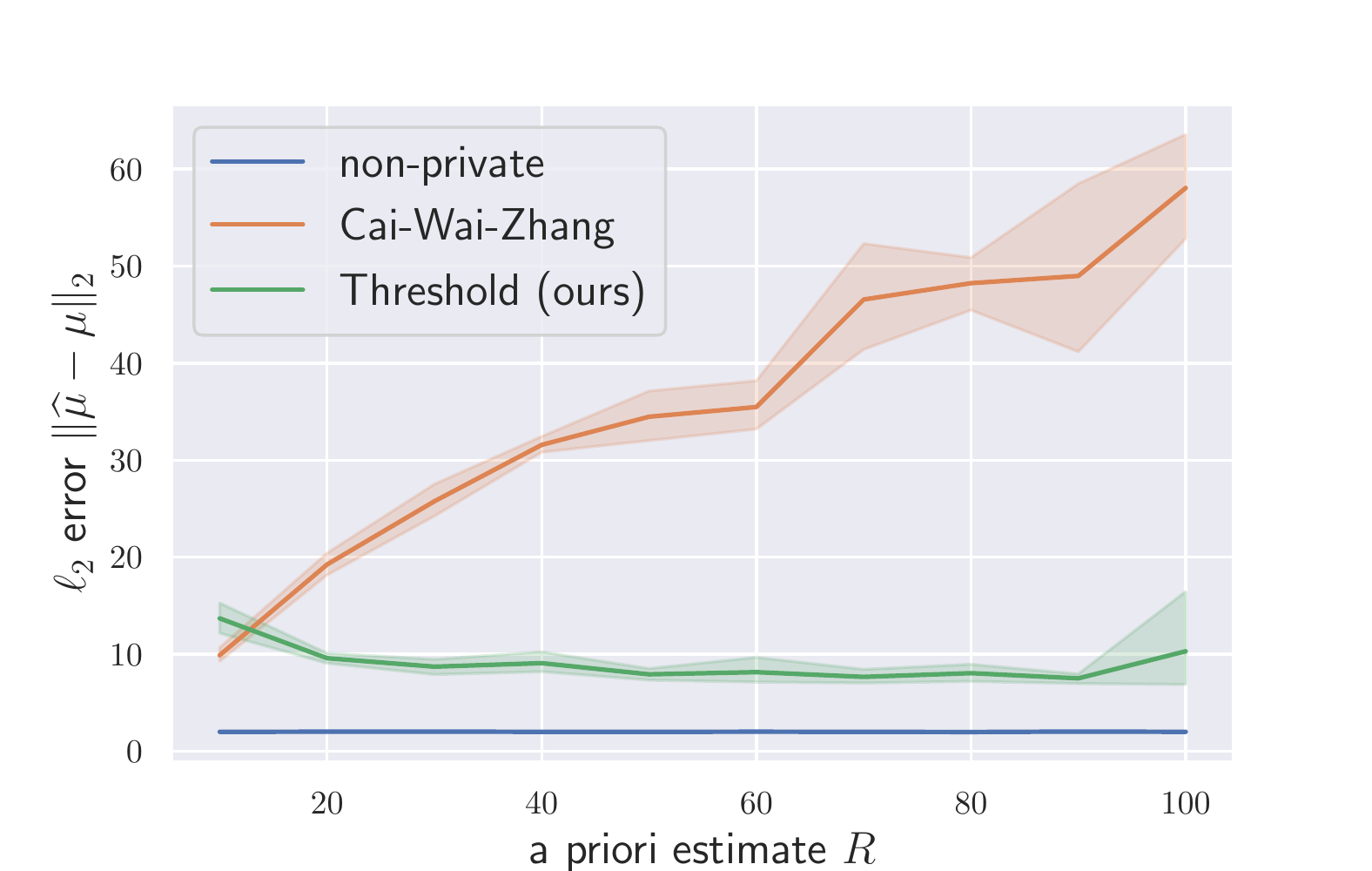}
    \caption{Empirical evaluation of \texttt{Threshold} and the sparse mean
    estimation algorithm of~\cite{cai2021cost} under $\epsilon$-DP with
    $\epsilon=0.5$; results are shown for Gaussian data $X_1,\ldots,
    X_n\sim\cN(\mu, 4\cdot I)$ for a $k$-sparse $\mu$ in $\R^d$ for $k=20,
    d=1000, n=1000$; non-sparse coordinates of $\mu$ are sampled uniformly from
    the interval $[-10, 10]$. The $\ell_2$ error of the estimates is plotted
    against the a priori mean estimate $R$. A folklore non-private baseline is
    also presented to highlight the cost of privacy.} \label{fig:l2_estimate}  
\end{figure}

We address (1) in~\cref{fig:experimental_results} ---  we use the
fraction ($\ell_2$) mass of $\mu$ on the $k$ coordinates the algorithms select
as a metric of success, since it is well-suited for mean estimation.
\cref{fig:experimental_results} shows that (a) we significantly outperform the
previous state of the art as soon as we introduce very mild uncertainty in the a  
priori estimate of $\|\mu\|$; (b) our method does not introduce additional
constant factors ``hiding'' in the asymptotics.

In~\cref{fig:experimental_results} we presented results for support estimation,
in order to highlight the improvement coming from coordinate selection alone.
In~\cref{fig:l2_estimate} we evaluate~\cref{alg:k2_logR_algo_peeling} directly
in terms of our metric of interest -- $\ell_2$ error.

As can be seen from~\cref{fig:l2_estimate}, the performance of the sparse mean
estimation algorithm of~\cite{cai2021cost} degrades rapidly even for very mild
levels of uncertainty in the range of the mean - e.g. if we only know beforehand
that the mean lies in the range $[-20, 20]$, instead of the tight range $[-10,
10]$, the $\ell_2$ error (expectedly) doubles when we use the~\cite{cai2021cost}
algorithm; in contrast, the performance of~\cref{alg:k2_logR_algo_peeling} is
effectively unchanged.

Finally, we highlight a weakness of~\cref{alg:k2_logR_algo_peeling} -- we gain the 
mild dependence on $R$ at the cost of losing the $\left(\frac{1}{\alpha\epsilon} +
\frac{1}{\alpha^2}\right)$-like sub-Gaussian rate; instead we have
heavy-tailed-style $\left(\frac{1}{\alpha^2\epsilon}\right)$-like rate. This
comes with a practical cost in scenarios where the standard deviation $\sigma$
is much larger than the a priori mean estimate $R$.

For all figures, we average results over $10$ random seeds and report average
results, together with $95\%$ bootstrap confidence intervals. Code necessary
to reproduce all plots is available at
\url{https://anonymous.4open.science/r/sparse_private_mean_estimation-ED75}.
For all experiments we use commodity hardware (CPU: Intel® Core™ i7-9750H  
CPU @ 2.60GHz).

\subsection{Proof of~{{\cref{thm:sparse-alg-peeling}}}}
\label{sec:proof_peeling}
\begin{proof}
    The key idea in our analysis is to show that there exists a parameter regime
    (for the threshold $T$ and bucket size $b$) for which there is a separation
    between the minimum score $z_i$ of coordinates for which $|\mu_i|>T$ and the
    maximum expected score of coordinates for which $\mu_i=0$.

    Assume for simplicity that $n/b=\lceil n/b\rceil$.  Note that
    $m_i\sim\cN(\mu, \sigma^2/bI)$. Define $p_i := \P(|(m_1)_i| > T)$ as the
    probability of a (bucketed mean of a) sample exceeding the given threshold
    $T$.  Then $z_i\sim \text{Bin}(n/b, p_i)$. Note that from Chernoff we have
    for all coordinates $i$ outside the support of $\mu$ that $p_i\leq
    \exp\left(-bT^2/(2\sigma^2)\right)=:p_0$. Let $z_0\sim \text{Bin}(n/b, p_0)$.
    Since all $p_i$s are bounded below by a constant, we can use the normal
    approximation to $z_i$ and sub-Gaussian concentration.  
    On one hand,
    \begin{equation} 
        \begin{split}
        \E \max_{i:\mu_i=0} z_i &\leq \frac{n}{b} p_0 +
                \sqrt{2(\log d + \log(4k/\beta)) \cdot \var(z_0)} \\
        &\leq \frac{n}{b}\exp\left(\frac{-bT^2}{2\sigma^2}\right)
        \left(1 + \sqrt{\frac{2b(\log d + \log(4k/\beta))}{np_0}}\right) 
        \end{split}
    \end{equation}
    with probability at least $1-\beta/(4k)$.

    On the other, for large coordinates $i$ where $\mu_i\geq T$, we have that
    $p_i \geq \frac{1}{2}$ and thus
    \begin{equation}
        \begin{split}
            \E \min_{i:|\mu_i|\geq T} z_i &\geq \frac{n}{b}\min_{i: |\mu_i|\geq T} p_i -
                \sqrt{2(\log d + \log(4k/\beta)) \cdot (n/b)\cdot \min_{i: |\mu_i|\geq T} p_i(1-p_i)} \\
                &\geq \frac{n}{b}\left( \frac{1}{2} - 
                \sqrt{\frac{2b(\log d + \log(4k/\beta))}{n}} \right),
        \end{split}
    \end{equation}
    again with probability at least $1-\beta/(4k)$.

    We can ensure that $\E \min_{i:|\mu_i|\geq T} z_i -
    \frac{2k(\log(d)+\log(4k/\beta))}{\epsilon} > \E \max_{i: \mu_i=0} z_i$, 
    by setting $T = 3.5 {\sigma}/{\sqrt{b}}$ and 
    $n\geq \max\{1, k/\epsilon\} \cdot 20b(\log (d)+\log(4k/\beta))$. For all
    practical purposes, we can safely assume $k/\epsilon \geq 1$ and thus get
    $n\geq 20kb(\log (d)+\log(4k/\beta))/\epsilon$. For all

    In $k$ rounds of the exponential mechanism, each with privacy budget
    $\epsilon/k$, we will pick each coordinate $i$ such that $|\mu_i|\geq T$
    with probability at least $1-\beta/(4k)$. Taking a union bound over the
    success probabilities of the bounds, and the exponential mechanism rounds,
    we get that we will select $k$ coordinates above the threshold $T$ with
    probability $1-\beta/2$ (if there are at least that many). Assuming we
    ``give up'' on potentially non-zero coordinates with mean magnitude lower
    than $T$, we get the following bound on the estimation error 
    $\alpha$.   
    \begin{equation}
        \sum_{i:|\mu_i|\leq T} \mu_i^2 \leq kT^2.
    \end{equation}
    To get to an estimation error of at most $\alpha$, we need
    \begin{equation}
        \begin{split}
        \alpha^2 &\geq kT^2 
        \geq 15 k\frac{\sigma^2}{b} 
        \geq 300k \sigma^2 \cdot \frac{k(\log d+\log(4k/\beta))}{n\epsilon}.
        \end{split}
    \end{equation}
    This implies that for the support estimation part we will need
    \begin{equation}
        n = \Omega\left(\sigma^2\frac{k^2(\log d+\log(k/\beta))}{\alpha^2\epsilon}\right) 
    \end{equation}
    samples. It remains to privately estimate the mean on the selected
    coordinates. Since we have already used up a super-linear (in $k$) number of
    samples, we can afford to use a naive estimator for the mean estimation
    part: we can invoke the histogram-based univariate estimator of Karwa and
    Vadhan~\cite{karwa2017finite} for each coordinate, and require accuracy of
    $\alpha/\sqrt{k}$ on each coordinate. For non-selected coordinates, we
    return $\hat\mu_i=0$. In total, that would give us an $\ell_2$ guarantee of
    $\alpha$, as desired.

    From~\cite[Theorem 1.1]{karwa2017finite} we know that we need 
    \begin{equation}
        n = \Omega\left( \frac{\sigma^2k\log(2k/\beta)}{\alpha^2} +
         \frac{\sigma k^{1.5}\log(2k/\beta)}{\alpha\epsilon} + \frac{k\log(R)}{\epsilon} \right)
    \end{equation}
    samples to estimate each coordinate up to under $\epsilon/k$ DP with
    probability at least $1-\beta/(2k)$. Using 
    coordinate, we will need $\tilde \Omega(k^{1.5}/(\alpha^2\epsilon))$ samples
    in total for the dense mean estimation part.

    Combining the support estimation and the dense mean estimation components,
    we get the desired result.

\end{proof}
\section{Re-statement of sparse mean algorithm of~\cite{cai2021cost}}
\label{sec:restate-cost-of-privacy} 
 
In~\cref{sec:sparse-mean} and~\cref{sec:experimental-details} we
compare~\cref{alg:k2_logR_algo_peeling} with~\cite[Algorithm 3.3]{cai2021cost}
(referred to as CWZ algorithm hereafter).
We focus solely on pure DP, and~\cite{cai2021cost} state only an approximate DP
version of the CWZ algorithm.
To have a fair comparison, we make a (minor) modification of the CWZ algorithm
to handle the $\epsilon$-DP case. In short, we replace the advanced composition
step in their analysis with a basic composition needed for the more stringent
pure DP requirements.

More concretely, we keep~\cite[Algorithm 3.3]{cai2021cost} intact and only
modify the peeling subroutine~\cite[Algorithm 3.2]{cai2021cost}. Given an a
priori boun $R_\infty$ satisfying $\|\mu\|_\infty\leq R_\infty$, we replace the
scale $R_\infty\cdot \frac{\sqrt{k\log(1/\delta)}}{\epsilon}$ of the Laplace
noise added on lines $3$ and $7$ of Algorithm 3.2 with $R_\infty\cdot
\frac{k}{\epsilon}$. The new privacy analysis differs from~\cite[Lemma
3.3]{cai2021cost} only in the composition steps.
\section{Meta-theorem for concentrated DP}\label{sec:zCDP}
An alternative formulation of differential privacy if that of zero-concentrated
DP (zCDP) based on R\'enyi divergence. More formally, we have the following
definitions.
\begin{definition}[R\'enyi divergence]
    Given distributions $P$ and $Q$ on a common sample space $\Omega$, the
    $\alpha$-R\'enyi divergence between $P$ and $Q$ is given by
    \begin{equation}
        D_\alpha(P\| Q) = \frac{1}{\alpha - 1}\log\left(\mathop{\E}_{x\sim P}
        \left[\left(\frac{P(x)}{Q(x)}\right)^{\alpha - 1}\right]\right). 
    \end{equation}
\end{definition}

\begin{definition}[zCDP~\cite{bun2016concentrated}]
Let $\cX$ be a set and $\cX^* = \{(X_1,\ldots,X_n) \, : \, n \in \N, X_i \in
\cX\}$ be all possible datasets over $\cX$. For $\rho > 0$, a (randomized)
map $M \, : \, \cX^* \rightarrow \cO$ (where $\cO$ is an \emph{output} set) is
$\rho$-zCDP if for every $(X_1,\ldots,X_n), (X_1',\ldots,X_n') \in \cX^*$
such that $X_i = X_i'$ except for a single index $i$ and for every $\alpha
\in (1, \infty)$ it holds that $D_\alpha(M(X_1,\ldots,
X_n)\|M(X_1',\ldots,X_n')) \leq \rho\alpha$.
\end{definition}

With this, we are ready to state the zCDP version of~\cref{thm:meta}.

\begin{corollary}[zCDP version of~\cref{thm:meta}]
    Let $M \, : \, \cX^* \rightarrow \cO$ be an $\rho$-zCDP map
    from datasets $\cX^*$ to outputs $\cO$.  For every dataset $X_1,\ldots,X_n$,
    let $G_{X_1,\ldots,X_n} \subseteq \cO$ be a set of \emph{good} outputs.
    Suppose that $M(X_1,\ldots,X_n) \in G_{X_1,\ldots,X_n}$ with probability at
    least $1-\beta$ for some $\beta = \beta(n)$.  Then, for every $n \in \N$ and
    every $\delta >0$, on $n$-element datasets $M$ is \emph{robust} to
    adversarial corruption of any $\eta(n)$-fraction of inputs, where
    \[
        \eta(n) = O \Paren{ \min \Paren{ \frac{\log 1/\beta}{\e(\rho, \delta) \cdot n}, \frac{\log 1/\delta}{\e(\rho, \delta) \cdot n + \log n} } }\mcom 
    \]
    and $\e(\rho, \delta) = \rho + 2\sqrt{\rho\log(1/\delta)}$, meaning that for every $X_1,\ldots,X_n$ and $X_1',\ldots,X_n'$ differing on
    only $\eta n$ elements, $M(X_1',\ldots,X_n') \in G_{X_1,\ldots,X_n}$ with
    probability at least $1- \beta^{\Omega(1)}$.
\end{corollary}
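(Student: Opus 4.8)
The plan is to reduce the statement to the already-proven meta-theorem \cref{thm:meta} by converting the $\rho$-zCDP guarantee into an approximate-DP guarantee with exactly the privacy parameter $\e(\rho,\delta)$ appearing in the claim. The relevant fact is the standard zCDP-to-$(\e,\delta)$-DP conversion of Bun and Steinke~\cite{bun2016concentrated}: if $M$ is $\rho$-zCDP, then for every $\delta > 0$ the mechanism $M$ is $(\e(\rho,\delta),\delta)$-DP with $\e(\rho,\delta) = \rho + 2\sqrt{\rho\log(1/\delta)}$. Concretely, this follows by fixing a pair of neighboring datasets, choosing the Rényi order $\alpha^* = 1 + \sqrt{\log(1/\delta)/\rho}$, and combining the hypothesis $D_{\alpha^*}(M(X)\,\|\,M(X')) \leq \rho\alpha^*$ with the elementary inequality relating Rényi divergence to the approximate-DP tail bound; note this step is purely a statement about the two output distributions on a single neighboring pair and uses no further structure of $M$.

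First I would fix an arbitrary $\delta > 0$ and apply the conversion to conclude that $M$ is $(\e(\rho,\delta),\delta)$-DP. Then I would invoke \cref{thm:meta} verbatim, with privacy parameters $\e := \e(\rho,\delta)$ and this same $\delta$, with the same collection of good sets $G_{X_1,\ldots,X_n}$ and the same failure probability $\beta = \beta(n)$. Its conclusion is precisely that on $n$-element datasets $M$ is robust to corruption of an $\eta(n)$-fraction of inputs, with
\[
  \eta(n) = O\Paren{\min\Paren{\frac{\log 1/\beta}{\e(\rho,\delta)\cdot n},\ \frac{\log 1/\delta}{\e(\rho,\delta)\cdot n + \log n}}}\mcom
\]
and that the good event continues to occur with probability at least $1 - \beta^{\Omega(1)}$; the telescoping recurrence $(1-p_j) \leq e^{\e(\rho,\delta)}(1-p_{j-1}) + \delta$ over the $\eta n$ hybrid datasets is exactly the one in the proof of \cref{thm:meta}, now driven by the converted parameters. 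Since $\delta > 0$ was arbitrary, this is the claimed statement.

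I do not expect any genuine obstacle here: all of the work is in \cref{thm:meta}, and the only thing to verify is that the zCDP-to-approximate-DP conversion is tight enough to recover the stated $\eta(n)$, which it is, essentially by the definition of $\e(\rho,\delta)$. One could instead attempt to rerun the hybrid argument of \cref{thm:meta} directly in terms of Rényi divergence, using that $\rho$-zCDP upgrades to $k^2\rho$-zCDP for datasets at Hamming distance $k$; but this quadratic group-privacy blow-up is strictly weaker than first converting to approximate DP and telescoping linearly over the $\eta n$ single-coordinate steps, so the black-box route is both cleaner and yields the better bound.
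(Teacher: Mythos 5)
Your proposal matches the paper's proof: the paper also establishes this corollary by black-box appeal to the zCDP-to-$(\e,\delta)$-DP conversion of \cite{bun2016concentrated} (their Proposition 1.3) followed by an application of \cref{thm:meta} with $\e := \e(\rho,\delta)$. Your additional remarks on the choice of R\'enyi order and on why the direct quadratic group-privacy route is weaker are correct but not needed for the argument.
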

\begin{proof}
    The result follows directly from the connection between zCDP and approximate
    differential privacy~\cite[Proposition 1.3]{bun2016concentrated}.
\end{proof}

\section{Acknowledgements}

\par Work supported in part by the NSF grants CCF-1553428 and CNS-1815221. This
material is based upon work supported by the Defense Advanced Research Projects
Agency (DARPA) under Contract No. HR001120C0015.

\noindent The authors would like to thank Shyam Narayanan for helpful discussions.

\bibliography{biblio}
\appendix
\end{document}